\def\1{\bm{1}}
\DeclareMathAlphabet{\mathsfit}{\encodingdefault}{\sfdefault}{m}{sl}
\SetMathAlphabet{\mathsfit}{bold}{\encodingdefault}{\sfdefault}{bx}{n}
\DeclareMathOperator*{\argmax}{arg\,max}
\DeclareMathOperator*{\argmin}{arg\,min}
\newcommand{\utilde}[1]{%
\smash{\ensurestackMath{\stackengine{1pt}{#1}{\scriptscriptstyle\sim}{U}{c}{F}{F}{S}}}
  \vphantom{#1}
}
\title{Dynamic Assortment Selection and Pricing\\ with Censored Preference Feedback}
\author{Jung-hun Kim \\
Seoul National University \\
\texttt{junghunkim@snu.ac.kr} \\
\And
Min-hwan Oh \\
Seoul National University \\
\texttt{minoh@snu.ac.kr} \\
}
\begin{document}

\maketitle

\begin{abstract}
In this study, we investigate the problem of dynamic multi-product selection and pricing by introducing a novel framework based on a \textit{censored multinomial logit} (C-MNL) choice model. In this model, sellers present a set of products with prices, and buyers filter out products priced above their valuation, purchasing at most one product from the remaining options based on their preferences. The goal is to maximize seller revenue by dynamically adjusting product offerings and prices, while learning both product valuations and buyer preferences through purchase feedback. To achieve this, we propose a Lower Confidence Bound (LCB) pricing strategy. By combining this pricing strategy with either an Upper Confidence Bound (UCB) or Thompson Sampling (TS) product selection approach, our algorithms achieve regret bounds of $\tilde{O}(d^{\frac{3}{2}}\sqrt{T/\kappa})$ and $\tilde{O}(d^{2}\sqrt{T/\kappa})$, respectively. Finally, we validate the performance of our methods through simulations, demonstrating their effectiveness.
\end{abstract}

\section{Introduction}

The rapid growth of online markets has underscored the critical importance of developing strategies for dynamic pricing to maximize revenue. In these markets, sellers have the flexibility to adjust the prices of products sequentially in response to buyer behavior. However, optimizing prices is not a trivial task. To effectively set prices, sellers must learn the underlying demand parameters, as buyers make purchasing decisions based on their preferences and willingness to pay, as modeled by demand functions \citep{bertsimas2006dynamic,cheung2017dynamic,den2015dynamic,javanmard2019dynamic,cohen2020feature,javanmard2019dynamic,luo2022contextual,fan2024policy,shah2019semi,xu2021logarithmic,choi2023semi}. While the prior work has focused on dynamically adjusting prices for single products, real-world applications such as e-commerce, hotel reservations, and air travel often involve multiple products, further complicating the pricing strategy \citep{den2014dynamic,ferreira2018online,javanmard2020multi,goyal2021dynamic}.

In practice, sellers must do more than just set prices—they also need to determine which products to offer. Buyers purcahse a product based on their preferences for available items, and this purchasing process is influenced by the price. Higher prices reduce the likelihood of a purchase, as buyers filter out products priced above their perceived value. This dynamic interplay between pricing and buyer preferences is a fundamental aspect of real-world online markets, making it essential to model both product selection and pricing together.

In this work, we tackle the problem of dynamic multi-product pricing and selection by developing a novel framework that captures the censored behavior of buyers—where buyers consider only those products priced below their valuation and purchase one product from the remaining options. To model this behavior, we extend the widely used multinomial logit (MNL) choice model \citep{agrawal2017mnl,agrawal2017thompson,oh2021multinomial,oh2019thompson} to a censored MNL (C-MNL) model. This model allows us to capture buyer behavior more accurately in scenarios where product prices impact buyer choices. In our framework, sellers dynamically learn both the product valuations and buyer preferences, all while facing the challenge of not receiving feedback on which products buyers filtered out due to high prices, reflecting real-world conditions.

To address the inherent uncertainty in buyer behavior, we propose a novel Lower Confidence Bound (LCB) pricing strategy, which sets lower initial prices to encourage exploration and avoid price censorship. In combination with Upper Confidence Bound (UCB) or Thompson Sampling (TS) strategies for product assortment selection, we provide algorithms that not only maximize revenue but also efficiently balance exploration and exploitation in the face of censored feedback. Through theoretical analysis, we derive regret bounds for our algorithms, and we validate their performance using synthetic datasets.

\paragraph{Summary of Our Contributions.}
\begin{itemize}
    \item We propose a novel framework for dynamic multi-product selection and pricing that incorporates a censored version of the multinomial logit (C-MNL) model. In this model, buyers filter out overpriced products and choose from the remaining options based on their preferences.
    \item We introduce a Lower Confidence Bound (LCB)-based pricing strategy to promote exploration by setting lower prices, avoiding buyer censorship, and facilitating the learning of buyer preferences and product valuations.
    \item We develop two algorithms that combine LCB pricing with Upper Confidence Bound (UCB) and Thompson Sampling (TS) for assortment selection, achieving regret bounds of $\tilde{O}(d^{\frac{3}{2}}\sqrt{T/\kappa})$ and $\tilde{O}(d^{2}\sqrt{T/\kappa})$, respectively.
    \item We provide extensive theoretical analysis, including regret bounds, and validate the effectiveness of our algorithms using synthetic datasets, demonstrating their superiority over existing approaches.

\end{itemize}



\section{Related Work}

\paragraph{Dynamic Pricing and Learning}

Dynamic pricing with learning demand functions or market values has been widely studied \citep{bertsimas2006dynamic, cheung2017dynamic, den2015dynamic, javanmard2019dynamic, cohen2020feature, luo2022contextual, xu2021logarithmic, fan2024policy, shah2019semi, choi2023semi,den2014dynamic,ferreira2018online,javanmard2020multi,goyal2021dynamic}. However, previous work typically assumes that products are introduced arbitrarily or stochastically, meaning the products themselves are given rather than being part of the decision-making process. In contrast, our study incorporates a preference model for dynamic selection and pricing, where the agent must determine the assortment of products to offer with prices.

We note that \cite{javanmard2020multi,goyal2021dynamic,erginbasdynamic} considered MNL structure for dynamic pricing, which was widely considered in the assortment bandits literature~\citep{agrawal2017mnl,agrawal2017thompson,oh2021multinomial,oh2019thompson}. 
Based on the MNL structure, the previous pricing strategies have focused solely on optimizing revenue function. Notably, \cite{javanmard2020multi, perivier2022dynamic} examined scenarios where the assortment is predetermined rather than chosen by the agent under the dynamic pricing problems, and \cite{erginbasdynamic} directly extended \cite{goyal2021dynamic} by considering assortment selection under the same MNL structure. Moreover, \cite{javanmard2020multi} consider i.i.d feature vectors fixed over time.

In our study, we utilize the MNL model with arbitrary features at each time to capture buyer preferences. Inspired by real-world scenarios, we further incorporate activation functions to address the non-continuous nature of buyer behavior, specifically their acceptable price thresholds. The presence of activation functions in our MNL model prevents a direct conversion to the standard MNL structure, distinguishing our work from that of \cite{javanmard2020multi,goyal2021dynamic,erginbasdynamic}.  Furthermore, we address a multi-product setting where the agent not only prices but also selects products at each time. As a result, we must develop a novel strategy for both pricing and assortment selection to address this challenge.


Notably, while activation functions for buyer demand have been considered in \cite{javanmard2019dynamic, cohen2020feature, luo2022contextual, xu2021logarithmic, fan2024policy, shah2019semi, choi2023semi}, these studies focused on single-product offered by the environment with single binary feedback at each time indicating whether the product was purchased or not.  In contrast, we examine a multi-product setting where the agent must both select and price multiple products while receiving preference feedback, a scenario commonly observed in real-world online markets.

\section{Problem Statement }

\begin{figure}
\centering     
\subfigure[The agent offer an assortment of arms $S_t$ with price $p_{i,t}$ for $i\in S_t$.]{\label{fig:a}\includegraphics[width=40mm]{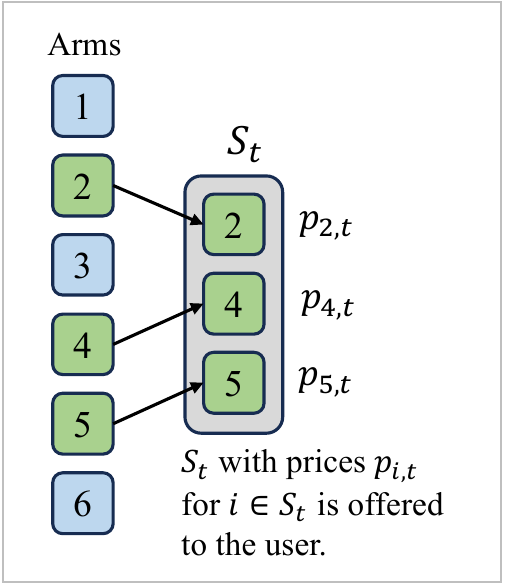}}
\hspace{0.1cm}
\subfigure[Arms $i\in S_t$ satisfying $p_{i,t}>v_{i,t}$ are censored by the user.] {\label{fig:b}\includegraphics[width=40mm]{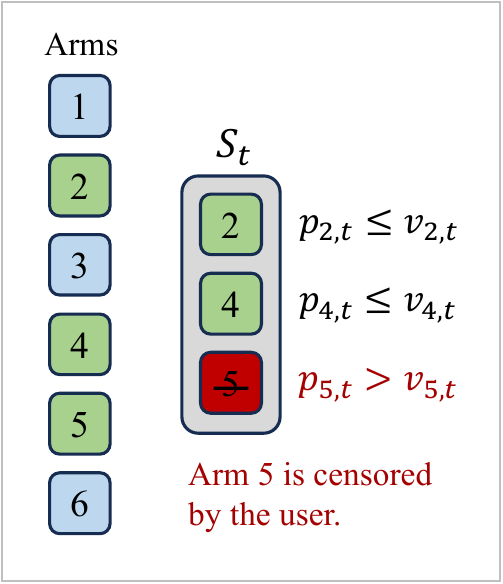}}
\hspace{0.1cm}
\subfigure[The user purchases an arm from the remaining arms in $S_t$ based on preference.]{\label{fig:c}\includegraphics[width=40mm]{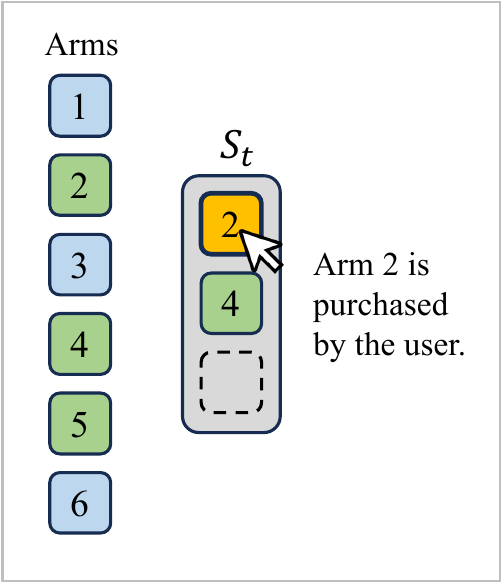}}
\caption{The illustration describes the process involved in making a purchase.}\label{fig:model}
\end{figure}
There are $N$ arms (products) in the market. As illustrated in Figure~\ref{fig:model}, at each time $t\in[T]$, (\textbf{a}) an agent (seller) selects a set of arms $S_t \subseteq[N]$, referred to as `assortment,' to a user (buyer) with a size constraint $|S_t|\le K(\le N)$. At the same time, the agent prices each arm $i\in S_t$ as $p_{i,t}\in\mathbb{R}_{\ge 0}$ and suggests the assortment with the corresponding prices to the user. (\textbf{b}) Then, based on the valuation $v_{i,t}$ and price $p_{i,t}$ for each arm $i\in S_t$, the user filters out any arms $i \in S_t$ where the price exceeds their valuation, i.e., $v_{i,t} < p_{i,t}$. (\textbf{c}) Finally, the user purchases at most one arm from the remaining options based on preference.
In what follows, we describe our models for the user behavior and the revenue of the agent in more detail.


 There are latent parameters $\theta_v$ and $\theta_\alpha\in\mathbb{R}^d$ (unknown to the agent) for valuation and price sensitivity, respectively. At each time $t$, 
each arm $i\in[N]$ has known feature information $x_{i,t}$ and $w_{i,t}\in \mathbb{R}^d$ for its valuation and price sensitivity, respectively. Then the (latent) valuation of each arm $i$ for the user is defined as $v_{i,t}:=x_{i,t}^\top \theta_v\ge 0$. We also consider that there are (latent) price sensitivity parameters as $\alpha_{i,t}:=w_{i,t}^\top\theta_\alpha\ge 0$.
In this work, 
we propose a modification of the conventional MNL choice model with 
threshold-based activation functions, which we name as the \emph{censored multinomial logit} (C-MNL) choice model.  
\begin{definition}[Censored multinomial logit choice model] Let set of prices $p_t:=\{p_{i,t}\}_{i\in S_t}$. 
Then, given $S_t$ and $p_t$, the user purchases an arm $i\in S_t$ by paying $p_{i,t}$ according to the probability 
defined as follows:
\begin{align}
\PP_{t}(i|S_t,p_t)&:=\frac{\exp(v_{i,t}-\alpha_{i,t}p_{i,t})\mathbbm{1}(p_{i,t}\le v_{i,t})}{1+\sum_{j\in S_t}\exp(v_{j,t}-\alpha_{j,t}p_{j,t})\mathbbm{1}(p_{j,t}\le v_{j,t})}.\label{eq:np_mnl}    
\end{align}
\end{definition}

From the activation function in the above definition, the user considers purchasing only the arms $i\in S_t$ satisfying that its price is lower than the user’s valuation (or willingness to pay) as $p_{i,t}\le v_{i,t}$. We also note that a higher price for an arm decreases the user's preference for it, while a higher valuation indicates a stronger preference. For notation simplicity, we use $\theta^*:=[\theta_v;\theta_\alpha]\in \RR^{2d}$ and $z_{i,t}(p):=[x_{i,t};-pw_{i,t}]\in \RR^{2d}$. 
Then the C-MNL of \eqref{eq:np_mnl} can be represented as 
\begin{align*}
    \PP_{t}(i|S_t,p_t) &=\frac{\exp(x_{i,t}^\top \theta_v-w_{i,t}^\top \theta_\alpha p_{i,t})\mathbbm{1}(p_{i,t}\le x_{i,t}^\top \theta_v)}{1+\sum_{j\in S_t}\exp(x_{j,t}^\top \theta_v-w_{j,t}^\top\theta_\alpha p_{j,t})\mathbbm{1}(p_{j,t}\le x_{i,t}^\top \theta_v)}\cr&=\frac{\exp( z_{i,t}(p_{i,t})^\top \theta^* )\mathbbm{1}(p_{i,t}\le x_{i,t}^\top \theta_v)}{1+\sum_{j\in S_t}\exp(z_{j,t}(p_{j,t})^\top \theta^* )\mathbbm{1}(p_{j,t}\le x_{j,t}^\top \theta_v)}.
\end{align*}


As in the previous literature for MNL, it is allowed for each user to choose an outside option ($i_0$), or not to choose any, as $\mathbb{P}_{t}(i_0|S_t,p_t)=\frac{1}{1+\sum_{j\in S_t}\exp(z_{j,t}(p_{j,t})^\top \theta^*)1(p_{j,t}\le x_{j,t}^\top \theta_v)}$. Importantly, at each time $t$, the agent only observes feedback of chosen arm $i_t$ (at most one) but does \textit{not} observe feedback on which arms are censored from the activation function based on the latent user's valuation. This makes it challenging to learn the valuation from the preference feedback, and the naive pricing strategies for maximizing revenue \citep{javanmard2020multi,goyal2021dynamic,erginbasdynamic} do not work properly for our model.

The expected revenue from  chosen arm $i\in S_t$ is represented as $R_{i,t}(S_t)=p_{i,t}\PP_{t}(i|S_t,p_t)$. Then the overall expected revenue for the agent is formulated as 
\[R_t(S_t,p_{t})=\sum_{i\in S_t}R_{i,t}(S_t)=\sum_{i\in S_t}\frac{p_{i,t}\exp(z_{i,t}(p_{i,t})^\top \theta^* )\mathbbm{1}(p_{i,t}\le x_{i,t}^\top \theta_v)}{1+\sum_{j\in S_t}\exp(z_{j,t}(p_{j,t})^\top \theta^* )\mathbbm{1}(p_{j,t}\le x_{j,t}^\top \theta_v)}.\]


For notation simplicity, we use $p=\{p_i\}_{i\in[N]}$. Then we define an oracle policy (with prior knowledge of $\theta^*$) regarding assortment and prices such that
\[(S_t^*,p_t^*)\in\argmax_{S\subseteq[N], p\in \RR_{\ge0}^N: |S|\le K,}R_t(S,p) .\]
Then given $S_t$ and $p_t$ for all $t$ from a policy $\pi$, regret is defined as
\[R^\pi(T)=\sum_{t\in[T]}\mathbb{E}\left[R_t(S_t^*,p_t^*)-R_t(S_t,p_t)\right].\]
The goal of this problem is to find a policy $\pi$ that minimizes regret.  






\section{Algorithms and Regret Analyses}
\subsection{ UCB-based Assortment-selection with LCB Pricing: \texttt{UCBA-LCBP}}

 Here we propose a UCB-based assortment-selection with LCB pricing algorithm (Algorithm~\ref{alg:ucb}) as follows. We denote by $P_{t,\theta}(i|S,p):=\frac{\exp(z_{i,t}(p_i)^\top \theta)}{1+\sum_{j\in S}\exp(z_{j,t}(p_j)^\top \theta)}$ the choice probability without the activation functions. Let the negative log-likelihood $ f_t(\theta):=-\sum_{i\in S_t\cup \{i_0\}}y_{i,t}\log P_{t,\theta}(i|S_t,p_t)$ where $y_{i,t}\in\{0,1\}$ is observed preference feedback  ($1$ denotes a choice, and $0$ otherwise) and define the gradient of the likelihood as 
\begin{align}
g_t(\theta):=\nabla_\theta f_t(\theta)=\sum_{i\in S_t}(P_{t,\theta}(i|S_t,p_t)-y_{i,t})z_{i,t}(p_{i,t}).\label{eq:grad}    
\end{align}
We also define gram matrices from $\nabla^2_{\theta}f(\theta)$ as follows:
\begin{align}
    &G_t(\theta):= \sum_{i\in S_t}P_{t,\theta}(i|S_t,p_t)z_{i,t}(p_{i,t})z_{i,t}(p_{i,t})^\top-\sum_{i,j\in S_t}P_{t,\theta}(i|S_t,p_t)P_{t,\theta}(j|S_t,p_t)z_{i,t}(p_{i,t})z_{j,t}(p_{j,t})^\top,\nonumber \\
    &G_{v,t}(\theta):= \sum_{i\in S_t}P_{t,\theta}(i|S_t,p_t)x_{i,t}x_{i,t}^\top-\sum_{i,j\in S_t}P_{t,\theta}(i|S_t,p_t)P_{t,\theta}(j|S_t,p_t)x_{i,t}x_{j,t}^\top.\label{eq:gram}
\end{align}
  Then we construct the estimator of  $\widehat{\theta}_t\in \mathbb{R}^{2d}$ for $\theta^*$ from the online mirror descent with \eqref{eq:grad} and \eqref{eq:gram}, as studied by \cite{zhang2024online,lee2024nearly}, within the range of $\Theta=\{\theta\in \mathbb{R}^{2d}: \|\theta^{1:d}
\|_2\le 1 \text{ and } \|\theta^{d+1:2d}
\|_2\le 1\}$, which is described in Line~\ref{line:estimation}.

  Now we explain the details regarding the strategy for the decision of price and assortment. 
For the price strategy, we construct the lower confidence bound (LCB) of the valuation of arms. Let  $\beta_\tau=C_1\sqrt{d\tau}\log(T)\log(K)$ where $\tau$ is the number of estimator updates for price, 
${H}_t=\lambda I_{2d}+\sum_{s=1}^{t-1}G_s(\widehat{\theta}_{s})$, and 
${H}_{v,t}=\lambda I_{d}+\sum_{s=1}^{t-1}G_{v,s}(\widehat{\theta}_{s})$ for some constant $C_1>0$ and $\lambda>0$. We  use $\theta^{n:m}$ for representing a vector consisting of elements from index $n$ to $m$ in  $\theta\in\mathbb{R}^{2d}$. Then we denote the estimator regarding valuation by $\widehat{\theta}_{v,t}:=\widehat{\theta}_t^{1:d}$. Let $t_\tau$ be the time step when $\tau$-th update of the estimation for price occurs and we use $\widehat{\theta}_{v,(\tau)}:=\widehat{\theta}_{v,t_\tau}$ for the pricing strategy. Then with a constant $C>1$, for the time steps $t$ corresponding to the $\tau$-th update, we construct the lower confidence bound (LCB) of the valuation of arm $i\in [N]$ as \[\underline{v}_{i,t}:=x_{i,t}^\top\widehat{\theta}_{v,(\tau)}-\sqrt{C}\beta_{\tau}\|x_{i,t}\|_{H_{v,t}^{-1}}.\]  
We use notation $x^+=\max\{x,0\}$ for $x\in\mathbb{R}$. Then, for the LCB pricing strategy, we set the price of arm $i$ using its LCB as \[p_{i,t}=\underline{v}_{i,t}^+.\]

Importantly, from this pricing strategy, the algorithm can effectively explore arms avoiding censorship because the arm having a small price is likely to be activated from the user's threshold in the C-MNL choice model. In the analysis, under the condition of a favorable event regarding the LCB, we can appropriately handle the preference feedback from C-MNL for estimating $\theta^*$ with $\widehat{\theta}_t$. However, the conditional analysis for estimation introduces regret with each update. To solve this issue, we periodically update the estimator $\widehat{\theta}_{v,(\tau)}$ for LCB with constant $C>1$, as described in Line~\ref{line:theta_v_update}, without hurting regret (in order) from estimation error. 

 
Next, for the assortment selection, we construct upper confidence bounds (UCB) for valuation $v_{i,t}$ and preference utility $u_{i,t}$ as $\overline{v}_{i,t}$ and $\overline{u}_{i,t}$, respectively. We construct UCB for the valuation as  \[\overline{v}_{i,t}:=x_{i,t}^\top\widehat{\theta}_{v,t}+\beta_{\tau}\|x_{i,t}\|_{H_{v,t}^{-1}}.\] Interestingly, when constructing  $\overline{u}_{i,t}$ regarding utility $u_{i,t}=z_{i,t}(p_{i,t}^*)^\top \theta^*$, it is required to consider enhanced-exploration under the uncertainty regarding both  $\widehat{\theta}_t$ and $p_{i,t}$ (in $z_{i,t}(p_{i,t})$). We construct \[\overline{u}_{i,t}:= z_{i,t}(p_{i,t})^\top\widehat{\theta}_t+\beta_{\tau}\|z_{i,t}(p_{i,t})\|_{H_t^{-1}}+2\sqrt{C}\beta_{\tau}\|x_{i,t}\|_{H_{v,t}^{-1}},\] 
 where $\beta_{\tau}\|z_{i,t}(p_{i,t})\|_{H_t^{-1}}$ comes from uncertainty of $\widehat{\theta}_t$ and $2\sqrt{C}\beta_{\tau}\|x_{i,t}\|_{H_{v,t}^{-1}}$ comes from that of $p_{i,t}$ in $z_{i,t}(p_{i,t})$.
 Then, using the UCB indexes, the assortment is chosen from
 \[S_t\in\argmax_{S\subseteq [N]:|S|\le K}\sum_{i\in S}\frac{\overline{v}_{i,t}\exp(\overline{u}_{i,t})}{1+\sum_{j\in S}\exp(\overline{u}_{j,t})}.\] 
 We set $\eta=\frac{1}{2}\log(K+1)+3$ and $\lambda=\max\{84d\eta,192\sqrt{2}\eta\}$ for the algorithm.


\begin{algorithm}[t]
  \caption{UCB-based Assortment-selection with LCB Pricing (\texttt{UCBA-LCBP})}\label{alg:ucb}
  \KwIn{$\lambda, \eta, \beta_{\tau}, C>1$}
  \kwInit{$\tau \leftarrow 1, t_1 \leftarrow 1, \widehat{\theta}_{v,(1)}\leftarrow \textbf{0}_d$}

 \For{$t=1,\dots,T$}{


 
$\tilde{H}_t\leftarrow \lambda I_{2d}+\sum_{s=1}^{t-2}G_s(\widehat{\theta}_{s})+\eta G_{t-1}(\widehat{\theta}_{t-1}) \text{ with \eqref{eq:gram}}$

${H}_t\leftarrow \lambda I_{2d}+\sum_{s=1}^{t-1}G_s(\widehat{\theta}_{s})$\text{ with \eqref{eq:gram}}


${H}_{v,t}\leftarrow \lambda I_{d}+\sum_{s=1}^{t-1}G_{v,s}(\widehat{\theta}_{s})$\text{ with \eqref{eq:gram}}

$\widehat{\theta}_t\leftarrow \argmin_{\theta\in\Theta}g_{t-1}(\widehat{\theta}_{t-1})^\top \theta+\frac{1}{2\eta}\|\theta-\widehat{\theta}_{t-1}\|_{\tilde{H}_t^{-1}}^2 \text{ with \eqref{eq:grad}}$  \Comment*[r]{Estimation} \label{line:estimation}


\If{$\det(H_t)>C\det(H_{t_\tau})$\label{line:theta_v_update}} 
{ $\tau\leftarrow \tau+1$;
$t_\tau\leftarrow t$

$\widehat{\theta}_{v,(\tau)}\leftarrow \widehat{\theta}_{v,t_\tau} (=\widehat{\theta}_{t_\tau}^{1:d})$
}

\For{$i\in[N]$} 
{$\underline{v}_{i,t}\leftarrow x_{i,t}^\top\widehat{\theta}_{v,(\tau)}-\sqrt{C}\beta_{\tau}\|x_{i,t}\|_{H_{v,t}^{-1}}$
\Comment*[r]{LCB for valuation}

$p_{i,t}\leftarrow\underline{v}_{i,t}^+$  \Comment*[r]{\textbf{Price selection  w/  LCB}}

$\overline{v}_{i,t}\leftarrow x_{i,t}^\top\widehat{\theta}_{v,t}+\beta_{\tau}\|x_{i,t}\|_{H_{v,t}^{-1}}$ \Comment*[r]{UCB for valuation}

$\overline{u}_{i,t}\leftarrow z_{i,t}(p_{i,t})^\top\widehat{\theta}_t+\beta_{\tau}\|z_{i,t}(p_{i,t})\|_{H_t^{-1}}+2\sqrt{C}\beta_{\tau}\|x_{i,t}\|_{H_{v,t}^{-1}}$ 
\Comment*[r]{UCB for utility}}

 $S_t\in\argmax_{S\subseteq [N]:|S|\le L}\sum_{i\in S}\frac{\overline{v}_{i,t}\exp(\overline{u}_{i,t})}{1+\sum_{j\in S}\exp(\overline{u}_{j,t})}$\Comment*[r]{\textbf{Assortment selection w/ UCB}}

Offer $S_t$ with prices $p_t=\{p_{i,t}\}_{i\in S_t}$ 

Observe preference (purchase) feedback $y_{i,t}\in\{0,1\}$ for $i\in S_t$}
\end{algorithm}

\subsection{Regret Analysis of  Algorithm~\ref{alg:ucb} (\texttt{UCBA-LCBP})}

 Similar to previous work for logistic and MNL bandit \citep{oh2019thompson,oh2021multinomial,lee2024nearly,goyal2021dynamic,erginbasdynamic,faury2020improved,abeille2021instance}, we consider the following regularity condition and definition for regret analysis.
\begin{assumption}
 $\|\theta_v\|_2\le 1$, $\|\theta_\alpha\|_2\le 1$, $\|x_{i,t}\|_2\le 1$, and $\|w_{i,t}\|_2\le 1$  for all $i\in[N]$, $t\in[T]$\label{ass:bd}
\end{assumption}
Recall $\Theta=\{\theta\in \mathbb{R}^{2d}: \|\theta^{1:d}
\|_2\le 1 \text{ and } \|\theta^{d+1:2d}
\|_2\le 1\}$. Then we define a problem-dependent quantity regarding non-linearlity of the MNL structure as follows. 
\[\kappa:=\inf_{t\in[T], \theta\in\Theta, i\in S\subseteq [N], p\in[0,1]^N}P_{t,\theta}(i|S,p)P_{t,\theta}(i_0|S,p).\]
 We note that in the worst-case, $1/\kappa=O(K^2)$ from the definition of $P_{t,\theta}(\cdot|S,p)$ with Assumption~\ref{ass:bd}. Then Algorithm~\ref{alg:ucb} achieves the regret bound in the following.

\begin{theorem}\label{thm:UCB}
    Under Assumption~\ref{ass:bd}, the policy $\pi$ of Algorithm~\ref{alg:ucb} achieves a regret bound of 
    \[R^\pi(T)=\tilde{O}\left(d^{\frac{3}{2}}\sqrt{T/\kappa}+\frac{d^3}{\kappa}\right).\]
\end{theorem}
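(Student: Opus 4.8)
The plan is to decompose the per-round regret $R_t(S_t^*,p_t^*)-R_t(S_t,p_t)$ into two pieces: (i) the loss incurred because the LCB pricing $p_{i,t}=\underline v_{i,t}^+$ is below the optimal price $p_{i,t}^*$, and (ii) the loss from the UCB assortment choice $S_t$ versus $S_t^*$. The engine driving everything is a high-probability confidence event. First I would invoke the online-mirror-descent estimation guarantee of \cite{zhang2024online,lee2024nearly} adapted to our joint feature $z_{i,t}(p)$: with probability at least $1-\delta$, $\|\widehat\theta_t-\theta^*\|_{H_t}\lesssim \beta_\tau$ (and the analogous bound for $\widehat\theta_{v,t}$ in $H_{v,t}$), so that the true valuation $v_{i,t}$ and true utility $z_{i,t}(p_{i,t})^\top\theta^*$ lie between their LCB/UCB surrogates. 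Call this event $\mathcal E$ and condition on it throughout; the periodic update rule $\det(H_t)>C\det(H_{t_\tau})$ ensures $H_{v,t}\preceq C\,H_{v,t_\tau}$ between updates, which is exactly why the frozen estimator $\widehat\theta_{v,(\tau)}$ with the inflated radius $\sqrt C\beta_\tau$ still sandwiches $v_{i,t}$.

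The key structural step is the \emph{no-censoring} lemma: on $\mathcal E$, since $p_{i,t}=\underline v_{i,t}^+\le v_{i,t}$ for every $i$, none of the offered arms is filtered out, so the observed C-MNL feedback coincides with the plain MNL model $P_{t,\theta^*}(\cdot|S_t,p_t)$ — this is what legitimizes using $g_t,G_t$ (which ignore the indicators) for estimation, closing the loop with the confidence event. Next, the optimism step: because $\overline v_{i,t}\ge v_{i,t}$ and $\overline u_{i,t}\ge z_{i,t}(p_{i,t}^*)^\top\theta^*$ (the extra $2\sqrt C\beta_\tau\|x_{i,t}\|_{H_{v,t}^{-1}}$ term absorbs the gap between the chosen price $p_{i,t}$ and $p_{i,t}^*$ inside $z_{i,t}(\cdot)$, using $z_{i,t}(p)-z_{i,t}(p')=[0;-(p-p')w_{i,t}]$ and $|p_{i,t}-p_{i,t}^*|\le |v_{i,t}-\underline v_{i,t}|$), the UCB surrogate revenue evaluated at $S_t$ upper-bounds $R_t(S_t^*,p_t^*)$ — here one needs monotonicity of the MNL revenue $\sum_i r_i e^{u_i}/(1+\sum_j e^{u_j})$ in each $(r_i,u_i)$, a standard fact, plus the observation that the oracle would never price above $v_{i,t}^*$. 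Then the regret at round $t$ is bounded by the gap between the optimistic revenue at $S_t$ and the true revenue at $(S_t,p_t)$, which by a first-order/mean-value argument over the utilities is at most $O(1)$ times $\sum_{i\in S_t}\big(\beta_\tau\|z_{i,t}(p_{i,t})\|_{H_t^{-1}}+\sqrt C\beta_\tau\|x_{i,t}\|_{H_{v,t}^{-1}}\big)$ — this is where the $\kappa$ enters, converting the Hessian-weighted norms $G_t$ back to the "flat" design, costing a $1/\sqrt\kappa$ factor as in \cite{lee2024nearly,abeille2021instance}.

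Finally I would sum over $t\in[T]$. The elliptical-potential lemma gives $\sum_t\sum_{i\in S_t}\|z_{i,t}(p_{i,t})\|_{H_t^{-1}}^2 = \tilde O(d)$ and likewise for the $x$-terms, so Cauchy–Schwarz yields $\sum_t(\cdots)=\tilde O(\sqrt{dT}\cdot\beta_\tau)=\tilde O(d^{3/2}\sqrt{T/\kappa})$ after substituting $\beta_\tau=\tilde O(\sqrt{d\tau})$ and $\tau=\tilde O(d)$ (number of doubling-updates). The residual $\tilde O(d^3/\kappa)$ term comes from the burn-in rounds before the confidence bounds become effective and from the per-update conditional-analysis overhead (each of the $\tilde O(d)$ updates costs $\tilde O(\beta_\tau^2/\kappa)=\tilde O(d^2/\kappa)$). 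I expect the main obstacle to be step two combined with the optimism bookkeeping: carefully showing that a single inflation term $2\sqrt C\beta_\tau\|x_{i,t}\|_{H_{v,t}^{-1}}$ in $\overline u_{i,t}$ simultaneously (a) compensates the price perturbation in the optimism direction and (b) is still small enough in the regret direction, all while the estimator is frozen at $\widehat\theta_{v,(\tau)}$ for pricing but fresh at $\widehat\theta_t$ for the utility UCB — reconciling these two timescales without losing a $\sqrt\kappa$ or a factor of $d$ is the delicate part.
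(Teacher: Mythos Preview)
Your structural outline matches the paper: LCB pricing $\Rightarrow$ no censoring (so the plain-MNL likelihood is valid for estimating $\theta^*$), optimism via $\overline u_{i,t}$ with the extra $2\sqrt C\beta_\tau\|x_{i,t}\|_{H_{v,t}^{-1}}$ to absorb the price perturbation $z_{i,t}(p_{i,t}^*)-z_{i,t}(p_{i,t})$, and the determinant-doubling schedule to keep the frozen $\widehat\theta_{v,(\tau)}$ valid between updates. The gap is in your final two steps. First, a mean-value bound on the MNL revenue delivers $\max_{i\in S_t}|\overline u'_{i,t}-u^p_{i,t}|$, not the unweighted $\sum_{i\in S_t}$ you wrote; with an unweighted sum, Cauchy--Schwarz against $\sqrt{\sum_t|S_t|}=\sqrt{KT}$ picks up a stray $\sqrt K$. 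Second, your claim $\sum_t\sum_{i\in S_t}\|z_{i,t}\|_{H_t^{-1}}^2=\tilde O(d)$ is off by $1/\kappa$: since $H_t$ accumulates $G_s(\widehat\theta_s)\succeq\kappa\sum_{i\in S_s}z_{i,s}z_{i,s}^\top$, the elliptical potential gives only $\tilde O(d/\kappa)$, and this is precisely where the $1/\sqrt\kappa$ in the leading term comes from --- there is no separate ``Hessian-to-flat conversion''. With both fixes, your first-order route yields $\beta_{\tau_T}\sqrt{T\cdot d/\kappa}=\tilde O(d^{3/2}\sqrt{T/\kappa})$, which already suffices for the theorem.

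For comparison, the paper takes a more refined route on the utility-gap piece: a second-order Taylor expansion of the MNL revenue with \emph{centered} features $\tilde z_{i,t}=z_{i,t}-\EE_{j\sim P_{t,\widehat\theta_t}}[z_{j,t}]$ following \cite{lee2024nearly}. The first-order term then becomes $\beta_{\tau_t}\sum_i P_{t,\widehat\theta_t}(i|S_t,p_t)\|\tilde z_{i,t}\|_{H_t^{-1}}$, and the matching elliptical potential $\sum_t\sum_i P_{t,\widehat\theta_t}(i)\|\tilde z_{i,t}\|^2_{H_t^{-1}}=\tilde O(d)$ holds \emph{without} $1/\kappa$ (because $G_t$ is exactly this covariance). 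The $d^3/\kappa$ term arises from the second-order remainder $\beta_{\tau_t}^2\max_i\|z_{i,t}\|_{H_t^{-1}}^2$ summed to $\EE[\beta_{\tau_T}^2]\cdot\tilde O(d/\kappa)=\tilde O(d^2)\cdot\tilde O(d/\kappa)$. So the lower-order $d^3/\kappa$ is a Taylor remainder, not burn-in or per-update overhead as you proposed; in fact your simpler first-order approach, done correctly with $\max_{i\in S_t}$, would not produce a separate $d^3/\kappa$ term at all.
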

\begin{proof}
The full version of the proof is provided in Appendix~\ref{app:UCB_regret}. Here we provide a proof sketch. 
We first define event $E_t=\{\|\widehat{\theta}_s-\theta^*\|_{H_s}\le \beta_{\tau_s}, \forall s
\le t \}$ and $E_T$ holds with a high probability. In what follows, we assume that $E_t$ holds at each time $t$.

For notation simplicity, we use $v_{i,t}:=x_{i,t}^\top \theta_v$, $u_{i,t}:=z_{i,t}(p_{i,t}^*)^\top \theta^*$, and $u_{i,t}^p:=z_{i,t}(p_{i,t})^\top \theta^*$. Then we can show that for all $i\in[N]$ and $t\in[T]$, we have 
     \begin{align}
         \underline{v}_{i,t}^+\le v_{i,t}\le \overline{v}_{i,t} \text{ and } u_{i,t}\le\overline{u}_{i,t}.\label{eq:ineq}
     \end{align}
For the regret analysis, we need to obtain a bound for 
\begin{align}
    &R_t(S_t^*,p_t^*)-R_t(S_t,p_t) \cr &=\sum_{i\in S_{t}^*}\frac{p_{i,t}^*\exp(u_{i,t} )\mathbbm{1}(p_{i,t}^*\le v_{i,t})}{1+\sum_{j\in S_t^*}\exp(u_{j,t})\mathbbm{1}(p_{j,t}^*\le v_{j,t})}-\sum_{i\in S_{t}}\frac{p_{i,t}\exp(u_{i,t}^p)\mathbbm{1}(p_{i,t}\le v_{i,t})}{1+\sum_{j\in S_t}\exp(u_{j,t}^p)\mathbbm{1}(p_{j,t}\le v_{j,t})}.\label{eq:inst_regret_bd_main}
\end{align}

For the purpose of analysis, we define $\overline{u}'_{i,t}=z_{i,t}(p_{i,t})^\top\theta^* +2\beta_{\tau_t}\|z_{i,t}(p_{i,t})\|_{H_t^{-1}}+2\sqrt{C}\beta_{\tau_t}\|x_{i,t}\|_{H_{v,t}^{-1}}$ so that $\overline{u}_{i,t}\le \overline{u}'_{i,t}$. For the first term in \eqref{eq:inst_regret_bd_main}, with \eqref{eq:ineq} and the UCB-based assortment selection policy, we can show that 
\begin{align}\sum_{i\in S_{t}^*}\frac{p_{i,t}^*\exp(u_{i,t} )\mathbbm{1}(p_{i,t}^*\le v_{i,t})}{1+\sum_{j\in S_t^*}\exp(u_{j,t} )\mathbbm{1}(p_{j,t}^*\le v_{j,t})} &\le \frac{\sum_{i\in S_t}\overline{v}_{i,t}\exp(\overline{u}_{i,t}')}{1+\sum_{i\in S_t}\exp(\overline{u}_{i,t}')}.\label{eq:optimism_main}\end{align}
 For the second term in \eqref{eq:inst_regret_bd_main},  with \eqref{eq:ineq} and the LCB-based pricing,  we have \begin{align}\sum_{i\in S_{t}}\frac{p_{i,t}\exp(u_{i,t}^p)\mathbbm{1}(p_{i,t}\le v_{i,t})}{1+\sum_{j\in S_t}\exp(u_{j,t}^p)\mathbbm{1}(p_{j,t}\le v_{j,t})}  =\frac{\sum_{i\in S_t}\underline{v}_{i,t}^+\exp(u_{i,t}^p)}{1+\sum_{i\in S_t}\exp(u_{i,t}^p)}.\label{eq:R_bd_main} \end{align}

 From \eqref{eq:inst_regret_bd_main}, \eqref{eq:optimism_main}, and \eqref{eq:R_bd_main}, we have
 \begin{align} &R_t(S_t^*,p_t^*)-R_t(S_t,p_t)  \le \frac{\sum_{i\in S_t}\overline{v}_{i,t}\exp(\overline{u}_{i,t}')}{1+\sum_{i\in S_t}\exp(\overline{u}_{i,t}')}-\frac{\sum_{i\in S_t}\underline{v}_{i,t}^+\exp(u_{i,t}^p)}{1+\sum_{i\in S_t}\exp(u_{i,t}^p)} 
 \cr &= \frac{\sum_{i\in S_t}\overline{v}_{i,t}\exp(\overline{u}_{i,t}')}{1+\sum_{i\in S_t}\exp(\overline{u}_{i,t}')}-\frac{\sum_{i\in S_t}\underline{v}_{i,t}^+\exp(\overline{u}_{i,t}')}{1+\sum_{i\in S_t}\exp(\overline{u}_{i,t}')}+\frac{\sum_{i\in S_t}\underline{v}_{i,t}^+\exp(\overline{u}_{i,t}')}{1+\sum_{i\in S_t}\exp(\overline{u}_{i,t}')}-\frac{\sum_{i\in S_t}\underline{v}_{i,t}^+\exp(u_{i,t}^p)}{1+\sum_{i\in S_t}\exp(u_{i,t}^p)}.\cr\label{eq:R_gap_bd_1_ucb_main}\end{align}
Let $\tau_t$ be the value of $\tau$ at the time step $t$. We can show that $\EE[\beta_{\tau_T}]=\tilde{O}(d) \text{ and } \EE[\beta_{\tau_T}^2]=\tilde{O}(d^2)$.
Then, for a bound of the first two terms in \eqref{eq:R_gap_bd_1_ucb_main}, with expectation bounds for $\beta_{\tau_T}$ and $\beta_{\tau_T}^2$ in the above and elliptical potential bounds, we show that 
    \begin{align}
        &\sum_{t\in [T]}\EE\left[\left(\frac{\sum_{i\in S_t}\overline{v}_{i,t}\exp(\overline{u}_{i,t}')}{1+\sum_{i\in S_t}\exp(\overline{u}_{i,t}')}-\frac{\sum_{i\in S_t}\underline{v}_{i,t}^+\exp(\overline{u}_{i,t}')}{1+\sum_{i\in S_t}\exp(\overline{u}_{i,t}')}\right)\mathbbm{1}(E_t)\right]\cr &=O\Bigg( \sum_{t\in [T]}\EE\bigg[\bigg(\beta_{\tau_t}\max_{i\in S_t}\|x_{i,t}\|_{H_{v,t}^{-1}}
        \mathbbm{1}(E_t)\bigg]\Bigg)
        =\tilde{O}\left(d^{\frac{3}{2}}\sqrt{T/\kappa}\right).\label{eq:ucb_gap_bd1}
        \end{align}
        Likewise, for the bound of the last two terms in \eqref{eq:R_gap_bd_1_ucb_main}, we can show that
        \begin{align}
        \sum_{t\in[T]}\EE\left[\left(\frac{\sum_{i\in S_t}\underline{v}_{i,t}^+\exp(\overline{u}_{i,t})}{1+\sum_{i\in S_t}\exp(\overline{u}_{i,t})}-\frac{\sum_{i\in S_t}\underline{v}_{i,t}^+\exp(u_{i,t}^p)}{1+\sum_{i\in S_t}\exp(u_{i,t}^p)}\right)\mathbbm{1}(E_t)\right]=\tilde{O}\left(d^{\frac{3}{2}}\sqrt{T}+\frac{d^3}{\kappa}\right),\label{eq:ucb_gap_bd2}
    \end{align}
which conclude the proof with  \eqref{eq:R_gap_bd_1_ucb_main}, \eqref{eq:ucb_gap_bd1}, and the fact that $E_T$ holds with a high probability. 
\end{proof}
Under the C-MNL model, our algorithm can achieve the tight regret bound with respect to $T$ as those established in standard MNL bandits \citep{oh2021multinomial} and dynamic pricing under MNL with arbitrary features \citep{goyal2021dynamic,erginbasdynamic}. The regret bounds of \cite{goyal2021dynamic,erginbasdynamic} for the MNL dynamic pricing problems include $1/\kappa$  in the leading term where, in their work, $\kappa$ was assumed to be a constant term. In the worst case where $1/\kappa=O(K^2)$, their regret bounds become $\tilde{O}(K^2\sqrt{T})$.   Our regret bound contains only $\sqrt{1/\kappa}$ in the leading term, allowing it to remain $\tilde{O}(K\sqrt{T})$  for large enough $T$ in the worst case. Moreover, the previous works \citep{goyal2021dynamic,erginbasdynamic} assumed that $x_{i,t}^\top \theta_\alpha \ge L$ with a positive constant $L>0$ and their regret bounds include $1/L^n$ for $n\ge 1$. This leads to trivial regret bounds in the worst case when $L$ is small, whereas our regret bound does not depend on $L$.
Regarding the dimensionality, the analysis of our new censored MNL model is significantly more challenging and involved due to the presence of activation functions, which adds complexity. As a result, our regret bound scales with $d^{\frac{3}{2}}$. However, whether this dependency can be improved remains an open question.


We now discuss the algorithmic differences between our method and the one proposed in \cite{goyal2021dynamic,erginbasdynamic}. 
In the prior work of \cite{goyal2021dynamic,erginbasdynamic}, the price is determined by maximizing revenue at each time. However, in our C-MNL framework, we cannot estimate $\theta^*$ using the revenue-maximizing price due to the hidden nature of non-purchased feedback regarding whether it is due to stochastic preference or elimination by an activation function. To address this issue, we employ an LCB pricing strategy that enhances exploration across all arms by adhering to acceptable user prices. Since our pessimistic pricing strategy introduces a gap from the optimal price, we further incorporate an exploration-enhanced strategy for choosing assortments. 

Additionally, our algorithm is computationally more efficient since it does not require solving an optimization problem for pricing decisions, which was necessary in the previous work.
We also note that regarding the computational costs of assortment selection, which is common in all MNL bandit literature, the assortment optimization can be computed by solving an LP \citep{davis2013assortment}.


 
\subsection{TS-based Assortment-selection with LCB Pricing: \texttt{TSA-LCBP}}
 
\begin{algorithm}[ht]
\LinesNotNumbered
  \caption{TS-based Assortment-selection with LCB Pricing (\texttt{TSA-LCBP})}\label{alg:ts}
  \KwIn{$\lambda, \eta, M, \beta_\tau,C>1$}
\kwInit{$\tau \leftarrow 1, t_1 \leftarrow 1, \widehat{\theta}_{v,(1)}\leftarrow \textbf{0}_d$}
 \For{$t=1,\dots,T$}{

$\tilde{H}_t\leftarrow \lambda I_{2d}+\sum_{s=1}^{t-2}G_s(\widehat{\theta}_{s})+\eta G_{t-1}(\widehat{\theta}_{t-1}) \text{ with \eqref{eq:gram}}$

${H}_t\leftarrow \lambda I_{2d}+\sum_{s=1}^{t-1}G_s(\widehat{\theta}_{s})\text{ with \eqref{eq:gram}}$


${H}_{v,t}\leftarrow \lambda I_{d}+\sum_{s=1}^{t-1}G_{v,s}(\widehat{\theta}_{s})\text{ with \eqref{eq:gram}}$
 
$\widehat{\theta}_t\leftarrow \argmin_{\theta\in\Theta}g_t(\widehat{\theta}_{t-1})^\top \theta+\frac{1}{2\eta}\|\theta-\widehat{\theta}_{t-1}\|_{\tilde{H}_t^{-1}}^2 \text{ with \eqref{eq:grad}}$  \Comment*[r]{Estimation}

Sample $\{\tilde{\theta}_{v,t}^{(m)}\}_{m\in[M]}$ independently from $\mathcal{N}(\widehat{\theta}_{v,t}(=\widehat{\theta}_{t}^{1:d}),\beta_\tau^2H_{v,t}^{-1})$

Sample $\{\tilde{\theta}_t^{(m)}\}_{m\in[M]}$ independently from $\mathcal{N}(\widehat{\theta}_t,2\beta_\tau^2H_t^{-1})$ 

\If{$\det(H_t)>C\det(H_{t_\tau})$}
{ $\tau\leftarrow \tau+1$;
$t_\tau\leftarrow t$

$\widehat{\theta}_{v,(\tau)}\leftarrow \widehat{\theta}_{v,t_\tau} (=\widehat{\theta}_{t_\tau}^{1:d})$
}

\For{$i\in[N]$} 
{$\underline{v}_{i,t}\leftarrow x_{i,t}^\top\widehat{\theta}_{v,(\tau)}-\sqrt{C}\beta_\tau\|x_{i,t}\|_{H_{v,t}^{-1}}$ \Comment*[r]{LCB for valuation}

$p_{i,t}\leftarrow\underline{v}_{i,t}^+$  \Comment*[r]{\textbf{Price selection w/  LCB}}

$\tilde{v}_{i,t}\leftarrow \argmax_{m\in [M]}x_{i,t}^\top \tilde{\theta}_{v,t}^{(m)}$
\Comment*[r]{TS for valuation}

$\tilde{\eta}_{i,t}\leftarrow  \tilde{v}_{i,t}-x_{i,t}^\top \widehat{\theta}_{v,t}$

$\tilde{u}_{i,t}\leftarrow  \argmax_{m\in [M]} z_{i,t}(p_{i,t})^\top\tilde{\theta}_{t}^{(m)}+8C\tilde{\eta}_{i,t}$ \Comment*[r]{TS for utility}

}
 
 $S_t\in \argmax_{S\subseteq[N]:|S|\le K}\sum_{i\in S}\frac{\tilde{v}_{i,t}\exp(\tilde{u}_{i,t})}{1+\sum_{j\in S}\exp(\tilde{u}_{j,t})}$ \Comment*[r]{\textbf{Assortment selection w/ TS}}


Offer $S_t$ with prices $p_t=\{p_{i,t}\}_{i\in S_t}$ 

Observe preference (purchase) feedback $y_{i,t}\in\{0,1\}$ for $i\in S_t$}
\end{algorithm}
 Here we propose a Thompson sampling (TS)-based assortment-selection with LCB pricing algorithm (Algorithm~\ref{alg:ts}).  
  As in Algorithm~\ref{alg:ucb}, we first estimate $\widehat{\theta}_t$ using the online mirror descent within the range of $\Theta=\{\theta\in \mathbb{R}^{2d}: \|\theta^{1:d}
\|_2\le 1 \text{ and } \|\theta^{d+1:2d}
\|_2\le 1\}$.
For determining price, we utilize the LCB pricing as $p_{i,t}= \underline{v}_{i,t}^+,$ where, recall, $\underline{v}_{i,t}=x_{i,t}^\top\widehat{\theta}_{v,(\tau)}-\beta_\tau\|x_{i,t}\|_{H_{v,t}^{-1}}$ with $\beta_\tau=C_1\sqrt{d\tau}\log(T)\log(K)$. 

For choosing the assortment,  we sample two different types of instances from Gaussian distributions; one is for valuation and the other is for preference utility, each of which is sampled for $M$ times as $\tilde{\theta}_{v,t}^{(m)}\in\RR^d$ and $\tilde{\theta}_{t}^{(m)}\in\RR^{2d}$ for $m\in[M]$, respectively. We set $M=\lceil 1-\frac{\log (2N)}{\log(1-1/4\sqrt{e\pi})}\rceil$. 
 Then we construct TS indexes regarding the valuation and utility as   
\[\tilde{v}_{i,t}:= \argmax_{m\in [M]}x_{i,t}^\top \tilde{\theta}_{v,t}^{(m)} \:\text{ and }\:
\tilde{u}_{i,t}:=  \argmax_{m\in [M]} z_{i,t}(p_{i,t})^\top\tilde{\theta}_{t}^{(m)}+16\tilde{\eta}_{i,t}, \text{ respectively, }\]
where 
$\tilde{\eta}_{i,t}= \tilde{v}_{i,t}-x_{i,t}^\top \widehat{\theta}_{v,t}$. 
For the utility of $\tilde{u}_{i,t}$, we have to consider the uncertainty regarding $p_{i,t}$ as well as $\widehat{\theta}_t$, which leads to requiring an additional exploration term $\tilde{\eta}_{i,t}$. Then the assortment is determined from
 \[S_t\in\argmax_{S\subseteq [N]:|S|\le K}\sum_{i\in S}\frac{\tilde{v}_{i,t}\exp(\tilde{u}_{i,t})}{1+\sum_{j\in S}\exp(\tilde{u}_{j,t})}.\] 
 In the following, we provide a regret bound of the algorithm by setting $\eta=\frac{1}{2}\log(K+1)+3$ and $\lambda=\max\{84d\eta,192\sqrt{2}\eta\}$.

\subsection{Regret Analysis of Algorithm~\ref{alg:ts} (\texttt{TSA-LCBP})}

\begin{theorem}\label{thm:TS_regret}
    Under Assumption~\ref{ass:bd}, the policy $\pi$ of Algorithm~\ref{alg:ts} achieves a regret bound of 
    \[R^\pi(T)=\tilde{O}\left( d^{2}\sqrt{T/\kappa}+\frac{d^4}{\kappa}\right)\]
\end{theorem}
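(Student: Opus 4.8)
The plan is to run the argument of Theorem~\ref{thm:UCB} with the deterministic UCB indices replaced by optimistic Thompson samples, and then to keep track of the extra $\sqrt{d}$ that the Gaussian sampling costs. First I would reuse the estimation part wholesale: the estimator $\widehat\theta_t$, the matrices $H_t,H_{v,t}$ and the radius $\beta_\tau$ in Algorithm~\ref{alg:ts} coincide with those of Algorithm~\ref{alg:ucb}, so the event $E_t=\{\|\widehat\theta_s-\theta^*\|_{H_s}\le\beta_{\tau_s}\ \forall s\le t\}$ still holds with high probability ($E_T$ included), and the moment bounds $\EE[\beta_{\tau_T}]=\tilde{O}(d)$, $\EE[\beta_{\tau_T}^2]=\tilde{O}(d^2)$ are unchanged. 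The LCB pricing rule is also identical, so on $E_t$ we have $\underline v_{i,t}^+\le v_{i,t}$, every activation indicator in the played assortment equals $1$, and $R_t(S_t,p_t)=\bigl(\sum_{i\in S_t}\underline v_{i,t}^+\exp(u_{i,t}^p)\bigr)\big/\bigl(1+\sum_{i\in S_t}\exp(u_{i,t}^p)\bigr)$ exactly as in \eqref{eq:R_bd_main}. Working on $E_t$, it remains to bound $\sum_t\EE[(R_t(S_t^*,p_t^*)-R_t(S_t,p_t))\mathbbm{1}(E_t)]$.

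The core new ingredient is a constant-probability optimism lemma. Conditioning on $\mathcal{H}_{t-1}$: for each arm $i$, $x_{i,t}^\top\tilde\theta_{v,t}^{(m)}$ is Gaussian with mean $x_{i,t}^\top\widehat\theta_{v,t}$ and standard deviation $\beta_\tau\|x_{i,t}\|_{H_{v,t}^{-1}}$, so on $E_t$ a single draw exceeds $v_{i,t}$ (which sits within one standard deviation above the mean) with probability at least the universal constant $\tfrac{1}{4\sqrt{e\pi}}$ by Gaussian anti-concentration; taking the max over the $M$ independent draws and using the prescribed $M=\lceil 1-\log(2N)/\log(1-\tfrac{1}{4\sqrt{e\pi}})\rceil$ gives $\PP[\tilde v_{i,t}\ge v_{i,t}\mid\mathcal{H}_{t-1}]\ge 1-\tfrac{1}{2N}$, hence $\tilde v_{i,t}\ge v_{i,t}$ for all $i\in S_t^*$ with probability $\ge\tfrac12$. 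The analogous statement for $\tilde u_{i,t}$ uses the same anti-concentration together with the observation that, on $E_t$, $u_{i,t}=z_{i,t}(p_{i,t}^*)^\top\theta^*$ lies above $z_{i,t}(p_{i,t})^\top\widehat\theta_t$ by at most the sampling width $O(\beta_\tau\|z_{i,t}(p_{i,t})\|_{H_t^{-1}})$ plus a price-mismatch term bounded by $2\sqrt{C}\beta_\tau\|x_{i,t}\|_{H_{v,t}^{-1}}$ (from $|p_{i,t}^*-p_{i,t}|$ and $\alpha_{i,t}\le1$), and the latter is exactly what the correction $16\tilde\eta_{i,t}$ dominates once $\tilde v_{i,t}$ is optimistic. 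On the resulting constant-probability event, $\tilde v_{i,t}\ge v_{i,t}\ge p_{i,t}^*$ and $\tilde u_{i,t}\ge u_{i,t}$ for all $i\in S_t^*$, so, arguing exactly as for \eqref{eq:optimism_main} (which uses that every item of the optimal assortment has price at least the optimal expected revenue), the sampled value $\bigl(\sum_{i\in S_t}\tilde v_{i,t}\exp(\tilde u_{i,t})\bigr)\big/\bigl(1+\sum_{i\in S_t}\exp(\tilde u_{i,t})\bigr)$ of the selected $S_t$ is at least $R_t(S_t^*,p_t^*)$; since this sampled value is also nonnegative (the empty assortment is feasible), $\EE[\text{sampled value of }S_t\mid\mathcal{H}_{t-1}]\ge c_0\,R_t(S_t^*,p_t^*)$ for a universal $c_0>0$, so up to a constant factor we may replace the optimal revenue by the sampled value of $S_t$.

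It then remains to bound the gap between the sampled value of $S_t$ and the realized revenue. A high-probability bound on the Gaussian maxima gives $\tilde v_{i,t}\le x_{i,t}^\top\widehat\theta_{v,t}+\tilde{O}(\beta_\tau\|x_{i,t}\|_{H_{v,t}^{-1}})$ and, via $|z_{i,t}(p_{i,t})^\top(\tilde\theta_t^{(m)}-\widehat\theta_t)|\le\|\tilde\theta_t^{(m)}-\widehat\theta_t\|_{H_t}\|z_{i,t}(p_{i,t})\|_{H_t^{-1}}$ with $\|\tilde\theta_t^{(m)}-\widehat\theta_t\|_{H_t}=\tilde{O}(\beta_\tau\sqrt{d})$, the bound $\tilde u_{i,t}\le z_{i,t}(p_{i,t})^\top\widehat\theta_t+\tilde{O}(\beta_\tau\sqrt{d}\,\|z_{i,t}(p_{i,t})\|_{H_t^{-1}})+16\tilde\eta_{i,t}$ with $\tilde\eta_{i,t}=\tilde{O}(\beta_\tau\|x_{i,t}\|_{H_{v,t}^{-1}})$; this is exactly the UCB width appearing in \eqref{eq:R_gap_bd_1_ucb_main} inflated by the $\sqrt{d}$ from the Gaussian draw, and it is the only source of the degradation from $d^{3/2}$ to $d^2$. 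Feeding these into the same two-term decomposition as in \eqref{eq:R_gap_bd_1_ucb_main}--\eqref{eq:ucb_gap_bd2}, applying the elliptical-potential lemma together with $H_t\succeq\tfrac{\kappa}{2}\sum_{s}\sum_{i\in S_s}z_{i,s}z_{i,s}^\top$ and $H_{v,t}\succeq\tfrac{\kappa}{2}\sum_{s}\sum_{i\in S_s}x_{i,s}x_{i,s}^\top$, and using the moment bounds on $\beta_{\tau_T}$, the leading sum evaluates to $\tilde{O}(d^2\sqrt{T/\kappa})$ and the burn-in to $\tilde{O}(d^4/\kappa)$; combined with the constant $1/c_0$ from the optimism step and the fact that $E_T$ is high-probability, this yields the claim.

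The main obstacle is the optimism lemma for $\tilde u_{i,t}$: the quantity we must be optimistic about, $u_{i,t}$, is evaluated at the unknown optimal price $p_{i,t}^*$ rather than the played LCB price, so the correction $16\tilde\eta_{i,t}$ has to absorb this price mismatch while being itself a random function of an independent batch of valuation samples, and all of this must hold jointly over $i\in S_t^*$ with a universal constant probability and then be converted into a regret bound despite the assortment-revenue functional not being globally monotone in the utilities --- which forces the same indirect comparison used to establish \eqref{eq:optimism_main}. Getting these couplings right (and verifying that the choices $\eta=\tfrac12\log(K+1)+3$, $\lambda=\max\{84d\eta,192\sqrt2\eta\}$ keep the online-mirror-descent estimation guarantee valid) is where essentially all of the work beyond the proof of Theorem~\ref{thm:UCB} sits.
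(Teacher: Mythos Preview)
Your sketch has the right ingredients: the constant-probability optimism of the Gaussian super-samples and the extra $\sqrt d$ factor from $\|\tilde\theta_t^{(m)}-\widehat\theta_t\|_{H_t}=\tilde O(\beta_\tau\sqrt d)$ are exactly what the paper establishes in its appendix lemmas, and the estimation/LCB-pricing machinery does carry over from Theorem~\ref{thm:UCB} unchanged.

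The gap is in how you convert constant-probability optimism into a regret bound. From ``$\tilde V_t\ge R_t(S_t^*,p_t^*)$ with probability $\ge c_0$'' together with $\tilde V_t\ge 0$ you correctly deduce $\mathbb{E}[\tilde V_t\mid\mathcal F_{t-1}]\ge c_0\, R_t(S_t^*,p_t^*)$, but substituting this into the per-round regret gives
\[
R_t(S_t^*,p_t^*)-R_t(S_t,p_t)\;\le\;\tfrac{1}{c_0}\,\mathbb{E}\bigl[\tilde V_t-R_t(S_t,p_t)\,\big|\,\mathcal F_{t-1}\bigr]\;+\;\bigl(\tfrac{1}{c_0}-1\bigr)\,R_t(S_t,p_t),
\]
and the residual $(\tfrac{1}{c_0}-1)\,R_t(S_t,p_t)$ is $\Theta(1)$ per round whenever the played revenue is nondegenerate, so the argument as written yields linear regret. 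The $1/c_0$ must multiply a quantity that is already small, not the played revenue itself; bounding only $R_t^*$ (rather than the whole gap) by a constant multiple of $\mathbb{E}[\tilde V_t]$ does not close.

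The paper's route is the standard TS decoupling. Instead of bounding $R_t^*$ alone, it first upper-bounds the \emph{entire} gap by replacing the played revenue with a $\mathcal F_{t-1}$-measurable proxy: an infimum over the concentration set $\tilde\Theta_t$ of $\max_S$ of the parameterized revenue $\sum_{i\in S}\underline v_{i,t}^+\exp(\tilde u_{i,t}(\Theta))/(1+\sum_{i\in S}\exp(\tilde u_{i,t}(\Theta)))$. Since the resulting upper bound contains no TS randomness, conditioning on the optimism event $\tilde E_t$ is free; under $\tilde E_t$ the optimal revenue is dominated by the sampled value of $S_t$ (exactly your optimism step) and the $\max_S$ drops to $S_t$, leaving $\sup_{\Theta\in\tilde\Theta_t}\bigl(\tilde V_t-\text{value}(\Theta,S_t)\bigr)$, which the paper bounds by a confidence-width expression $L_t$ depending only on items in $S_t$. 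The constant-probability lower bound is then applied to $L_t$ via $\mathbb{E}[L_t\mid\mathcal F_{t-1},E_t,\tilde E_t]\le (8\sqrt{e\pi})\,\mathbb{E}[L_t\mid\mathcal F_{t-1},E_t]$, so the $1/c_0$ factor hits a term whose $T$-sum is already $\tilde O(d^2\sqrt{T/\kappa})$. This inf/sup decoupling --- not a multiplicative bound on $R_t^*$ --- is the missing step in your plan.
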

\begin{proof}
The full version of the proof is provided in Appendix~\ref{app:TS_regret}. Here we provide some key components of the proof. 
We first define event $E_t=\{\|\widehat{\theta}_s-\theta^*\|_{H_s}\le \beta_{t}, \forall s\le t \}$ and $E_T$  holds with a high probability. Let $A_t^*=\{i\in S_t^*:p_{i,t}^*\le v_{i,t}\}$ and, recall,  $v_{i,t}=x_{i,t}^\top \theta_v$, $u_{i,t}=z_{i,t}(p_{i,t}^*)^\top \theta^*$, and $u_{i,t}^p=z_{i,t}(p_{i,t})^\top \theta^*$. Then  under $E_t$, from the pricing and assortment selection strategies, we can show that 
\begin{align} R_t(S_t^*,p_t^*)-R_t(S_t,p_t) \le \frac{\sum_{i\in A_t^*}v_{i,t}\exp(u_{i,t})}{1+\sum_{i\in A_t^*}\exp(u_{i,t})}-\frac{\sum_{i\in S_t}\underline{v}_{i,t}^+\exp(u_{i,t}^p)}{1+\sum_{i\in S_t}\exp(u_{i,t}^p)}.\label{eq:TS_step1_main}\end{align}
We define event $\tilde{E}_t^{(a)}$ such that for all $i\in[N]$, we have \[|\tilde{v}_{i,t}-x_{i,t
}^\top \widehat{\theta}_{v,t}|\le \gamma_t\|x_{i,t}\|_{H_{v,t}^{-1}}\text{ and } |\tilde{u}_{i,t}-z_{i,t}(p_{i,t})^\top \widehat{\theta}_t|\le 8C\gamma_t(\|z_{i,t}(p_{i,t})\|_{H_t^{-1}}+\|x_{i,t}\|_{H_{v,t}^{-1}}),\] which is shown to hold with a high probability. We also define event $\tilde{E}_t^{(b)}$ such that for all $i\in[N]$, we have  $\tilde{v}_{i,t}\ge v_{i,t} \text{ and } \tilde{u}_{i,t}\ge u_{i,t},$ which is shown to holds at least a positive constant. Let $\tilde{E}_t=\tilde{E}_t^{(a)}\cap\tilde{E}_t^{(b)}$. Then we can show that $\PP(\tilde{E}_t|\Fcal_{t-1},E_t)\ge 1/8\sqrt{e\pi}$ where $\Fcal_{t-1}$ is the filtration containing information before $t$.

 Let $\tilde{z}_{i,t}=z_{i,t}(p_{i,t})-\EE_{j\sim P_{t,\widehat{\theta}_t}(\cdot|S_t,p_t)}[z_{i,t}(p_{i,t})]$ and $\tilde{x}_{i,t}=x_{i,t}-\EE_{j\sim P_{t,\widehat{\theta}_t}(\cdot|S_t,p_t)}[x_{i,t}]$ and $\gamma_t=\beta_{\tau_t}\sqrt{8d\log(Mt)}$ where $\tau_t$ is the value of $\tau$ at time $t$. For the ease of presentation, we use 
  \begin{align*}
    &L_t=\gamma_t^2(\max_{i\in S_t}\|z_{i,t}(p_{i,t})\|_{H_t^{-1}}^2+\max_{i\in S_t}\|x_{i,t}\|_{H_{v,t}^{-1}}^2)+\gamma_t^2(\max_{i\in S_t}\|\tilde{z}_{i,t}\|_{H_{t}^{-1}}^2+\max_{i\in S_t}\|\tilde{x}_{i,t}\|_{H_{v,t}^{-1}}^2)\cr &+\gamma_t\sum_{i\in S_t}P_{t,\widehat{\theta}_t}(i|S_t,p_t)(\|\tilde{z}_{i,t}\|_{H_{t}^{-1}}+\|\tilde{x}_{i,t}\|_{H_{v,t}^{-1}})+\gamma_t\sum_{i\in S_t}\|x_{i,t}\|_{H_{v,t}^{-1}}).
\end{align*}
With a constant lower bound for $\PP(\tilde{E}_t|\Fcal_{t-1},E_t)$ and elliptical potential bounds, by omitting some details, we can show that 
\begin{align*}
&\mathbb{E}\left[\mathbb{E}\left[\left(\frac{\sum_{i\in A_t^*}{v}_{i,t}\exp({u}_{i,t})}{1+\sum_{i\in A_t^*}\exp({u}_{i,t})}-\frac{\sum_{i\in S_t}\underline{v}_{i,t}^+\exp(u_{i,t}^p)}{1+\sum_{i\in S_t}\exp(u_{i,t}^p)}\right)\mathbbm{1}(E_t)\mid\mathcal{F}_{t-1}\right]\right]\cr &
=O\left(  \mathbb{E}\left[\mathbb{E}\left[L_t\mid\mathcal{F}_{t-1},\tilde{E}_t,E_t\right]\mathbb{P}(E_t|\Fcal_{t-1})\right]\right)
=\tilde{O}\left( d^{2}\sqrt{T/\kappa}+\frac{d^4}{\kappa}\right),
\end{align*}
which concludes the proof with \eqref{eq:TS_step1_main} and the fact that $E_T$ holds with a high probability.
\end{proof}

{To the best of our knowledge, this is the first work to apply Thompson Sampling (TS) to dynamic pricing under MNL functions, whereas the previous related works focused on UCB method \citep{erginbasdynamic} (or did not consider assortment selection \citep{goyal2021dynamic}). Additionally, prior work on TS for MNL bandits \citep{oh2019thompson} includes $1/\kappa$ in the regret bound so that $\tilde{O}((1/\kappa)\sqrt{T})$ and requires computationally intensive estimation with an $O(t)$ cost at each time step $t$. In contrast, by using online mirror descent updates, our TS algorithm reduces the $\kappa$ dependency in the main term of the regret bound with $\tilde{O}(\sqrt{T/\kappa})$ for large enough $T$ and provides computationally efficient online updates with an $O(1)$ cost for estimation in MNL bandits. It is also worth noting that our TS regret bound has an additional $\sqrt{d}$ term compared to the UCB algorithm (Algorithm~\ref{alg:ucb}). This phenomenon of increased regret with respect to $d$, compared to that of UCB, is consistent with observations from previous TS-based bandit literature \citep{oh2019thompson,agrawal2013thompson,abeille2017linear}.  Furthermore, achieving a regret bound for TS, even after UCB, is widely considered an established contribution in bandit literature \citep{agrawal2013thompson,abeille2017linear, kim2024queueing}.

\section{Experiments}
\begin{figure}[ht]
\centering
\includegraphics[width=0.9\linewidth]{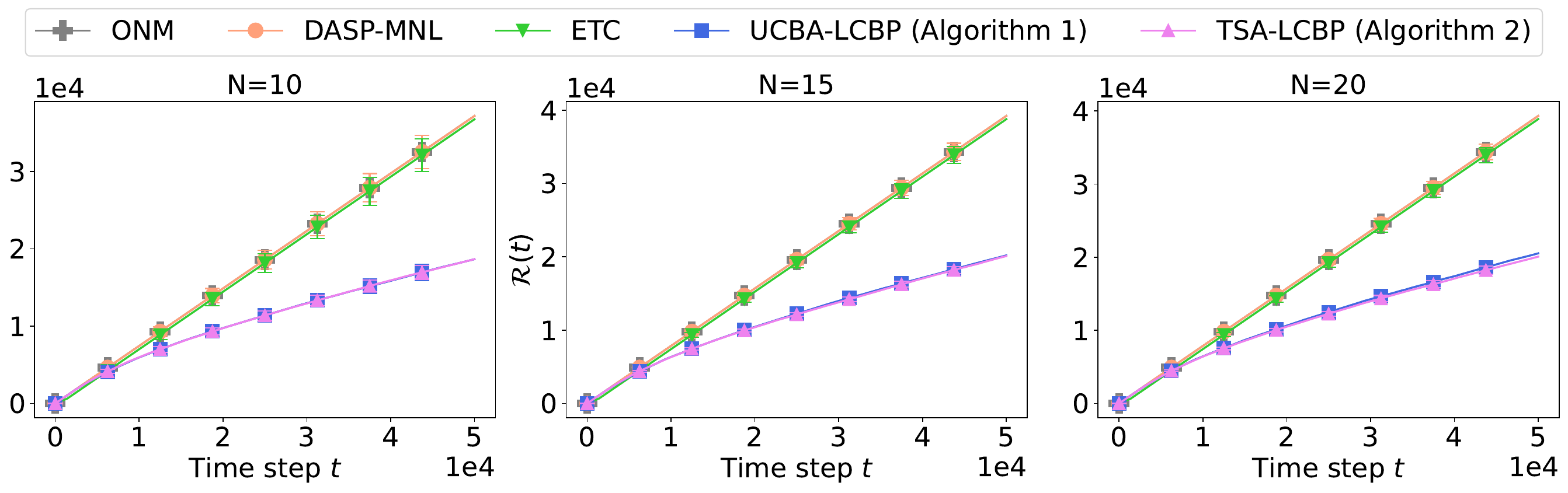}
\vspace{-2mm}
\caption{ Experimental results for the regret of algorithms}
\label{fig:alg}
\end{figure}
Here, we present numerical results using synthetic datasets with varying numbers of products $N$.\footnote{Source code: \url{https://github.com/junghunkim7786/DynamicAssortmentSelectionPricing}} 
For the experiments, we generate each element in $\theta_v$ and $\theta_\alpha$ from the uniform distribution $(0,1)$ and normalize them. We also generate features in the same way. We set $K=5$ and $d=4$. Unfortunately, there is no algorithm that can be directly applied to our novel setting. Therefore, for the benchmarks, we utilize previous algorithms proposed for dynamic pricing under MNL model such as \texttt{DASP-MNL} proposed in \citet{erginbasdynamic} and \texttt{ONM} (online newton method)  in \citet{goyal2021dynamic}. We note that \texttt{ONM} works under a given assortment rather than selecting one, so we adjust the method by adopting the method for the assortment optimization in \citet{erginbasdynamic}. We also utilize the method of Explore-then-commit (\texttt{ETC}) \citep{lattimore2020bandit} as a benchmark, which conducts exploration over the first $T^{2/3}$ time steps and then exploits for the remainder of the time.  In Figure~\ref{fig:alg}, we can observe other benchmarks do not work properly in our setting and our algorithms outperform the benchmarks with sublinear regret. 

\section{Randomness in Activation Function.} We further investigate the presence of randomness in the activation function in C-MNL.  Let $\zeta_{i,t}$ be a zero-mean random noise drawn from the range of $[-c,c]$ for some $0<c\le 1$. we consider
\[\tilde{\PP}_{t}(i|S_t,p_t)=\frac{\exp( z_{i,t}(p_{i,t})^\top \theta^* )\mathbbm{1}(p_{i,t}\le (x_{i,t}^\top \theta_v+\zeta_{i,t})^+)}{1+\sum_{j\in S_t}\exp(z_{j,t}(p_{j,t})^\top \theta^* )\mathbbm{1}(p_{j,t}\le (x_{j,t}^\top \theta_v+\zeta_{j,t})^+)}.\]
We propose a variant of Algorithm~\ref{alg:ucb} (Algorithm~\ref{alg:ucb_ran} in Appendix~\ref{app:ran_act}) using an enhanced LCB pricing strategy, which achieves $\tilde{O}(d^{\frac{3}{2}}\sqrt{T/\kappa})$ when $c=O(1/\sqrt{T})$. Further details on the algorithm and theorem can be found in Appendix~\ref{app:ran_act}.

\section{Conclusion}
\vspace{-2mm}
In this study, we explore dynamic multi-product selection and pricing within a new framework of the censored multi-nomial logit choice model. We introduce algorithms that incorporate an LCB pricing strategy along with either a UCB or TS product selection strategy. These algorithms achieve regret bounds of $\tilde{O}(d^{\frac{3}{2}}\sqrt{T/\kappa})$ and $\tilde{O}(d^{2}\sqrt{T/\kappa})$, respectively. Lastly, we validate our algorithms through experiments with synthetic datasets.

\paragraph{Reproducibility Statement.}
 Complete proofs of the theorems are included in the appendix.
\section{Acknowledgments}
The authors thank Joongkyu Lee for providing useful code.  This work was supported by the Global-LAMP Program of the National Research Foundation of Korea~(NRF) grant funded by the Ministry of Education (No. RS-2023-00301976) and also supported by the Korea government~(MSIT) (No.  RS-2022-NR071853 and RS-2023-00222663) and AI-Bio Research Grant through Seoul National University.

\bibliography{mybib}

\begin{thebibliography}{32}
\providecommand{\natexlab}[1]{#1}
\providecommand{\url}[1]{\texttt{#1}}
\expandafter\ifx\csname urlstyle\endcsname\relax
  \providecommand{\doi}[1]{doi: #1}\else
  \providecommand{\doi}{doi: \begingroup \urlstyle{rm}\Url}\fi

\bibitem[Abbasi-Yadkori et~al.(2011)Abbasi-Yadkori, P{\'a}l, and
  Szepesv{\'a}ri]{abbasi2011improved}
Yasin Abbasi-Yadkori, D{\'a}vid P{\'a}l, and Csaba Szepesv{\'a}ri.
\newblock Improved algorithms for linear stochastic bandits.
\newblock \emph{Advances in neural information processing systems}, 24, 2011.

\bibitem[Abeille \& Lazaric(2017)Abeille and Lazaric]{abeille2017linear}
Marc Abeille and Alessandro Lazaric.
\newblock Linear thompson sampling revisited.
\newblock In \emph{Artificial Intelligence and Statistics}, pp.\  176--184.
  PMLR, 2017.

\bibitem[Abeille et~al.(2021)Abeille, Faury, and
  Calauz{\`e}nes]{abeille2021instance}
Marc Abeille, Louis Faury, and Cl{\'e}ment Calauz{\`e}nes.
\newblock Instance-wise minimax-optimal algorithms for logistic bandits.
\newblock In \emph{International Conference on Artificial Intelligence and
  Statistics}, pp.\  3691--3699. PMLR, 2021.

\bibitem[Agrawal \& Goyal(2013)Agrawal and Goyal]{agrawal2013thompson}
Shipra Agrawal and Navin Goyal.
\newblock Thompson sampling for contextual bandits with linear payoffs.
\newblock In \emph{International conference on machine learning}, pp.\
  127--135. PMLR, 2013.

\bibitem[Agrawal et~al.(2017{\natexlab{a}})Agrawal, Avadhanula, Goyal, and
  Zeevi]{agrawal2017mnl}
Shipra Agrawal, Vashist Avadhanula, Vineet Goyal, and Assaf Zeevi.
\newblock Mnl-bandit: A dynamic learning approach to assortment selection.
\newblock \emph{arXiv preprint arXiv:1706.03880}, 2017{\natexlab{a}}.

\bibitem[Agrawal et~al.(2017{\natexlab{b}})Agrawal, Avadhanula, Goyal, and
  Zeevi]{agrawal2017thompson}
Shipra Agrawal, Vashist Avadhanula, Vineet Goyal, and Assaf Zeevi.
\newblock Thompson sampling for the mnl-bandit.
\newblock In \emph{Conference on learning theory}, pp.\  76--78. PMLR,
  2017{\natexlab{b}}.

\bibitem[Ayoub et~al.(2020)Ayoub, Jia, Szepesvari, Wang, and
  Yang]{ayoub2020model}
Alex Ayoub, Zeyu Jia, Csaba Szepesvari, Mengdi Wang, and Lin Yang.
\newblock Model-based reinforcement learning with value-targeted regression.
\newblock In \emph{International Conference on Machine Learning}, pp.\
  463--474. PMLR, 2020.

\bibitem[Bertsimas \& Perakis(2006)Bertsimas and Perakis]{bertsimas2006dynamic}
Dimitris Bertsimas and Georgia Perakis.
\newblock Dynamic pricing: A learning approach.
\newblock \emph{Mathematical and computational models for congestion charging},
  pp.\  45--79, 2006.

\bibitem[Chen et~al.(2022)Chen, Zhong, Yang, Wang, and Wang]{chen2022human}
Xiaoyu Chen, Han Zhong, Zhuoran Yang, Zhaoran Wang, and Liwei Wang.
\newblock Human-in-the-loop: Provably efficient preference-based reinforcement
  learning with general function approximation.
\newblock In \emph{International Conference on Machine Learning}, pp.\
  3773--3793. PMLR, 2022.

\bibitem[Cheung et~al.(2017)Cheung, Simchi-Levi, and Wang]{cheung2017dynamic}
Wang~Chi Cheung, David Simchi-Levi, and He~Wang.
\newblock Dynamic pricing and demand learning with limited price
  experimentation.
\newblock \emph{Operations Research}, 65\penalty0 (6):\penalty0 1722--1731,
  2017.

\bibitem[Choi et~al.(2023)Choi, Kim, Yunseo, Cho, Paik, and Oh]{choi2023semi}
Young-Geun Choi, Gi-Soo Kim, Choi Yunseo, Wooseong Cho, Myunghee~Cho Paik, and
  Min-hwan Oh.
\newblock Semi-parametric contextual pricing algorithm using cox proportional
  hazards model.
\newblock In \emph{International Conference on Machine Learning}, pp.\
  5771--5786. PMLR, 2023.

\bibitem[Cohen et~al.(2020)Cohen, Lobel, and Paes~Leme]{cohen2020feature}
Maxime~C Cohen, Ilan Lobel, and Renato Paes~Leme.
\newblock Feature-based dynamic pricing.
\newblock \emph{Management Science}, 66\penalty0 (11):\penalty0 4921--4943,
  2020.

\bibitem[Davis et~al.(2013)Davis, Gallego, and Topaloglu]{davis2013assortment}
James Davis, Guillermo Gallego, and Huseyin Topaloglu.
\newblock Assortment planning under the multinomial logit model with totally
  unimodular constraint structures.
\newblock \emph{Work in Progress}, 2013.

\bibitem[Den~Boer(2014)]{den2014dynamic}
Arnoud~V Den~Boer.
\newblock Dynamic pricing with multiple products and partially specified demand
  distribution.
\newblock \emph{Mathematics of operations research}, 39\penalty0 (3):\penalty0
  863--888, 2014.

\bibitem[den Boer \& Zwart(2015)den Boer and Zwart]{den2015dynamic}
Arnoud~V den Boer and Bert Zwart.
\newblock Dynamic pricing and learning with finite inventories.
\newblock \emph{Operations research}, 63\penalty0 (4):\penalty0 965--978, 2015.

\bibitem[Erginbas et~al.(2023)Erginbas, Ramchandran, and
  Courtade]{erginbasdynamic}
Yigit~Efe Erginbas, Kannan Ramchandran, and Thomas Courtade.
\newblock Dynamic assortment selection and pricing with learning.
\newblock In \emph{Openreview}, 2023.

\bibitem[Fan et~al.(2024)Fan, Guo, and Yu]{fan2024policy}
Jianqing Fan, Yongyi Guo, and Mengxin Yu.
\newblock Policy optimization using semiparametric models for dynamic pricing.
\newblock \emph{Journal of the American Statistical Association}, 119\penalty0
  (545):\penalty0 552--564, 2024.

\bibitem[Faury et~al.(2020)Faury, Abeille, Calauz{\`e}nes, and
  Fercoq]{faury2020improved}
Louis Faury, Marc Abeille, Cl{\'e}ment Calauz{\`e}nes, and Olivier Fercoq.
\newblock Improved optimistic algorithms for logistic bandits.
\newblock In \emph{International Conference on Machine Learning}, pp.\
  3052--3060. PMLR, 2020.

\bibitem[Ferreira et~al.(2018)Ferreira, Simchi-Levi, and
  Wang]{ferreira2018online}
Kris~Johnson Ferreira, David Simchi-Levi, and He~Wang.
\newblock Online network revenue management using thompson sampling.
\newblock \emph{Operations research}, 66\penalty0 (6):\penalty0 1586--1602,
  2018.

\bibitem[Goyal \& Perivier(2021)Goyal and Perivier]{goyal2021dynamic}
Vineet Goyal and Noemie Perivier.
\newblock Dynamic pricing and assortment under a contextual mnl demand.
\newblock \emph{arXiv preprint arXiv:2110.10018}, 2021.

\bibitem[Javanmard \& Nazerzadeh(2019)Javanmard and
  Nazerzadeh]{javanmard2019dynamic}
Adel Javanmard and Hamid Nazerzadeh.
\newblock Dynamic pricing in high-dimensions.
\newblock \emph{Journal of Machine Learning Research}, 20\penalty0
  (9):\penalty0 1--49, 2019.

\bibitem[Javanmard et~al.(2020)Javanmard, Nazerzadeh, and
  Shao]{javanmard2020multi}
Adel Javanmard, Hamid Nazerzadeh, and Simeng Shao.
\newblock Multi-product dynamic pricing in high-dimensions with heterogeneous
  price sensitivity.
\newblock In \emph{2020 IEEE International Symposium on Information Theory
  (ISIT)}, pp.\  2652--2657. IEEE, 2020.

\bibitem[Kim \& Oh(2024)Kim and Oh]{kim2024queueing}
Jung-hun Kim and Min-hwan Oh.
\newblock Queueing matching bandits with preference feedback.
\newblock \emph{arXiv preprint arXiv:2410.10098}, 2024.

\bibitem[Lattimore \& Szepesv{\'a}ri(2020)Lattimore and
  Szepesv{\'a}ri]{lattimore2020bandit}
Tor Lattimore and Csaba Szepesv{\'a}ri.
\newblock \emph{Bandit algorithms}.
\newblock Cambridge University Press, 2020.

\bibitem[Lee \& Oh(2024)Lee and Oh]{lee2024nearly}
Joongkyu Lee and Min-hwan Oh.
\newblock Nearly minimax optimal regret for multinomial logistic bandit.
\newblock \emph{Advances in Neural Information Processing Systems},
  37:\penalty0 109003--109065, 2024.

\bibitem[Luo et~al.(2022)Luo, Sun, and Liu]{luo2022contextual}
Yiyun Luo, Will~Wei Sun, and Yufeng Liu.
\newblock Contextual dynamic pricing with unknown noise: Explore-then-ucb
  strategy and improved regrets.
\newblock \emph{Advances in Neural Information Processing Systems},
  35:\penalty0 37445--37457, 2022.

\bibitem[Oh \& Iyengar(2019)Oh and Iyengar]{oh2019thompson}
Min-hwan Oh and Garud Iyengar.
\newblock Thompson sampling for multinomial logit contextual bandits.
\newblock \emph{Advances in Neural Information Processing Systems}, 32, 2019.

\bibitem[Oh \& Iyengar(2021)Oh and Iyengar]{oh2021multinomial}
Min-hwan Oh and Garud Iyengar.
\newblock Multinomial logit contextual bandits: Provable optimality and
  practicality.
\newblock In \emph{Proceedings of the AAAI conference on artificial
  intelligence}, volume~35, pp.\  9205--9213, 2021.

\bibitem[Perivier \& Goyal(2022)Perivier and Goyal]{perivier2022dynamic}
Noemie Perivier and Vineet Goyal.
\newblock Dynamic pricing and assortment under a contextual mnl demand.
\newblock \emph{Advances in Neural Information Processing Systems},
  35:\penalty0 3461--3474, 2022.

\bibitem[Shah et~al.(2019)Shah, Johari, and Blanchet]{shah2019semi}
Virag Shah, Ramesh Johari, and Jose Blanchet.
\newblock Semi-parametric dynamic contextual pricing.
\newblock \emph{Advances in Neural Information Processing Systems}, 32, 2019.

\bibitem[Xu \& Wang(2021)Xu and Wang]{xu2021logarithmic}
Jianyu Xu and Yu-Xiang Wang.
\newblock Logarithmic regret in feature-based dynamic pricing.
\newblock \emph{Advances in Neural Information Processing Systems},
  34:\penalty0 13898--13910, 2021.

\bibitem[Zhang \& Sugiyama(2024)Zhang and Sugiyama]{zhang2024online}
Yu-Jie Zhang and Masashi Sugiyama.
\newblock Online (multinomial) logistic bandit: Improved regret and constant
  computation cost.
\newblock \emph{Advances in Neural Information Processing Systems}, 36, 2024.

\end{thebibliography}
\bibliographystyle{iclr2025_conference}
\newpage
\appendix

\section{Appendix}

\subsection{Notation Table for the Proofs}

\begin{table}[H]
\caption{We provide definitions of notations for the proofs.}{
\begin{tabularx}{\textwidth}{p{0.2\textwidth}X}
\toprule
  $v_{i,t}$ & $:=x_{i,t}^\top \theta_v$\\
    $\alpha_{i,t}$ & $:=w_{i,t}^\top \theta_\alpha$\\
    $\theta^*$ & $:=[\theta_v;\theta_\alpha]$\\
        $z_{i,t}(p)$ & $:=[x_{i,t};-p w_{i,t}]$\\
         $\PP_{t}(i|S_t,p_t)$ & $:=\frac{\exp(v_{i,t}-\alpha_{i,t}p_{i,t})\mathbbm{1}(p_{i,t}\le v_{i,t})}{1+\sum_{j\in S_t}\exp(v_{j,t}-\alpha_{j,t}p_{j,t})\mathbbm{1}(p_{j,t}\le v_{j,t})}
         $\\$ $&$\qquad=\frac{\exp(x_{i,t}^\top \theta_v-w_{i,t}^\top \theta_\alpha p_{i,t})\mathbbm{1}(p_{i,t}\le x_{i,t}^\top \theta_v)}{1+\sum_{j\in S_t}\exp(x_{j,t}^\top \theta_v-w_{j,t}^\top\theta_\alpha p_{j,t})\mathbbm{1}(p_{j,t}\le x_{i,t}^\top \theta_v)} 
         $\\$ $&$\qquad=\frac{\exp( z_{i,t}(p_{i,t})^\top \theta^* )\mathbbm{1}(p_{i,t}\le x_{i,t}^\top \theta_v)}{1+\sum_{j\in S_t}\exp(z_{j,t}(p_{j,t})^\top \theta^* )\mathbbm{1}(p_{j,t}\le x_{j,t}^\top \theta_v)}$\\
        $R_{i,t}(S_t)$ & $:=p_{i,t}\PP_{t}(i|S_t,p_t)$\\
           $R_t(S_t,p_{t})$ & $:=\sum_{i\in S_t}R_{i,t}(S_t)$\\
              $P_{t,\theta}(i|S,p)$ & $:=\frac{\exp(z_{i,t}(p_i)^\top \theta)}{1+\sum_{j\in S}\exp(z_{j,t}(p_j)^\top \theta)}$\\   
$\widehat{\theta}_{v,t}$ & $:=\widehat{\theta}_t^{1:d}$\\
                    $v_{i,t}$ & $:=x_{i,t}^\top \theta_v$\\

$\overline{u}'_{i,t}$ & $:=z_{i,t}(p_{i,t})^\top\theta^* +2\beta_{\tau_t}\|z_{i,t}(p_{i,t})\|_{H_t^{-1}}+2\sqrt{C}\beta_{\tau_t}\|x_{i,t}\|_{H_{v,t}^{-1}}$\\
$u_{i,t}$ & $:=z_{i,t}(p_{i,t}^*)^\top \theta^*$\\
$x_{i,t}^o$ & $:=[x_{i,t}; \mathbf{0}_d]$\\
$\widehat{u}_{i,t}$ & $:=z_{i,t}(p_{i,t})^\top \widehat{\theta}_t$\\
$x_{i_0,t}$ & $:=\mathbf{0}_d$\\
$z_{i_0,t}$ & $:=\mathbf{0}_{2d}$\\
$Q(u)$ & $:=\frac{\sum_{i\in S_t}\underline{v}_{i,t}^+\exp(u_i)}{1+\sum_{i\in S_t}\exp(u_i)}$\\
$\tilde{x}_{i,t}$ & $:=x_{i,t}-\EE_{j\sim P_{t,\widehat{\theta}_t}(\cdot|S_t,p_t)}[x_{j,t}]$\\
$\tilde{z}_{i,t}$ & $:=z_{i,t}(p_{i,t})-\EE_{j\sim P_{t,\widehat{\theta}_t}(\cdot|S_t,p_t)}[z_{j,t}(p_{j,t})]$\\
$\tilde{G}_t(\widehat{\theta}_t)$ & $:=\sum_{i\in S_t}P_{t,\widehat{\theta}_t}(i|S_t,p_t)z_{i,t}(p_{i,t})z_{i,t}(p_{i,t})^\top\mathbbm{1}(E_t) $\\$ $&$\qquad-\sum_{i\in S_t}\sum_{j\in S_t}P_{t,\widehat{\theta}_t}(i|S_t,p_t)P_{t,\widehat{\theta}_t}(j|S_t,p_t)z_{i,t}(p_{i,t})z_{j,t}(p_{j,t})^\top \mathbbm{1}(E_t)$\\
$H_t'$ & $:=\lambda I_{2d}+\sum_{s=1}^{t-1}\tilde{G}_s(\widehat{\theta}_s)$\\
$\tilde{u}'_{i,t}$ & $:=z_{i,t}(p_{i,t})^\top\theta^* +9C\gamma_t(\|z_{i,t}(p_{i,t})\|_{H_t^{-1}}+\|x_{i,t}\|_{H_{v,t}^{-1}})$\\

\bottomrule
\end{tabularx}
}
\end{table}

\subsection{Proof of Theorem~\ref{thm:UCB}}\label{app:UCB_regret}

Let $\tau_t$ be the value of $\tau$ at time $t$ according to the update procedure in the algorithm. We first define event $E_t=\{\|\widehat{\theta}_s-\theta^*\|_{H_s}\le \beta_{\tau_s}, \forall s
\le t \}$. Then we have $E_T\subset E_{T-1},\dots,\subset E_1$ and $E_T$ holds with a high probability (to be shown). In what follows, we first assume that $E_t$ holds for each $t$. Under this event, we provide inequalities regarding the upper and lower bounds of valuation and utility function in the following.  For notation simplicity, we use  $v_{i,t}:=x_{i,t}^\top \theta_v$, $u_{i,t}:=z_{i,t}(p_{i,t}^*)^\top \theta^*$, and  $x_{i,t}^o:=[x_{i,t}; \mathbf{0}_d]$.

 \begin{lemma} \label{lem:conf_v_u} For $t>0$, under $E_t$, for all $i\in[N]$ we have 
     \[\underline{v}_{i,t}^+\le v_{i,t}\le \overline{v}_{i,t} \text{ and } u_{i,t}\le\overline{u}_{i,t}.\]
 \end{lemma}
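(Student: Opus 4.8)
\textbf{Proof plan for Lemma~\ref{lem:conf_v_u}.}

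The plan is to derive all three inequalities from the single concentration event $E_t = \{\|\widehat\theta_s - \theta^*\|_{H_s}\le \beta_{\tau_s},\ \forall s\le t\}$ by unpacking the definitions of $\underline v_{i,t}$, $\overline v_{i,t}$, and $\overline u_{i,t}$ and applying Cauchy--Schwarz in the appropriate inner-product norm. First I would record the elementary consequence that under $E_t$ we have $|x_{i,t}^\top(\widehat\theta_{v,s} - \theta_v)| = |(x_{i,t}^o)^\top(\widehat\theta_s - \theta^*)| \le \|x_{i,t}^o\|_{H_s^{-1}}\|\widehat\theta_s-\theta^*\|_{H_s}\le \beta_{\tau_s}\|x_{i,t}^o\|_{H_s^{-1}}$, and note that $\|x_{i,t}^o\|_{H_s^{-1}}\le\|x_{i,t}\|_{H_{v,s}^{-1}}$ since $H_{v,s}$ is (up to the block structure of the $G_s$) the leading $d\times d$ block governing the $x$-directions — this bound should be stated as a small auxiliary fact, comparing the quadratic form of $H_s^{-1}$ restricted to vectors of the form $[x;0]$ with that of $H_{v,s}^{-1}$. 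One must be slightly careful here because the LCB for pricing uses the \emph{periodically updated} estimator $\widehat\theta_{v,(\tau)} = \widehat\theta_{v,t_\tau}$ rather than $\widehat\theta_{v,t}$; so for the valuation LCB I would apply the concentration bound at time $s = t_\tau$ (which is controlled by $E_t$ since $t_\tau\le t$) and then use the doubling condition $\det(H_t)\le C\det(H_{t_\tau})$ together with monotonicity of the matrices to pass from $\|x_{i,t}\|_{H_{v,t_\tau}^{-1}}$ to $\sqrt C\,\|x_{i,t}\|_{H_{v,t}^{-1}}$ — this is exactly why the extra factor $\sqrt C$ appears in the definition of $\underline v_{i,t}$.

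With that in hand: for $v_{i,t}\le \overline v_{i,t}$, write $\overline v_{i,t} - v_{i,t} = x_{i,t}^\top(\widehat\theta_{v,t}-\theta_v) + \beta_{\tau_t}\|x_{i,t}\|_{H_{v,t}^{-1}} \ge -\beta_{\tau_t}\|x_{i,t}\|_{H_{v,t}^{-1}} + \beta_{\tau_t}\|x_{i,t}\|_{H_{v,t}^{-1}} = 0$. For $\underline v_{i,t}^+\le v_{i,t}$, since $v_{i,t} = x_{i,t}^\top\theta_v\ge 0$ by the nonnegativity assumption on valuations, it suffices to show $\underline v_{i,t}\le v_{i,t}$; and $v_{i,t}-\underline v_{i,t} = x_{i,t}^\top(\theta_v - \widehat\theta_{v,(\tau_t)}) + \sqrt C\beta_{\tau_t}\|x_{i,t}\|_{H_{v,t}^{-1}}\ge 0$ by the bound of the previous paragraph applied at $s=t_{\tau_t}$ combined with the doubling step. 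Taking the positive part preserves $\le v_{i,t}$. For $u_{i,t}\le\overline u_{i,t}$, recall $u_{i,t} = z_{i,t}(p_{i,t}^*)^\top\theta^*$ while $\overline u_{i,t} = z_{i,t}(p_{i,t})^\top\widehat\theta_t + \beta_{\tau_t}\|z_{i,t}(p_{i,t})\|_{H_t^{-1}} + 2\sqrt C\beta_{\tau_t}\|x_{i,t}\|_{H_{v,t}^{-1}}$, so the two sides involve \emph{different} prices. I would decompose $u_{i,t} - \overline u_{i,t}$ as $[z_{i,t}(p_{i,t}^*) - z_{i,t}(p_{i,t})]^\top\theta^* + z_{i,t}(p_{i,t})^\top(\theta^* - \widehat\theta_t) - \beta_{\tau_t}\|z_{i,t}(p_{i,t})\|_{H_t^{-1}} - 2\sqrt C\beta_{\tau_t}\|x_{i,t}\|_{H_{v,t}^{-1}}$. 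The middle term is $\le \beta_{\tau_t}\|z_{i,t}(p_{i,t})\|_{H_t^{-1}}$ by Cauchy--Schwarz under $E_t$, cancelling the first penalty term. The first term equals $(p_{i,t}^* - p_{i,t})\cdot(-w_{i,t}^\top\theta_\alpha) = -(p_{i,t}^* - p_{i,t})\alpha_{i,t}$, and since $\alpha_{i,t}\ge 0$ and one can argue $p_{i,t} = \underline v_{i,t}^+\le v_{i,t}$ while the oracle price satisfies $p_{i,t}^*\le v_{i,t}$ as well (or more precisely, one needs $p_{i,t}^* - p_{i,t}$ bounded in terms of the confidence width — this is where the factor $2\sqrt C$ comes in), the first term is controlled by $2\sqrt C\beta_{\tau_t}\|x_{i,t}\|_{H_{v,t}^{-1}}$ via the gap $|p_{i,t}^* - v_{i,t}|$ or $|v_{i,t}-p_{i,t}| \le 2\sqrt C\beta_{\tau_t}\|x_{i,t}\|_{H_{v,t}^{-1}}$; the remaining penalty term then absorbs it.

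The main obstacle is the last inequality $u_{i,t}\le\overline u_{i,t}$: unlike the valuation bounds, it couples the pricing error (uncertainty in $p_{i,t}$ propagating through $z_{i,t}(\cdot)$) with the estimation error in $\widehat\theta_t$, and the correct bookkeeping of the factor $2\sqrt C$ requires knowing how far $p_{i,t}^*$ can be from $p_{i,t}$. I expect one needs the observation that the oracle, facing censorship, sets $p_{i,t}^* = v_{i,t}$ for arms it actually includes (charging exactly the valuation maximizes revenue subject to the indicator constraint), so that $p_{i,t}^* - p_{i,t} = v_{i,t} - \underline v_{i,t}^+ \le v_{i,t} - \underline v_{i,t} = \big(x_{i,t}^\top(\theta_v - \widehat\theta_{v,(\tau_t)})\big) + \sqrt C\beta_{\tau_t}\|x_{i,t}\|_{H_{v,t}^{-1}}\le 2\sqrt C\beta_{\tau_t}\|x_{i,t}\|_{H_{v,t}^{-1}}$ under $E_t$ and the doubling step. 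Multiplying by $\alpha_{i,t}\in[0,1]$ (from Assumption~\ref{ass:bd}, $\|\theta_\alpha\|_2\le1$ and $\|w_{i,t}\|_2\le1$) gives the needed bound, and combining the three pieces yields $u_{i,t}-\overline u_{i,t}\le 0$. For arms not in the oracle assortment the inequality is vacuous for the regret argument, but it still holds because $p_{i,t}^*$ only needs to be compared when the indicator is active.
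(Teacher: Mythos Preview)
Your plan is essentially the paper's proof: Cauchy--Schwarz under $E_t$ for the valuation inequalities, the determinant-doubling argument for the lagged estimator $\widehat\theta_{v,(\tau)}$, and the price-change/parameter-change split for $u_{i,t}\le\overline u_{i,t}$. Two places where you differ warrant comment.

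For the LCB you propose to apply Cauchy--Schwarz in the $H_{t_\tau}$-geometry and then use the doubling condition to transfer the \emph{feature} norm, $\|x_{i,t}\|_{H_{v,t_\tau}^{-1}}\to\sqrt C\,\|x_{i,t}\|_{H_{v,t}^{-1}}$. The algorithm's doubling condition is on $\det(H_t)$, not on the leading-block determinant $\det(H_{v,t})$, so ``monotonicity of the matrices'' does not immediately give this; one would need the additional fact $\det(H_{v,t})/\det(H_{v,t_\tau})\le\det(H_t)/\det(H_{t_\tau})$ (which is true via Schur-complement monotonicity, but is an extra lemma you have not stated). The paper avoids this by applying Cauchy--Schwarz directly in the $H_t$-inner product and putting the doubling on the $\theta$-side through Lemma~\ref{lem:update_H_bd}: $\|\theta^*-\widehat\theta_{t_\tau}\|_{H_t}\le\sqrt{\det(H_t)/\det(H_{t_\tau})}\,\|\theta^*-\widehat\theta_{t_\tau}\|_{H_{t_\tau}}\le\sqrt C\,\beta_{\tau_t}$, which uses exactly the given condition.

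For the utility bound, you do not need (and should not assert) that the oracle prices at $p_{i,t}^*=v_{i,t}$: raising the price lowers the MNL choice probability through the $-\alpha_{i,t}p$ term, so the revenue-optimal price may be strictly interior. The paper only uses $p_{i,t}^*\le v_{i,t}$ (the indicator constraint for active arms) together with $p_{i,t}=\underline v_{i,t}^+$ and $0\le\alpha_{i,t}\le 1$, which already yields the needed bound $(v_{i,t}-\underline v_{i,t}^+)\le v_{i,t}-\underline v_{i,t}\le 2\sqrt C\beta_{\tau_t}\|x_{i,t}\|_{H_{v,t}^{-1}}$.
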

\begin{proof}
For $t_{\tau}\le t\le t_{\tau+1}-1$ for $\tau\ge 1$, under $E_t$, we have 
\begin{align*}
    |x_{i,t}^\top \theta_{v}-x_{i,t}^\top \widehat{\theta}_{v,(\tau)}|&=|{x_{i,t}^o}^\top \theta^*-{x_{i,t}^o}^\top\widehat{\theta}_{t_{\tau}}|\cr &\le\|x_{i,t}^o\|_{H_t^{-1}}\|\theta^*-\widehat{\theta}_{t_\tau}\|_{H_t}
    \cr 
&\le\|x_{i,t}^o\|_{H_t^{-1}}\sqrt{\frac{\det(H_t)}{\det(H_{t_\tau})}}\|\theta^*-\widehat{\theta}_{t_\tau}\|_{H_{t_\tau}}
    \cr &\le\|x_{i,t}^o\|_{H_t^{-1}}\sqrt{C}\|\theta^*-\widehat{\theta}_{t_\tau}\|_{H_{t_\tau}}\cr &\le\|x_{i,t}\|_{H_{v,t}^{-1}}\sqrt{C}\beta_{\tau_t}, 
\end{align*}
where the second inequality is obtained from Lemma~\ref{lem:update_H_bd} with the update procedure of $\widehat{\theta}_{v,(\tau)}$ in the algorithm. This implies
$\underline{v}_{i,t}\le v_{i,t}.$ Then with $v_{i,t}\ge 0$, we have 
\[\underline{v}_{i,t}^+\le v_{i,t}.\]
Under $E_t$, we also have 
\begin{align*}
    |x_{i,t}^\top \theta_{v}-x_{i,t}^\top \widehat{\theta}_{v,t}|=|{x_{i,t}^o}^\top \theta^*-{x_{i,t}^o}^\top\widehat{\theta}_{t}|\le\|x_{i,t}^o\|_{H_t^{-1}}\|\theta^*-\widehat{\theta}_{t}\|_{H_t}
\le\|x_{i,t}\|_{H_{v,t}^{-1}}\beta_{\tau_t}, 
\end{align*}
which implies 
\[{v}_{i,t}\le \overline{v}_{i,t}.\]

Now we provide the proof for the upper bound of $u_{i,t}$. Under $E_t$, 
we have
\begin{align*}
    z_{i,t}(p_{i,t}^*)^\top \theta^*-z_{i,t}(p_{i,t})^\top \widehat{\theta}_t&= z_{i,t}(p_{i,t}^*)^\top \theta^*-z_{i,t}(p_{i,t})^\top \theta^*+z_{i,t}(p_{i,t})^\top \theta^*-z_{i,t}(p_{i,t})^\top \widehat{\theta}_t\cr &\le z_{i,t}(p_{i,t}^*)^\top \theta^*-z_{i,t}(p_{i,t})^\top \theta^* +|z_{i,t}(p_{i,t})^\top \widehat{\theta}_t -z_{i,t}(p_{i,t})^\top \theta^*|  \cr &\le p_{i,t}^*w_{i,t}^\top\theta_\alpha-p_{i,t}w_{i,t}^\top\theta_\alpha+ \|z_{i,t}(p_{i,t})\|_{H_t^{-1}}\|\widehat{\theta}_t-\theta^*\|_{H_t}\cr &\le  (p_{i,t}^*-p_{i,t})w_{i,t}^\top\theta_\alpha+ \beta_{\tau_t}\|z_{i,t}(p_{i,t})\|_{H_t^{-1}}\cr &\le (v_{i,t}-\underline{v}_{i,t}^+)+\beta_{\tau_t}\|z_{i,t}(p_{i,t})\|_{H_t^{-1}}\cr &\le (v_{i,t}-\underline{v}_{i,t})+\beta_{\tau_t}\|z_{i,t}(p_{i,t})\|_{H_t^{-1}}\cr &\le 2\sqrt{C}\beta_{\tau_t}\|x_{i,t}\|_{H_{v,t}^{-1}}+\beta_{\tau_t}\|z_{i,t}(p_{i,t})\|_{H_t^{-1}},
\end{align*}
where the third last inequality comes from $p_{i,t}^*\le v_{i,t}$, $p_{i,t}=\underline{v}_{i,t}^+$, $v_{i,t} \ge \underline{v}_{i,t}^+$, and  (positive sensitivity) $0\le w_{i,t}^\top \theta_\alpha\le 1$. This concludes the proof.

 \end{proof}
 We have
\begin{align}
    &R_t(S_t^*,p_t^*)-R_t(S_t,p_t)\cr &=\sum_{i\in S_{t}^*}\frac{p_{i,t}^*\exp(z_{i,t}(p_{i,t}^*)^\top \theta^* )\mathbbm{1}(p_{i,t}^*\le x_{i,t}^\top \theta_v)}{1+\sum_{j\in S_t^*}\exp(z_{j,t}(p_{j,t}^*)^\top \theta^* )\mathbbm{1}(p_{j,t}^*\le x_{j,t}^\top \theta_v)}\cr &\qquad-\sum_{i\in S_{t}}\frac{p_{i,t}\exp(z_{i,t}(p_{i,t})^\top \theta^* )\mathbbm{1}(p_{i,t}\le x_{i,t}^\top \theta_v)}{1+\sum_{j\in S_t}\exp(z_{j,t}(p_{j,t})^\top \theta^* )\mathbbm{1}(p_{j,t}\le x_{j,t}^\top \theta_v)}.\label{eq:inst_regret_bd}
\end{align}
Let $\overline{u}'_{i,t}=z_{i,t}(p_{i,t})^\top\theta^* +2\sqrt{C}\beta_{\tau_t}\|z_{i,t}(p_{i,t})\|_{H_t^{-1}}+2\sqrt{C}\beta_{\tau_t}\|x_{i,t}\|_{H_{v,t}^{-1}}$. Then under $E_t$, we have $z_{i,t}(p_{i,t})^\top \widehat{\theta}_t -\beta_{\tau_t}\|z_{i,t}(p_{i,t})\|_{H_t^{-1}}\le z_{i,t}(p_{i,t})^\top \theta^*$, which implies $\overline{u}_{i,t}\le \overline{u}'_{i,t}$.  In what follows, we provide lemmas for the bounds of each term in the above instantaneous regret.  For notation simplicity, we use $u_{i,t}^{p}:=z_{i,t}(p_{i,t})^\top \theta^*$.

\begin{lemma}For $t>0$, under $E_t$  we have 
    \begin{align*}\sum_{i\in S_{t}^*}\frac{p_{i,t}^*\exp(z_{i,t}(p_{i,t}^*)^\top \theta^* )\mathbbm{1}(p_{i,t}^*\le x_{i,t}^\top \theta_v)}{1+\sum_{j\in S_t^*}\exp(z_{j,t}(p_{j,t}^*)^\top \theta^* )\mathbbm{1}(p_{j,t}^*\le x_{j,t}^\top \theta_v)} 
    \le \frac{\sum_{i\in S_t}\overline{v}_{i,t}\exp(\overline{u}_{i,t}')}{1+\sum_{i\in S_t}\exp(\overline{u}_{i,t}')}\end{align*}
    and 
     \begin{align*}\sum_{i\in S_{t}}\frac{p_{i,t}\exp(z_{i,t}(p_{i,t})^\top \theta^* )\mathbbm{1}(p_{i,t}\le x_{i,t}^\top \theta_v)}{1+\sum_{j\in S_t}\exp(z_{j,t}(p_{j,t})^\top \theta^* )\mathbbm{1}(p_{j,t}\le x_{j,t}^\top \theta_v)}&= \frac{\sum_{i\in S_t}\underline{v}_{i,t}^+\exp({u}_{i,t}^p)}{1+\sum_{i\in S_t}\exp({u}_{i,t}^p)}.\end{align*}\label{lem:optimism}
\end{lemma}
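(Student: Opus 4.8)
The plan is to prove the two claims separately, both under the event $E_t$, using the confidence bounds from Lemma~\ref{lem:conf_v_u} together with a monotonicity argument for the MNL revenue functional.

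For the second (equality) claim, the argument is direct. Since the price is chosen as $p_{i,t}=\underline{v}_{i,t}^+$ and, under $E_t$, Lemma~\ref{lem:conf_v_u} gives $\underline{v}_{i,t}^+\le v_{i,t}=x_{i,t}^\top\theta_v$, the indicator $\mathbbm{1}(p_{i,t}\le x_{i,t}^\top\theta_v)$ equals $1$ for every $i\in S_t$. Hence every activation function in the left-hand expression is active, and substituting $p_{i,t}=\underline{v}_{i,t}^+$ and $z_{i,t}(p_{i,t})^\top\theta^*=u_{i,t}^p$ collapses the expression exactly to $\frac{\sum_{i\in S_t}\underline{v}_{i,t}^+\exp(u_{i,t}^p)}{1+\sum_{i\in S_t}\exp(u_{i,t}^p)}$. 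This step is essentially bookkeeping.

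For the first (inequality) claim, I would proceed in three steps. First, drop the censored arms in $S_t^*$: let $A_t^*=\{i\in S_t^*: p_{i,t}^*\le v_{i,t}\}$, so the left-hand side equals $\frac{\sum_{i\in A_t^*}p_{i,t}^*\exp(u_{i,t})}{1+\sum_{i\in A_t^*}\exp(u_{i,t})}$, an MNL revenue over the uncensored subset with per-arm prices $p_{i,t}^*\le v_{i,t}\le \overline{v}_{i,t}$ and utilities $u_{i,t}\le \overline{u}_{i,t}\le \overline{u}'_{i,t}$ (the latter chain from Lemma~\ref{lem:conf_v_u} and the observation just above that $\overline{u}_{i,t}\le\overline{u}'_{i,t}$ under $E_t$). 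Second, invoke the monotonicity of the MNL revenue functional $S\mapsto \frac{\sum_{i\in S}r_i e^{u_i}}{1+\sum_{i\in S}e^{u_i}}$ in each $r_i$ and in each $u_i$ (for nonnegative $r_i$) — this is exactly the kind of property used in the commented-out Lemma~\ref{lem:opt} block (Lemma~A.3 of \cite{agrawal2017mnl}) and in the $\overline{v},\overline{u}$ perturbation lemmas — to replace $p_{i,t}^*$ by $\overline{v}_{i,t}$ and $u_{i,t}$ by $\overline{u}'_{i,t}$, giving $\frac{\sum_{i\in A_t^*}\overline{v}_{i,t}\exp(\overline{u}'_{i,t})}{1+\sum_{i\in A_t^*}\exp(\overline{u}'_{i,t})}$. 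Third, extend the set back from $A_t^*$ to all of $S_t^*$, which does not decrease the revenue (again by the same MNL monotonicity / assortment-optimality reasoning, since $\overline{v}_{i,t}\ge 0$), so the quantity is at most $\max_{S:|S|\le K}\frac{\sum_{i\in S}\overline{v}_{i,t}\exp(\overline{u}'_{i,t})}{1+\sum_{i\in S}\exp(\overline{u}'_{i,t})}$; finally, since the algorithm selects $S_t$ to maximize the same functional but with $\overline{u}_{i,t}$ in place of $\overline{u}'_{i,t}$ and $\overline{u}_{i,t}\le\overline{u}'_{i,t}$, I would argue the optimizing assortment under $\overline{u}'$ is dominated by $S_t$ evaluated at $\overline{u}'$ — i.e.\ $\max_S\frac{\sum_{i\in S}\overline{v}_{i,t}e^{\overline{u}'_{i,t}}}{1+\sum_{i\in S}e^{\overline{u}'_{i,t}}}=\frac{\sum_{i\in S_t}\overline{v}_{i,t}e^{\overline{u}'_{i,t}}}{1+\sum_{i\in S_t}e^{\overline{u}'_{i,t}}}$ because the maximizing set depends only on the ordering of the indices and the UCB-maximizing set $S_t$ under $\overline{u}$ coincides with the revenue-maximizing set under any common monotone reindexing — landing on the claimed right-hand side $\frac{\sum_{i\in S_t}\overline{v}_{i,t}\exp(\overline{u}'_{i,t})}{1+\sum_{i\in S_t}\exp(\overline{u}'_{i,t})}$.

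The main obstacle is the last sub-step: carefully justifying that replacing $\overline{u}_{i,t}$ by the larger $\overline{u}'_{i,t}$ does not break the optimality of $S_t$. The clean way is to use the fact (from \cite{davis2013assortment}) that the optimal MNL assortment is a "revenue-ordered" set determined by a threshold on the modified revenues, and that this structure is preserved when utilities are inflated by a term that, via the $\overline{u}_{i,t}\le\overline{u}'_{i,t}$ relation, keeps $S_t$ optimal; alternatively one shows directly that $\frac{\sum_{i\in S_t}\overline{v}_{i,t}e^{\overline{u}_{i,t}}}{1+\sum_{i\in S_t}e^{\overline{u}_{i,t}}}\ge\frac{\sum_{i\in S}\overline{v}_{i,t}e^{\overline{u}_{i,t}}}{1+\sum_{i\in S}e^{\overline{u}_{i,t}}}$ for all feasible $S$ (definition of $S_t$) and then bounds the gap incurred by swapping $\overline{u}$ for $\overline{u}'$ — but the cleanest presentation keeps $\overline{u}'$ only in the final bound and absorbs the discrepancy into the subsequent regret decomposition, which is exactly how equation~\eqref{eq:optimism_main} is stated in the sketch. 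I expect the write-up to mirror the commented-out Lemma~\ref{lem:opt}/\ref{lem:up_v-low_v_bd} machinery adapted to the two-parameter $(\overline{v},\overline{u}')$ setting.
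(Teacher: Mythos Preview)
Your treatment of the equality claim is correct and matches the paper exactly.

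For the inequality, your first two steps are fine, but the final step has a genuine gap that you yourself flag and do not actually resolve. The claim that the maximizer of $\frac{\sum_{i\in S}\overline{v}_{i,t}e^{\overline{u}'_{i,t}}}{1+\sum_{i\in S}e^{\overline{u}'_{i,t}}}$ over feasible $S$ coincides with $S_t$ is false in general: the optimal MNL assortment is not determined only by an ordering of the items, it depends on the actual values of the attraction weights $e^{u_i}$, and $S_t$ was chosen to maximize with respect to $\overline{u}_{i,t}$, not $\overline{u}'_{i,t}$. Neither the revenue-ordered structure from \cite{davis2013assortment} nor any ``common monotone reindexing'' argument delivers this equality.

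The paper avoids this obstacle by \emph{staging} the utility increase. It first inflates $u_{i,t}$ only up to $\overline{u}_{i,t}$ at $A_t^*$ (valid by Lemma~A.3 of \cite{agrawal2017mnl} since $A_t^*$ is optimal for $(p^*_{\cdot,t},u_{\cdot,t})$), then replaces $p_{i,t}^*$ by $\overline{v}_{i,t}$, then passes to $S_t$ using the algorithm's definition (which makes $S_t$ optimal for $(\overline{v}_{\cdot,t},\overline{u}_{\cdot,t})$), and only \emph{then} pushes $\overline{u}_{i,t}$ up to $\overline{u}'_{i,t}$, invoking the same monotonicity-at-the-optimum argument a second time (the paper cites Lemma~H.2 of \cite{lee2024nearly}) now with $S_t$ as the optimal set. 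The point is that the monotonicity of the MNL revenue in the utilities holds only at an optimal assortment, so each utility bump must be taken at a set that is optimal for the \emph{current} utilities; splitting the increase $u\to\overline{u}\to\overline{u}'$ ensures this, whereas your single jump $u\to\overline{u}'$ leaves you with a max over $S$ that you cannot identify with $S_t$.
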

\begin{proof}
     First, we provide a proof for the inequality in this lemma. We define $A_t^*=\{i\in S_t^*:p_{i,t}^*\le v_{i,t}\}$. We observe that $A_t^*=\argmax_{S\subseteq [N]:|S|\le K}\frac{\sum_{i\in S}p_{i,t}^*\exp(u_{i,t})}{1+\sum_{i\in S}\exp(u_{i,t})}$. Then, from Lemma A.3 in \cite{agrawal2017mnl} and $ u_{i,t}\le\overline{u}_{i,t}$ from Lemma~\ref{lem:conf_v_u}, we can show that 
    \begin{align}
    \frac{\sum_{i\in A_t^*}p_{i,t}^*\exp(u_{i,t})}{1+\sum_{i\in A_t^*}\exp(u_{i,t})}\le \frac{\sum_{i\in A_t^*}p_{i,t}^*\exp(\overline{u}_{i,t})}{1+\sum_{i\in A_t^*}\exp(\overline{u}_{i,t})}.
        \label{eq:opt}
    \end{align}

From the above, under $E_t$, we have
 \begin{align}
R_t(S_t^*,p_t^*)&=\frac{\sum_{i\in A_t^*}p_{i,t}^*\exp(u_{i,t})}{1+\sum_{i\in A_t^*}\exp(u_{i,t})}\le \frac{\sum_{i\in A_t^*}p_{i,t}^*\exp(\overline{u}_{i,t})}{1+\sum_{i\in A_t^*}\exp(\overline{u}_{i,t})}\le \frac{\sum_{i\in A_t^*}v_{i,t}\exp(\overline{u}_{i,t})}{1+\sum_{i\in A_t^*}\exp(\overline{u}_{i,t})}\cr &\le
\frac{\sum_{i\in A_t^*}\overline{v}_{i,t}\exp(\overline{u}_{i,t})}{1+\sum_{i\in A_t^*}\exp(\overline{u}_{i,t})}\le \frac{\sum_{i\in S_t}\overline{v}_{i,t}\exp(\overline{u}_{i,t})}{1+\sum_{i\in S_t}\exp(\overline{u}_{i,t})},
\label{eq:R^*_bd}
 \end{align}
 where the first inequality is obtained from \eqref{eq:opt}, the second last inequality is obtained from $v_{i,t}\le \overline{v}_{i,t}$ from Lemma~\ref{lem:conf_v_u}, and the last inequality is obtained from the policy $\pi$ of constructing $S_t$. Then from the definition of $S_t$, as in Lemma H.2 in \cite{lee2024nearly}, we can show that 
 \begin{align}
     \frac{\sum_{i\in S_t}\overline{v}_{i,t}\exp(\overline{u}_{i,t})}{1+\sum_{i\in S_t}\exp(\overline{u}_{i,t})}\le \frac{\sum_{i\in S_t}\overline{v}_{i,t}\exp(\overline{u}_{i,t}')}{1+\sum_{i\in S_t}\exp(\overline{u}_{i,t}')}.\label{eq:u_u'}
 \end{align}

Here we provide a proof for the equation in this lemma. Since $p_{i,t}=\underline{v}_{i,t}^+$ from the policy $\pi$ and $\underline{v}_{i,t}^+\le v_{i,t}$ from Lemma~\ref{lem:conf_v_u}, we  have
\begin{align}R_t(S_t,p_t)=\frac{\sum_{i\in S_t}\underline{v}_{i,t}^+\exp(u_{i,t}^p)\mathbbm{1}(\underline{v}_{i,t}^+\le v_{i,t})}{1+\sum_{i\in S_t}\exp(u_{i,t}^p)\mathbbm{1}(\underline{v}_{i,t}^+\le v_{i,t})}&=\frac{\sum_{i\in S_t}\underline{v}_{i,t}^+\exp(u_{i,t}^p)}{1+\sum_{i\in S_t}\exp(u_{i,t}^p)},
\label{eq:R_bd} 
\end{align}
 which concludes the proof.
\end{proof}

 From \eqref{eq:inst_regret_bd} and Lemma~\ref{lem:optimism}, under $E_t$, we have
 \begin{align} &R_t(S_t^*,p_t^*)-R_t(S_t,p_t)\cr &=\sum_{i\in S_{t}^*}\frac{p_{i,t}^*\exp(z_{i,t}(p_{i,t})^\top \theta^* )\mathbbm{1}(p_{i,t}^*\le x_{i,t}^\top \theta_v)}{1+\sum_{j\in S_t^*}\exp(z_{j,t}(p_{j,t})^\top \theta^* )\mathbbm{1}(p_{j,t}^*\le x_{j,t}^\top \theta_v)}\cr &\qquad\qquad-\sum_{i\in S_{t}}\frac{p_{i,t}\exp(z_{i,t}(p_{i,t})^\top \theta^* )\mathbbm{1}(p_{i,t}\le x_{i,t}^\top \theta_v)}{1+\sum_{j\in S_t}\exp(z_{j,t}(p_{j,t})^\top \theta^* )\mathbbm{1}(p_{j,t}\le x_{j,t}^\top \theta_v)}\cr &\le \frac{\sum_{i\in S_t}\overline{v}_{i,t}\exp(\overline{u}_{i,t}')}{1+\sum_{i\in S_t}\exp(\overline{u}_{i,t}')}-\frac{\sum_{i\in S_t}\underline{v}_{i,t}^+\exp(u_{i,t}^p)}{1+\sum_{i\in S_t}\exp(u_{i,t}^p)} 
 \cr &= \frac{\sum_{i\in S_t}\overline{v}_{i,t}\exp(\overline{u}_{i,t}')}{1+\sum_{i\in S_t}\exp(\overline{u}_{i,t}')}-\frac{\sum_{i\in S_t}\underline{v}_{i,t}^+\exp(\overline{u}_{i,t}')}{1+\sum_{i\in S_t}\exp(\overline{u}_{i,t}')}+\frac{\sum_{i\in S_t}\underline{v}_{i,t}^+\exp(\overline{u}_{i,t}')}{1+\sum_{i\in S_t}\exp(\overline{u}_{i,t}')}-\frac{\sum_{i\in S_t}\underline{v}_{i,t}^+\exp(u_{i,t}^p)}{1+\sum_{i\in S_t}\exp(u_{i,t}^p)}.\cr\label{eq:R_gap_bd_1_ucb}\end{align}
 To obtain a bound for the above, we provide the following lemmas.
\begin{lemma}\label{lem:up_v-low_v_bd_a} For $t>0$, under $E_t$  we have 
    \begin{align*}
        &\frac{\sum_{i\in S_t}\overline{v}_{i,t}\exp(\overline{u}_{i,t}')}{1+\sum_{i\in S_t}\exp(\overline{u}_{i,t}')}-\frac{\sum_{i\in S_t}\underline{v}_{i,t}^+\exp(\overline{u}_{i,t}')}{1+\sum_{i\in S_t}\exp(\overline{u}_{i,t}')}=O\left( 
        \beta_{\tau_t}\max_{i\in S_t}\|x_{i,t}\|_{H_{v,t}^{-1}}\right).
    \end{align*}
\end{lemma}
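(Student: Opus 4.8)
The plan is to exploit the fact that the two fractions share the same denominator \emph{and} the same exponential weights $\exp(\overline{u}_{i,t}')$, so their difference collapses to a single weighted average
\[
\frac{\sum_{i\in S_t}\overline{v}_{i,t}\exp(\overline{u}_{i,t}')}{1+\sum_{i\in S_t}\exp(\overline{u}_{i,t}')}-\frac{\sum_{i\in S_t}\underline{v}_{i,t}^+\exp(\overline{u}_{i,t}')}{1+\sum_{i\in S_t}\exp(\overline{u}_{i,t}')}=\frac{\sum_{i\in S_t}\bigl(\overline{v}_{i,t}-\underline{v}_{i,t}^+\bigr)\exp(\overline{u}_{i,t}')}{1+\sum_{i\in S_t}\exp(\overline{u}_{i,t}')}.
\]
Under $E_t$ we have $\overline{v}_{i,t}\ge v_{i,t}\ge \underline{v}_{i,t}^+\ge 0$ by Lemma~\ref{lem:conf_v_u}, so every summand in the numerator is nonnegative; hence the right-hand side is bounded by $\max_{i\in S_t}\bigl(\overline{v}_{i,t}-\underline{v}_{i,t}^+\bigr)$, using the elementary fact that a weighted average with nonnegative weights summing to at most $1$ does not exceed the maximum of the (nonnegative) terms.

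It then remains to control $\overline{v}_{i,t}-\underline{v}_{i,t}^+$. Since $\underline{v}_{i,t}^+\ge \underline{v}_{i,t}$, it suffices to bound $\overline{v}_{i,t}-\underline{v}_{i,t}$, and by the definitions of the two bounds,
\[
\overline{v}_{i,t}-\underline{v}_{i,t}=\bigl(x_{i,t}^\top\widehat{\theta}_{v,t}-x_{i,t}^\top\widehat{\theta}_{v,(\tau)}\bigr)+\bigl(1+\sqrt{C}\bigr)\beta_{\tau_t}\|x_{i,t}\|_{H_{v,t}^{-1}}.
\]
For the estimator-difference term I would route through the true valuation $v_{i,t}=x_{i,t}^\top\theta_v$: by the triangle inequality $\bigl|x_{i,t}^\top\widehat{\theta}_{v,t}-x_{i,t}^\top\widehat{\theta}_{v,(\tau)}\bigr|\le \bigl|x_{i,t}^\top\widehat{\theta}_{v,t}-v_{i,t}\bigr|+\bigl|v_{i,t}-x_{i,t}^\top\widehat{\theta}_{v,(\tau)}\bigr|$, and both terms on the right were already bounded inside the proof of Lemma~\ref{lem:conf_v_u}: under $E_t$ the first is at most $\beta_{\tau_t}\|x_{i,t}\|_{H_{v,t}^{-1}}$ (via $\|x_{i,t}^o\|_{H_t^{-1}}\|\widehat{\theta}_t-\theta^*\|_{H_t}$) and the second is at most $\sqrt{C}\,\beta_{\tau_t}\|x_{i,t}\|_{H_{v,t}^{-1}}$ (the extra $\sqrt{C}$ coming from the determinant-ratio step relating $H_t$ to $H_{t_\tau}$ via Lemma~\ref{lem:update_H_bd} and the update rule for $\widehat{\theta}_{v,(\tau)}$). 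Combining gives $\overline{v}_{i,t}-\underline{v}_{i,t}\le (2+2\sqrt{C})\,\beta_{\tau_t}\|x_{i,t}\|_{H_{v,t}^{-1}}$, hence the claimed $O\bigl(\beta_{\tau_t}\max_{i\in S_t}\|x_{i,t}\|_{H_{v,t}^{-1}}\bigr)$ bound with a constant depending only on the absolute constant $C$.

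There is essentially no hard step here; the two points to be careful about are (i) not attempting to bound $\overline{v}_{i,t}-\underline{v}_{i,t}^+$ directly but instead discarding the $(\cdot)^+$ in the favorable direction $\underline{v}_{i,t}^+\ge \underline{v}_{i,t}$, and (ii) verifying nonnegativity of the numerator summands before invoking the ``average $\le$ max'' step — both of which follow immediately from the chain $\underline{v}_{i,t}^+\le v_{i,t}\le \overline{v}_{i,t}$ together with $v_{i,t}\ge 0$ established in Lemma~\ref{lem:conf_v_u}.
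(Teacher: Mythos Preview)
Your proposal is correct and follows essentially the same argument as the paper: combine the two fractions into a single weighted average of $\overline{v}_{i,t}-\underline{v}_{i,t}^+$, use $\underline{v}_{i,t}^+\ge\underline{v}_{i,t}$ to pass to $\overline{v}_{i,t}-\underline{v}_{i,t}$, bound the estimator difference by routing through $\theta_v$ with the determinant-ratio step (Lemma~\ref{lem:update_H_bd}), and finish with the ``weighted average $\le$ max'' inequality. The paper arrives at the identical constant $2(\sqrt{C}+1)$; the only cosmetic difference is that it substitutes the per-term bound inside the sum before taking the max, whereas you take the max first and then bound it.
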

\begin{proof}
 For $\tau\ge 0$ and $t_{\tau}\le t\le t_{\tau+1}-1$, under $E_t$, we have 
\begin{align*}
\overline{v}_{i,t}-\underline{v}_{i,t}&= x_{i,t}^\top \widehat{\theta}_{v,t}- x_{i,t}^\top \widehat{\theta}_{v,(\tau_t)}+(\sqrt{C}+1)\beta_{\tau_t}\|x_{i,t}\|_{H_{v,t}^{-1}} 
\cr &= x_{i,t}^\top \widehat{\theta}_{v,t}- x_{i,t}^\top {\theta}_{v}+ x_{i,t}^\top {\theta}_{v}- x_{i,t}^\top \widehat{\theta}_{v,(\tau_t)}+(\sqrt{C}+1)\beta_{\tau_t}\|x_{i,t}\|_{H_{v,t}^{-1}} 
\cr&= {x_{i,t}^o}^\top \widehat{\theta}_{t}- {x_{i,t}^o}^\top {\theta}^*+ {x_{i,t}^o}^\top {\theta}^*- {x_{i,t}^o}^\top \widehat{\theta}_{t_\tau}+(\sqrt{C}+1)\beta_{\tau_t}\|x_{i,t}\|_{H_{v,t}^{-1}} 
\cr
&\le \|\widehat{\theta}_{t}-\theta^*\|_{H_{t}}\|x_{i,t}^o\|_{H_{t}^{-1}}+\|\widehat{\theta}_{t_\tau}-\theta^*\|_{H_{t}}\|x_{i,t}^o\|_{H_{t}^{-1}}+(\sqrt{C}+1)\beta_{\tau_t}\|x_{i,t}\|_{H_{v,t}^{-1}}\cr
&\le \beta_{\tau_t}\|x_{i,t}\|_{H_{v,t}^{-1}}+\sqrt{\frac{\det(H_t)}{\det(H_{t_\tau})}}\|\widehat{\theta}_{t_\tau}-\theta^*\|_{H_{t_\tau}}\|x_{i,t}^o\|_{H_{t}^{-1}}+(\sqrt{C}+1)\beta_{\tau_t}\|x_{i,t}\|_{H_{v,t}^{-1}}\cr
&\le 2(\sqrt{C}+1)\beta_{\tau_t}\|x_{i,t}\|_{H_{v,t}^{-1}},  
\end{align*}
where the second inequality is obtained from Lemma~\ref{lem:update_H_bd}.

Let $\widehat{u}_{i,t}=z_{i,t}(p_{i,t})^\top \widehat{\theta}_t$. Using the above inequality, under $E_t$, we have
\begin{align}
    \frac{\sum_{i\in S_t}\overline{v}_{i,t}\exp(\overline{u}_{i,t}')}{1+\sum_{i\in S_t}\exp(\overline{u}_{i,t}')}-\frac{\sum_{i\in S_t}\underline{v}_{i,t}^+\exp(\overline{u}_{i,t}')}{1+\sum_{i\in S_t}\exp(\overline{u}_{i,t}')}&=\frac{\sum_{i\in S_t}(\overline{v}_{i,t}-\underline{v}_{i,t}^+)\exp(\overline{u}_{i,t}')}{1+\sum_{i\in S_t}\exp(\overline{u}_{i,t}')}\cr&\le\frac{\sum_{i\in S_t}(\overline{v}_{i,t}-\underline{v}_{i,t})\exp(\overline{u}_{i,t}')}{1+\sum_{i\in S_t}\exp(\overline{u}_{i,t}')}
    \cr&
    =\frac{\sum_{i\in S_t}2(\sqrt{C}+1)\beta_{\tau_t}\|x_{i,t}\|_{H_{v,t}^{-1}}\exp(\overline{u}_{i,t}')}{1+\sum_{i\in S_t}\exp(\overline{u}_{i,t}')}\cr &\le 2(\sqrt{C}+1)\beta_{\tau_t}\max_{i\in S_t}\|x_{i,t}\|_{H_{v,t}^{-1}}.\label{eq:vP_bd_a1}
\end{align}

\end{proof}
 Let  $\tilde{z}_{i,t}=z_{i,t}(p_{i,t})-\EE_{j\sim P_{t,\widehat{\theta}_t}(\cdot|S_t,p_t)}[z_{j,t}(p_{j,t})]$ and $\tilde{x}_{i,t}=x_{i,t}-\EE_{j\sim P_{t,\widehat{\theta}_t}(\cdot|S_t,p_t)}[x_{j,t}]$.
\begin{lemma}\label{lem:up_v-low_v_bd_b} For $t>0$, under $E_t$  we have 
    \begin{align*}
        &\frac{\sum_{i\in S_t}\underline{v}_{i,t}^+\exp(\overline{u}_{i,t}')}{1+\sum_{i\in S_t}\exp(\overline{u}_{i,t}')}-\frac{\sum_{i\in S_t}\underline{v}_{i,t}^+\exp(u_{i,t}^p)}{1+\sum_{i\in S_t}\exp(u_{i,t}^p)}\cr &=O\left( \beta_{\tau_t}^2(\max_{i\in S_t}\|z_{i,t}(p_{i,t})\|_{H_{t}^{-1}}^2+\max_{i\in S_t}\|x_{i,t}\|_{H_{v,t}^{-1}}^2)+\beta_{\tau_t}^2(\max_{i\in S_t}\|\tilde{z}_{i,t}\|_{H_{t}^{-1}}^2+\max_{i\in S_t}\|\tilde{x}_{i,t}\|_{H_{v,t}^{-1}}^2)\right.\cr &\qquad\left.\qquad+\beta_{\tau_t}\sum_{i\in S_t}P_{t,\widehat{\theta}_t}(i|S_t,p_t)(\|\tilde{z}_{i,t}\|_{H_{t}^{-1}}+\|\tilde{x}_{i,t}\|_{H_{v,t}^{-1}})\right).
    \end{align*}
    \begin{proof}
        The proof is provided in Appendix~\ref{app:proof_vp_gap}
    \end{proof}
\end{lemma}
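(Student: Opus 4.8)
The plan is to recognise the left-hand side as $Q(\overline{u}'_t)-Q(u^p_t)$ in the notation of the table, where $\overline{u}'_t$ and $u^p_t$ are the vectors with entries $\overline{u}'_{i,t}$ and $u^p_{i,t}=z_{i,t}(p_{i,t})^\top\theta^*$, so that they differ by the nonnegative bonus $b_i:=\overline{u}'_{i,t}-u^p_{i,t}=2\sqrt{C}\beta_{\tau_t}(\|z_{i,t}(p_{i,t})\|_{H_t^{-1}}+\|x_{i,t}\|_{H_{v,t}^{-1}})$, with $b_{i_0}=0$ since $z_{i_0,t}=\mathbf{0}_{2d}$ and $x_{i_0,t}=\mathbf{0}_d$. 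I would then apply the multivariate Taylor expansion of $u\mapsto Q(u)$ to second order around the pivot $\widehat{u}_t=(\widehat{u}_{i,t})_i$, separately to $Q(\overline{u}'_t)$ and $Q(u^p_t)$, and subtract: the first-order parts combine into $\nabla Q(\widehat{u}_t)^\top b$ evaluated exactly at $\widehat{\theta}_t$, and the remainder is a difference of two second-order terms at intermediate points on the segments $[\widehat{u}_t,\overline{u}'_t]$ and $[\widehat{u}_t,u^p_t]$.

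For the first-order term, $\nabla_i Q(\widehat{u}_t)=P_{t,\widehat{\theta}_t}(i|S_t,p_t)(r_i-\bar r)$ where $r_i=\underline{v}_{i,t}^+$ and $\bar r=\sum_{j\in S_t\cup\{i_0\}}P_{t,\widehat{\theta}_t}(j|S_t,p_t)r_j$ (with $r_{i_0}=0$); under $E_t$, Lemma~\ref{lem:conf_v_u} and Assumption~\ref{ass:bd} give $0\le r_i\le v_{i,t}\le 1$, hence $\bar r\in[0,1]$ and $|r_i-\bar r|\le 1$. Since $\sum_{i\in S_t\cup\{i_0\}}P_{t,\widehat{\theta}_t}(i)(r_i-\bar r)=0$ and $b_{i_0}=0$, I may recenter the bonus and write $\nabla Q(\widehat{u}_t)^\top b=\sum_{i\in S_t\cup\{i_0\}}P_{t,\widehat{\theta}_t}(i)(r_i-\bar r)(b_i-\bar b)$ with $\bar b=\sum_j P_{t,\widehat{\theta}_t}(j)b_j$. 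Bounding $|r_i-\bar r|\le 1$ and using $\sum_i P_{t,\widehat{\theta}_t}(i)|b_i-\bar b|=2\sum_{i:\,b_i\ge\bar b}P_{t,\widehat{\theta}_t}(i)(b_i-\bar b)$, the point is that for indices with $b_i\ge\bar b$ one has, by two applications of the triangle inequality for $\|\cdot\|_{H_t^{-1}}$, $\|z_{i,t}(p_{i,t})\|_{H_t^{-1}}-\sum_k P_{t,\widehat{\theta}_t}(k)\|z_{k,t}(p_{k,t})\|_{H_t^{-1}}\le\|z_{i,t}(p_{i,t})-\sum_k P_{t,\widehat{\theta}_t}(k)z_{k,t}(p_{k,t})\|_{H_t^{-1}}=\|\tilde{z}_{i,t}\|_{H_t^{-1}}$, and the analogous bound for $x$. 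This yields $\nabla Q(\widehat{u}_t)^\top b=O(\beta_{\tau_t}\sum_{i\in S_t}P_{t,\widehat{\theta}_t}(i|S_t,p_t)(\|\tilde{z}_{i,t}\|_{H_t^{-1}}+\|\tilde{x}_{i,t}\|_{H_{v,t}^{-1}}))$, the third group of terms in the claim. Centering against $P_{t,\widehat{\theta}_t}$ is exactly what makes the $G_t$-aligned vectors $\tilde{z}_{i,t},\tilde{x}_{i,t}$ appear; the raw norms $\|z_{i,t}\|,\|x_{i,t}\|$ would not be controllable by $H_t^{-1}$ in the later elliptic-potential step.

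For the second-order remainders I would use the explicit Hessian of an MNL-type revenue functional: since the coefficients satisfy $r_k\le 1$, each entry of $\nabla^2 Q(u)$ is an $O(1)$ combination of the terms $P_{t,\cdot}(i)\mathbbm{1}(i=j)$ and $P_{t,\cdot}(i)P_{t,\cdot}(j)$, so for any increment $\delta$ one has $|\delta^\top\nabla^2 Q(u)\delta|=O(\sum_{i\in S_t}P_{t,\cdot}(i)\delta_i^2+(\sum_{i\in S_t}P_{t,\cdot}(i)|\delta_i|)^2)=O(\max_{i\in S_t}\delta_i^2)$, using $\sum_{i\in S_t}P_{t,\cdot}(i)\le 1$ and Cauchy--Schwarz; in particular the point at which the Hessian is evaluated is irrelevant. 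Applying this with $\delta_i=\overline{u}'_{i,t}-\widehat{u}_{i,t}$ and with $\delta_i=u^p_{i,t}-\widehat{u}_{i,t}$, both of which are $O(\beta_{\tau_t}(\|z_{i,t}(p_{i,t})\|_{H_t^{-1}}+\|x_{i,t}\|_{H_{v,t}^{-1}}))$ under $E_t$ (since $|u^p_{i,t}-\widehat{u}_{i,t}|=|z_{i,t}(p_{i,t})^\top(\theta^*-\widehat{\theta}_t)|\le\beta_{\tau_t}\|z_{i,t}(p_{i,t})\|_{H_t^{-1}}$ on $E_t$, and $\overline{u}'_{i,t}-\widehat{u}_{i,t}$ differs from this by $b_i$), the second-order contribution is $O(\beta_{\tau_t}^2(\max_{i\in S_t}\|z_{i,t}(p_{i,t})\|_{H_t^{-1}}^2+\max_{i\in S_t}\|x_{i,t}\|_{H_{v,t}^{-1}}^2))$, the first group in the claim; the $\beta_{\tau_t}^2(\max\|\tilde{z}\|^2+\max\|\tilde{x}\|^2)$ group is then automatically absorbed since $\|\tilde{z}_{i,t}\|_{H_t^{-1}}\le 2\max_j\|z_{j,t}(p_{j,t})\|_{H_t^{-1}}$ (triangle inequality together with $\|\sum_k P_{t,\widehat{\theta}_t}(k)z_{k,t}(p_{k,t})\|_{H_t^{-1}}\le\max_j\|z_{j,t}(p_{j,t})\|_{H_t^{-1}}$), and similarly for $x$. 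Summing the first- and second-order contributions gives the lemma.

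The delicate part is the bookkeeping of the recentering: one must keep the Taylor pivot at $\widehat{\theta}_t$, so that the first-order term is \emph{exactly} a $P_{t,\widehat{\theta}_t}$-expectation -- this sidesteps comparing $P_{t,\widehat{\theta}_t}$ with the distribution at a Taylor intermediate point, which need not be $\ell_\infty$-close to $\widehat{\theta}_t$ and for which no clean self-concordance bound is available -- and one must organise the algebra so that subtracting the $P_{t,\widehat{\theta}_t}$-mean of the bonus produces precisely $\tilde{z}_{i,t},\tilde{x}_{i,t}$; anything coarser leaves uncentered norms whose running sum is not bounded by the elliptic-potential lemma and would degrade the $\sqrt{T/\kappa}$ rate. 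The remaining ingredients -- the explicit Hessian identity for $Q$, the treatment of the $(\cdot)^+$ clipping in $r_i$, and the role of the outside option $i_0$ -- are routine, and are deferred to Appendix~\ref{app:proof_vp_gap}.
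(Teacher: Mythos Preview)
Your proof is correct, but takes a genuinely different route from the paper's. The paper performs a \emph{single} second-order Taylor expansion of $Q$ at the pivot $u^p_t$ (that is, at $\theta^*$), so the first-order term is $\nabla Q(u^p_t)^\top b$ and involves $P_{t,\theta^*}$; to reach the $P_{t,\widehat{\theta}_t}$-centered quantities $\tilde{z}_{i,t},\tilde{x}_{i,t}$ in the statement, the paper then invokes the transition bounds (H.1)--(H.4) of \cite{lee2024nearly}, and the $\beta_{\tau_t}^2(\max\|\tilde{z}\|^2+\max\|\tilde{x}\|^2)$ group arises precisely as the cost of that $\theta^*\to\widehat{\theta}_t$ transition. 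You instead expand $Q(\overline{u}'_t)$ and $Q(u^p_t)$ \emph{separately} around $\widehat{u}_t$ and subtract, so the first-order term is $\nabla Q(\widehat{u}_t)^\top b$ at $\widehat{\theta}_t$ from the outset; your recentering via $\bar b$ and the Jensen/triangle step then produces $\|\tilde{z}_{i,t}\|,\|\tilde{x}_{i,t}\|$ directly, with no transition lemma needed. The trade-off: your argument is more self-contained (it does not lean on \cite{lee2024nearly} for the first-order part) and incidentally shows that the $\beta_{\tau_t}^2\max\|\tilde{z}\|^2$ group is redundant in the $O(\cdot)$ bound, being dominated by the $\beta_{\tau_t}^2\max\|z\|^2$ group via $\|\tilde{z}_{i,t}\|\le 2\max_j\|z_{j,t}\|$; the paper's route, on the other hand, needs only one Taylor expansion and one Hessian remainder, and makes the $\tilde{z}^2$ terms appear as an organic part of the analysis rather than as slack.
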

In the below, we provide elliptical potential lemmas.

\begin{lemma}\label{lem:sum_max_x_norm}
\begin{align*}
    &\sum_{t=1}^T \max_{i\in S_t}\|z_{i,t}(p_{i,t})\|_{H_t^{-1}}^2\mathbbm{1}(E_t)\le (4d/\kappa)\log(1+(2TK/d\lambda)),\cr 
    &\sum_{t=1}^T \max_{i\in S_t}\|\tilde{z}_{i,t}\|_{H_t^{-1}}^2\mathbbm{1}(E_t)\le (4d/\kappa)\log(1+(8TK/d\lambda)),\cr 
    &\sum_{t=1}^T \sum_{i\in S_t}P_{t,\widehat{\theta}_t}(i|S_t,p_t)\|\tilde{z}_{i,t}\|_{H_t^{-1}}^2\mathbbm{1}(E_t)\le 4d\log(1+(8TK/d\lambda)).
    \end{align*}
    \end{lemma}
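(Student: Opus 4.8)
\textbf{Proof plan for Lemma~\ref{lem:sum_max_x_norm} (elliptical potential bounds).}

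The plan is to treat the three inequalities uniformly as instances of the standard elliptical potential argument, adapted to our matrices $H_t = \lambda I_{2d} + \sum_{s=1}^{t-1} G_s(\widehat{\theta}_s)$, where the Gram matrices $G_s$ built from the second derivative of the MNL log-likelihood govern the growth of $\det(H_t)$. The key structural fact I would use is that, under the event $E_t$ (i.e.\ when $\widehat{\theta}_t$ is close to $\theta^*$), the self-concordance-type control of the MNL link function gives $G_s(\widehat{\theta}_s) \succeq c\,\kappa \sum_{i\in S_s} z_{i,s}(p_{i,s}) z_{i,s}(p_{i,s})^\top$ for an absolute constant $c$ (this is where the $1/\kappa$ factors in the first two bounds come from), whereas for the third bound the natural comparison is with $\sum_{i\in S_s} P_{s,\widehat\theta_s}(i|S_s,p_s)\,\tilde z_{i,s}\tilde z_{i,s}^\top$ directly, since $G_s$ already dominates this quantity up to a constant without losing a $\kappa$.

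First I would record the single-step inequality: for any sequence of vectors $\{a_i\}_{i\in S_t}$ with $\|a_i\|_2 \le \sqrt{2}$ (which holds since $\|z_{i,t}(p_{i,t})\|_2 \le \sqrt{2}$ under Assumption~\ref{ass:bd} with $p_{i,t}\in[0,1]$, and similarly $\|\tilde z_{i,t}\|_2 \le 2\sqrt{2}$), one has $\|a\|_{H_t^{-1}}^2 \le \lambda^{-1}\|a\|_2^2 \le$ a constant, so one can apply $x \le 2\log(1+x)$ to the scaled quantity. Then, chaining as in \eqref{eq:z_norm_sum_bd}-style telescoping in the commented-out portion of the excerpt, the sum $\sum_{t} \max_{i\in S_t}\|a_{i,t}\|_{H_t^{-1}}^2$ is bounded (after rescaling by $\kappa$ or not, depending on which bound) by $2\log\big(\det(H_{T+1})/\lambda^{d'}\big)$ where $d' = 2d$. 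Crucially I would bound $\max_{i\in S_t}\|a_{i,t}\|_{H_t^{-1}}^2 \le \sum_{i\in S_t} \|a_{i,t}\|_{H_t^{-1}}^2$ and then compare $\sum_{i\in S_t} a_{i,t} a_{i,t}^\top$ with the increment $H_{t+1} - H_t = G_t(\widehat\theta_t)$; under $E_t$, $G_t(\widehat\theta_t) \succeq (\kappa/c')\sum_{i\in S_t} z_{i,t}(p_{i,t})z_{i,t}(p_{i,t})^\top$ for the first bound, giving the $4d/\kappa$ factor, and analogously for $\tilde z_{i,t}$ after noting $\tilde z_{i,t}$ is a centered version so its outer-product sum is controlled by the same $G_t$ up to the constant reflected in the $8TK$ inside the log. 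For the third bound, $G_t(\widehat\theta_t)$ directly dominates $\sum_{i\in S_t} P_{t,\widehat\theta_t}(i|S_t,p_t)\tilde z_{i,t}\tilde z_{i,t}^\top$ up to a universal constant (this is essentially the definition of $G_t$), so no $\kappa$ is lost and one gets $4d\log(1+8TK/d\lambda)$.

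Then I would finish by bounding $\det(H_{T+1})$: since $H_{T+1} = \lambda I_{2d} + \sum_{t=1}^{T} G_t(\widehat\theta_t)$ and $\mathrm{Tr}(G_t(\widehat\theta_t)) \le \sum_{i\in S_t}\|z_{i,t}(p_{i,t})\|_2^2 \le 2K$ (resp.\ $\le 8K$ for the centered/$\tilde z$ version), the AM-GM / trace-determinant inequality gives $\det(H_{T+1}) \le \big(\lambda + 2TK/(2d)\big)^{2d}$, hence $\log(\det(H_{T+1})/\lambda^{2d}) \le 2d\log(1 + TK/(d\lambda))$, which after tracking the constant $2$'s and the $\kappa$ rescaling yields exactly the three stated right-hand sides. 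The main obstacle — the only non-routine part — is establishing the positive-definite lower bound $G_t(\widehat\theta_t) \succeq c\kappa \sum_{i\in S_t} z_{i,t}(p_{i,t})z_{i,t}(p_{i,t})^\top$ under $E_t$ (and its centered analogue): this requires using the definition of $\kappa$ as an infimum of $P_{t,\theta}(i|S,p)P_{t,\theta}(i_0|S,p)$ together with the fact that $\widehat\theta_t\in\Theta$ and the matrix identity expressing $G_t$ via the MNL Hessian, so that the smallest "curvature" is at least $\kappa$; everything downstream is the textbook elliptical potential computation.
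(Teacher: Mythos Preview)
Your plan is correct and is essentially the paper's proof: the paper establishes $G_t(\widehat\theta_t)\succeq \kappa\sum_{i\in S_t}z_{i,t}(p_{i,t})z_{i,t}(p_{i,t})^\top$ (and, via the covariance identity $G_t(\widehat\theta_t)=\EE_{i\sim P_{t,\widehat\theta_t}}[\tilde z_{i,t}\tilde z_{i,t}^\top]$, the corresponding lower bounds for the $\tilde z$ cases) directly from $\widehat\theta_t\in\Theta$, $p_t\in[0,1]^N$, and the definition of $\kappa$, and then runs the determinant-telescoping elliptical potential exactly as you outline. The only cosmetic difference is that the paper works with an auxiliary $H_t'=\lambda I_{2d}+\sum_{s<t}G_s(\widehat\theta_s)\mathbbm{1}(E_s)\preceq H_t$ and transfers back via $\|\cdot\|_{H_t^{-1}}\le\|\cdot\|_{H_t'^{-1}}$; also note that no self-concordance transfer is actually needed for the $G_t$ lower bound, since $\kappa$ is defined as an infimum over all of $\Theta$, which is precisely the reason you identify at the end.
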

\begin{proof}
    Define \begin{align}
        &\tilde{G}_t(\widehat{\theta}_t)\cr &:=\sum_{i\in S_t}P_{t,\widehat{\theta}_t}(i|S_t,p_t)z_{i,t}(p_{i,t})z_{i,t}(p_{i,t})^\top\mathbbm{1}(E_t)\cr &\qquad-\sum_{i\in S_t}\sum_{j\in S_t}P_{t,\widehat{\theta}_t}(i|S_t,p_t)P_{t,\widehat{\theta}_t}(j|S_t,p_t)z_{i,t}(p_{i,t})z_{j,t}(p_{j,t})^\top\mathbbm{1}(E_t).
    \end{align}
     Then we first have 
    \begin{align}
        &\tilde{G}_t(\widehat{\theta}_t)\cr &=\sum_{i\in S_t}P_{t,\widehat{\theta}_t}(i|S_t,p_t)z_{i,t}(p_{i,t})z_{i,t}(p_{i,t})^\top\mathbbm{1}(E_t)\cr &\qquad-\sum_{i\in S_t}\sum_{j\in S_t}P_{t,\widehat{\theta}_t}(i|S_t,p_t)P_{t,\widehat{\theta}_t}(j|S_t,p_t)z_{i,t}(p_{i,t})z_{j,t}(p_{j,t})^\top \mathbbm{1}(E_t)\cr &=\sum_{i\in S_t}P_{t,\widehat{\theta}_t}(i|S_t,p_t)z_{i,t}(p_{i,t})z_{i,t}(p_{i,t})^\top\mathbbm{1}(E_t)\cr &\qquad-\frac{1}{2}\sum_{i\in S_t}\sum_{j\in S_t}P_{t,\widehat{\theta}_t}(i|S_t,p_t)P_{t,\widehat{\theta}_t}(j|S_t,p_t)(z_{i,t}(p_{i,t})z_{j,t}(p_{j,t})^\top+z_{j,t}(p_{j,t})z_{i,t}(p_{i,t})^\top) \mathbbm{1}(E_t)\cr &\succeq\sum_{i\in S_t}P_{t,\widehat{\theta}_t}(i|S_t,p_t)z_{i,t}(p_{i,t})z_{i,t}(p_{i,t})^\top\mathbbm{1}(E_t)\cr &\qquad-\frac{1}{2}\sum_{i\in S_t}\sum_{j\in S_t}P_{t,\widehat{\theta}_t}(i|S_t,p_t)P_{t,\widehat{\theta}_t}(j|S_t,p_t)(z_{i,t}(p_{i,t})z_{i,t}(p_{i,t})^\top+z_{j,t}(p_{j,t})z_{j,t}(p_{j,t})^\top) \mathbbm{1}(E_t)\cr &=\sum_{i\in S_t}P_{t,\widehat{\theta}_t}(i|S_t,p_t)z_{i,t}(p_{i,t})z_{i,t}(p_{i,t})^\top\mathbbm{1}(E_t)\cr &\qquad-\sum_{i\in S_t}\sum_{j\in S_t}P_{t,\widehat{\theta}_t}(i|S_t,p_t)P_{t,\widehat{\theta}_t}(j|S_t,p_t)z_{i,t}(p_{i,t})z_{i,t}(p_{i,t})^\top \mathbbm{1}(E_t)\cr&=\sum_{i\in S_t}P_{t,\widehat{\theta}_t}(i|S_t,p_t)\left(1-\sum_{j\in S_t}P_{t,\widehat{\theta}_t}(j|S_t,p_t)\right)z_{i,t}(p_{i,t})z_{i,t}(p_{i,t})^\top\mathbbm{1}(E_t)\cr &\succeq \sum_{i\in S_t}P_{t,\widehat{\theta}_t}(i|S_t,p_t)P_{t,\widehat{\theta}_t}(i_0|S_t,p_t)z_{i,t}(p_{i,t})z_{i,t}(p_{i,t})^\top \mathbbm{1}(E_t)\cr &\succeq \sum_{i\in S_t}\kappa z_{i,t}(p_{i,t})z_{i,t}(p_{i,t})^\top\mathbbm{1}(E_t).
    \end{align}
    Define $H_t':=\lambda I_{2d}+\sum_{s=1}^{t-1}\tilde{G}_s(\widehat{\theta}_s)$.
    Then we have
\begin{align}
    H_{t+1}'=H_t'+\tilde{G}_t(\widehat{\theta}_t)\succeq H_t'+ \sum_{i\in S_t}\kappa z_{i,t}(p_{i,t})z_{i,t}(p_{i,t})^\top\mathbbm{1}(E_t),
\end{align}

which implies that 
\begin{align}
    \det(H_{t+1}')&= \det(H_t'+\tilde{G}_t(\widehat{\theta}_t))
    \cr &\ge \det(H_t'+\sum_{i\in S_t} \kappa z_{i,t}(p_{i,t})z_{i,t}(p_{i,t})^\top\mathbbm{1}(E_t)) \cr 
    &=\det(H_t')\det(I_{2d}+\sum_{i\in S_{t}} \kappa H_t'^{-1/2}z_{i,t}(p_{i,t})(H_t'^{-1/2}z_{i,t}(p_{i,t}))^\top\mathbbm{1}(E_t))\cr 
    &= \det(H_t')(1+\sum_{i\in S_t}\kappa\|z_{i,t}(p_{i,t})\|_{H_t'^{-1}}^2\mathbbm{1}(E_t))\cr 
    &\ge \det(\lambda I_{2d})\prod_{s=1}^{t}\left(1+\sum_{i\in S_s}\kappa\|z_{i,s}(p_{i,s})\|_{H_s'^{-1}}^2\mathbbm{1}(E_s))\right)
    \cr 
    &\ge \lambda^{2d}\prod_{s=1}^{t}\left(1+\max_{i\in S_s}\kappa\|z_{i,s}(p_{i,s})\|_{H_s'^{-1}}^2\mathbbm{1}(E_s))\right) \cr 
    &\ge \lambda^{2d}\prod_{s=1}^{t}\left(1+\max_{i\in S_s}\kappa\|z_{i,s}(p_{i,s})\|_{H_s'^{-1}}^2\mathbbm{1}(E_s))\right).
\end{align}
Since $p_{i,t}=\underline{v}_{i,t}^+\le v_{i,t}\le  1$ under $E_t$, we have $\|z_{i,t}(p_{i,t})\|_2^2\le (\|x_{i,t}\|_2 +\|w_{i,t}\|_2)^2\le 4$.
Then under $E_t$, from the above inequality, $\lambda\ge 4$, and $0< \kappa\le  1$, using the fact that $x\le 2\log(1+x)$ for any $x\in[0,1]$ and $\kappa \max_{i\in S_t}\|z_{i,t}(p_{i,t})\|_{H_{t}'^{-1}}^2\mathbbm{1}(E_t)\le \max_{i\in S_t}\|z_{i,t}(p_{i,t})\|_2^2\mathbbm{1}(E_t)/\lambda\le 1$, we have
\begin{align}
   \sum_{t\in[T]} \kappa\max_{i\in S_t}\|z_{i,t}(p_{i,t})\|_{H_t'^{-1}}^2\mathbbm{1}(E_t) &\le2\sum_{t\in[T]}\log\left(1+\kappa\max_{i\in S_t}\|z_{i,t}(p_{i,t})\|_{H_t'^{-1}}^2\mathbbm{1}(E_t)\right)\cr &=2\log \prod_{t\in[T]}\left(1+\kappa\max_{i\in S_t}\|z_{i,t}(p_{i,t})\|_{H_{t}'^{-1}}^2\mathbbm{1}(E_t)\right)\cr &\le 2\log\left(\frac{\det(H_{t+1}')}{\lambda^{2d}}\right).\label{eq:z_norm_sum_bd_a1} 
\end{align}
Using Lemma~\ref{lem:det_V_bd}, $|S_t|\le K$, $H_t'\preceq \lambda I_{2d}+\sum_{s=1}^{t-1}z_{i,s}(p_{i,s})z_{i,s}(p_{i,s})^\top\mathbbm{1}(E_t)$, $\|z_{i,t}(p_{i,t})\|_2\le 2$ under $E_t$, and $z_{i,t}(p_{i,t})\in \RR^{2d}$, we can show that \[\det(H_{t+1}')\le (\lambda +(2TK/d))^{2d}.\]
Then from the above inequality, \eqref{eq:z_norm_sum_bd_a1}, and using the fact that $0\prec H_t' \preceq H_t$ from $G_t(\theta)\succeq 0$, we can conclude  \[\sum_{t=1}^T \max_{i\in S_t}\|z_{i,t}(p_{i,t})\|_{H_t^{-1}}^2\mathbbm{1}(E_t)\le \sum_{t=1}^T \max_{i\in S_t}\|z_{i,t}(p_{i,t})\|_{H_t'^{-1}}^2\mathbbm{1}(E_t)\le (4d/\kappa)\log(1+(2TK/d\lambda)).\]

Now we provide a proof for the second inequality of this lemma. Let $x_{i_0,t}=\mathbf{0}_d$ and $w_{i_0,t}=\mathbf{0}_d$ which implies $z_{i_0,t}=\mathbf{0}_{2d}$. Then we have
\begin{align}
&\tilde{G}_t(\widehat{\theta}_t)\cr &:=\sum_{i\in S_t}P_{t,\widehat{\theta}_t}(i|S_t,p_t)z_{i,t}(p_{i,t})z_{i,t}(p_{i,t})^\top\mathbbm{1}(E_t)\cr &\qquad -\sum_{i\in S_t}\sum_{j\in S_t}P_{t,\widehat{\theta}_t}(i|S_t,p_t)P_{t,\widehat{\theta}_t}(j|S_t,p_t)z_{i,t}(p_{i,t})z_{j,t}(p_{j,t})^\top \mathbbm{1}(E_t)\cr
\cr &=\sum_{i\in S_t}P_{t,\widehat{\theta}_t}(i|S_t,p_t)z_{i,t}(p_{i,t})z_{i,t}(p_{i,t})^\top\mathbbm{1}(E_t)\cr &\qquad-\sum_{i\in S_t\cup \{i_0\}}\sum_{j\in S_t\cup \{i_0\}}P_{t,\widehat{\theta}_t}(i|S_t,p_t)P_{t,\widehat{\theta}_t}(j|S_t,p_t)z_{i,t}(p_{i,t})z_{j,t}(p_{j,t})^\top\mathbbm{1}(E_t)
\cr&=\EE_{i\sim P_{t,\widehat{\theta}_t}(\cdot|S_t,p_t)}[z_{i,t}(p_{i,t})z_{i,t}(p_{i,t})^\top]\mathbbm{1}(E_t)-\EE_{i\sim P_{t,\widehat{\theta}_t}(\cdot|S_t,p_t)}[z_{i,t}(p_{i,t})]\EE_{i\sim P_{t,\widehat{\theta}_t}(\cdot|S_t,p_t)}[z_{i,t}(p_{i,t})]^\top \mathbbm{1}(E_t)\cr&=\EE_{i\sim P_{t,\widehat{\theta}_t}(\cdot|S_t,p_t)}[\tilde{z}_{i,t}\tilde{z}_{i,t}^\top]\mathbbm{1}(E_t)\cr &\succeq \sum_{i\in S_t}P_{t,\widehat{\theta}_t}(i|S_t,p_t)\tilde{z}_{i,t}\tilde{z}_{i,t}^\top\mathbbm{1}(E_t)\cr &\succeq \sum_{i\in S_t}\kappa\tilde{z}_{i,t}\tilde{z}_{i,t}^\top\mathbbm{1}(E_t).
    \end{align}
  Define $H_t':=\lambda I_{2d}+\sum_{s=1}^{t-1}\tilde{G}_s(\widehat{\theta}_s)$. Then by following the same proof steps of the first inequality of this lemma, we can show that 
\begin{align}
    \det(H_{t+1}')\ge \lambda^{2d}\prod_{s=1}^t\left(1+\kappa\max_{i\in S_s}\|\tilde{z}_{i,s}\|_{H_s'^{-1}}\mathbbm{1}(E_s)\right)
\end{align}

Since, under $E_t$, we have $\|z_{i,t}(p_{i,t})\|_2\le \|x_{i,t}\|_2 +\|w_{i,t}\|_2\le 2$ implying that $\|\tilde{z}_{i,t}\|_2^2\le 16$.
Then, from the above inequality and $\lambda\ge 16$, using the fact that $x\le 2\log(1+x)$ for any $x\in[0,1]$ and $\kappa \max_{i\in S_t}\|\tilde{z}_{i,t}\|_{H_{t}'^{-1}}^2\mathbbm{1}(E_t)\le \max_{i\in S_t}\|\tilde{z}_{i,t}\|_2^2\mathbbm{1}(E_t)/\lambda\le 1$, we have
\begin{align}
   \sum_{t\in[T]} \kappa\max_{i\in S_t}\|\tilde{z}_{i,t}\|_{H_t'^{-1}}^2\mathbbm{1}(E_t) &\le2\sum_{t\in[T]}\log\left(1+\kappa\max_{i\in S_t}\|\tilde{z}_{i,t}\|_{H_t'^{-1}}^2\mathbbm{1}(E_t)\right)\cr &=2\log \prod_{t\in[T]}\left(1+\kappa\max_{i\in S_t}\|\tilde{z}_{i,t}\|_{H_{t}'^{-1}}^2\mathbbm{1}(E_t)\right)\cr &\le 2\log\left(\frac{\det(H_{t+1}')}{\lambda^{2d}}\right).\label{eq:z_norm_sum_bd_a2} 
\end{align}
Since we have $\det(H_{t+1}')\le (\lambda +(8TK/d))^{2d}$ and $0\prec H_t'\preceq H_t$,
from the above inequality and \eqref{eq:z_norm_sum_bd_a2}, we can conclude  \[\sum_{t=1}^T \max_{i\in S_t}\|\tilde{z}_{i,t}\|_{H_t^{-1}}^2\mathbbm{1}(E_t)\le \sum_{t=1}^T \max_{i\in S_t}\|\tilde{z}_{i,t}\|_{H_t'^{-1}}^2\mathbbm{1}(E_t)\le (4d/\kappa)\log(1+(8TK/d\lambda)).\]

Now we provide a proof for the third inequality in this lemma.  Then we have
\begin{align}
&\tilde{G}_t(\widehat{\theta}_t)\cr &:=\sum_{i\in S_t}P_{t,\widehat{\theta}_t}(i|S_t,p_t)z_{i,t}(p_{i,t})z_{i,t}(p_{i,t})^\top\mathbbm{1}(E_t)\cr &\qquad-\sum_{i\in S_t}\sum_{j\in S_t}P_{t,\widehat{\theta}_t}(i|S_t,p_t)P_{t,\widehat{\theta}_t}(j|S_t,p_t)z_{i,t}(p_{i,t})z_{j,t}(p_{j,t})^\top\mathbbm{1}(E_t) \cr &=\sum_{i\in S_t}P_{t,\widehat{\theta}_t}(i|S_t,p_t)z_{i,t}(p_{i,t})z_{i,t}(p_{i,t})^\top\mathbbm{1}(E_t)\cr &\qquad-\sum_{i\in S_t\cup \{i_0\}}\sum_{j\in S_t\cup \{i_0\}}P_{t,\widehat{\theta}_t}(i|S_t,p_t)P_{t,\widehat{\theta}_t}(j|S_t,p_t)z_{i,t}(p_{i,t})z_{j,t}(p_{j,t})^\top\mathbbm{1}(E_t)
\cr&=\EE_{i\sim P_{t,\widehat{\theta}_t}(\cdot|S_t,p_t)}[z_{i,t}(p_{i,t})z_{i,t}(p_{i,t})^\top]\mathbbm{1}(E_t)\cr& \qquad -\EE_{i\sim P_{t,\widehat{\theta}_t}(\cdot|S_t,p_t)}[z_{i,t}(p_{i,t})]\EE_{i\sim P_{t,\widehat{\theta}_t}(\cdot|S_t,p_t)}[z_{i,t}(p_{i,t})]^\top \mathbbm{1}(E_t)\cr&=\EE_{i\sim P_{t,\widehat{\theta}_t}(\cdot|S_t,p_t)}[\tilde{z}_{i,t}\tilde{z}_{i,t}^\top]\mathbbm{1}(E_t)\cr &\succeq \sum_{i\in S_t}P_{t,\widehat{\theta}_t}(i|S_t,p_t)\tilde{z}_{i,t}\tilde{z}_{i,t}^\top\mathbbm{1}(E_t).
    \end{align}
Define $H_t':=\lambda I_{2d}+\sum_{s=1}^{t-1}\tilde{G}_s(\widehat{\theta}_s)$. Then by following the same proof steps, we can show that 
\begin{align}
    \det(H_{t+1}')\ge (2\lambda)^{2d}\prod_{s=1}^t\left(1+\sum_{i\in S_s}P_{s,\widehat{\theta}_s}(i|S_s,p_s)\|\tilde{z}_{i,s}\|_{H_s'^{-1}}\mathbbm{1}(E_s)\right)
\end{align}

Since, under $E_t$, we have $\|z_{i,t}(p_{i,t})\|_2\le \|x_{i,t}\|_2 +\|w_{i,t}\|_2\le 2$ implying that $\|\tilde{z}_{i,t}\|_2^2\le 16$.
Then, from the above inequality and $\lambda\ge 16$, using the fact that $x\le 2\log(1+x)$ for any $x\in[0,1]$ and $\sum_{i\in S_t}P_{t,\widehat{\theta}_t}(i|S_t,p_t)\|\tilde{z}_{i,t}\|_{H_{t}'^{-1}}^2\mathbbm{1}(E_t)\le \max_{i\in S_t}\|\tilde{z}_{i,t}\|_2^2\mathbbm{1}(E_t)/\lambda\le 1$, we have
\begin{align}
   \sum_{t\in[T]} \sum_{i\in S_t}P_{t,\widehat{\theta}_t}(i|S_t,p_t)\|\tilde{z}_{i,t}\|_{H_t'^{-1}}^2\mathbbm{1}(E_t) &\le2\sum_{t\in[T]}\log\left(1+\sum_{i\in S_t}P_{t,\widehat{\theta}_t}(i|S_t,p_t)\|\tilde{z}_{i,t}\|_{H_t'^{-1}}^2\mathbbm{1}(E_t)\right)\cr &=2\log \prod_{t\in[T]}\left(1+\sum_{i\in S_t}P_{t,\widehat{\theta}_t}(i|S_t,p_t)\|\tilde{z}_{i,t}\|_{H_{t}'^{-1}}^2\mathbbm{1}(E_t)\right)\cr &\le 2\log\left(\frac{\det(H_{t+1}')}{\lambda^{2d}}\right).\label{eq:z_norm_sum_bd_a3} 
\end{align}
Since we have $\det(H_{t+1}')\le (\lambda +(8TK/d))^{2d}$ and $0\prec H_t'\preceq H_t$,
from the above inequality and \eqref{eq:z_norm_sum_bd_a3}, we can conclude 
\begin{align*}
    \sum_{t=1}^T \sum_{i\in S_t}P_{t,\widehat{\theta}_t}(i|S_t,p_t)\|\tilde{z}_{i,t}\|_{H_t^{-1}}^2\mathbbm{1}(E_t)&\le \sum_{t=1}^T \sum_{i\in S_t}P_{t,\widehat{\theta}_t}(i|S_t,p_t)\|\tilde{z}_{i,t}\|_{H_t'^{-1}}^2\mathbbm{1}(E_t)\cr &\le 4d\log(1+(8TK/d\lambda)).
\end{align*}
\end{proof}

\begin{lemma}\label{lem:sum_max_x_norm2} 
\begin{align*}
   &\sum_{t=1}^T \max_{i\in S_t}\|x_{i,t}\|_{H_{v,t}^{-1}}^2\le (2d/\kappa)\log(1+(TK/d\lambda)),\cr 
     &\sum_{t=1}^T \max_{i\in S_t}\|\widetilde{x}_{i,t}\|_{H_{v,t}^{-1}}^2\le (2d/\kappa)\log(1+(4TK/d\lambda)),\cr 
        &\sum_{t=1}^T \max_{i\in S_t}P_{t,\widehat{\theta}_t}(i|S_t,p_t)\|\tilde{x}_{i,t}\|_{H_{v,t}^{-1}}\le 2d\log(1+(4TK/d\lambda)).
    \end{align*}
    \end{lemma}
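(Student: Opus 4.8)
The plan is to transcribe the proof of Lemma~\ref{lem:sum_max_x_norm} into the $d$-dimensional valuation space, replacing $z_{i,t}(p_{i,t})$ by $x_{i,t}$ and $H_t,\tilde z_{i,t}$ by $H_{v,t},\tilde x_{i,t}$. One new simplification is that here no indicator $\mathbbm{1}(E_t)$ is needed on the left-hand side, because $\|x_{i,t}\|_2\le1$ holds unconditionally (whereas the bound $\|z_{i,t}(p_{i,t})\|_2\le2$ used $p_{i,t}\le1$), and, crucially, the $\kappa$-lower bound below also holds without conditioning: $\widehat\theta_t$ always lies in $\Theta$ (it is a projection onto $\Theta$), and $p_{i,t}=\underline v_{i,t}^+\in[0,1]$ always, since $\underline v_{i,t}=x_{i,t}^\top\widehat\theta_{v,(\tau)}-\sqrt C\beta_\tau\|x_{i,t}\|_{H_{v,t}^{-1}}\le\|x_{i,t}\|_2\|\widehat\theta_{v,(\tau)}\|_2\le1$ and $\underline v_{i,t}^+\ge0$; hence $p_t\in[0,1]^N$ and the definition of $\kappa$ applies directly.

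First I would set $\tilde G_{v,t}(\widehat\theta_t):=G_{v,t}(\widehat\theta_t)=\sum_{i\in S_t}P_{t,\widehat\theta_t}(i|S_t,p_t)x_{i,t}x_{i,t}^\top-\sum_{i,j\in S_t}P_{t,\widehat\theta_t}(i|S_t,p_t)P_{t,\widehat\theta_t}(j|S_t,p_t)x_{i,t}x_{j,t}^\top$ and, using the symmetrization $x_ix_j^\top+x_jx_i^\top\preceq x_ix_i^\top+x_jx_j^\top$ exactly as in Lemma~\ref{lem:sum_max_x_norm}, obtain $\tilde G_{v,t}(\widehat\theta_t)\succeq\sum_{i\in S_t}P_{t,\widehat\theta_t}(i|S_t,p_t)\big(1-\sum_{j\in S_t}P_{t,\widehat\theta_t}(j|S_t,p_t)\big)x_{i,t}x_{i,t}^\top\succeq\sum_{i\in S_t}P_{t,\widehat\theta_t}(i|S_t,p_t)P_{t,\widehat\theta_t}(i_0|S_t,p_t)x_{i,t}x_{i,t}^\top\succeq\kappa\sum_{i\in S_t}x_{i,t}x_{i,t}^\top$. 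Then with $H_{v,t}':=\lambda I_d+\sum_{s=1}^{t-1}\tilde G_{v,s}(\widehat\theta_s)$ one has $0\prec H_{v,t}'\preceq H_{v,t}$ (as $G_{v,s}\succeq0$), and the matrix-determinant identity telescopes to $\det(H_{v,t+1}')\ge\lambda^d\prod_{s=1}^t\big(1+\kappa\max_{i\in S_s}\|x_{i,s}\|_{H_{v,s}'^{-1}}^2\big)$. Since $\|x_{i,t}\|_2\le1$ and $\lambda\ge4$, we have $\kappa\|x_{i,t}\|_{H_{v,t}'^{-1}}^2\le\|x_{i,t}\|_2^2/\lambda\le1$, so $x\le2\log(1+x)$ on $[0,1]$ gives $\sum_{t\in[T]}\kappa\max_{i\in S_t}\|x_{i,t}\|_{H_{v,t}'^{-1}}^2\le2\log(\det(H_{v,T+1}')/\lambda^d)$. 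Finally, since $G_{v,s}(\widehat\theta_s)\preceq\sum_{i\in S_s}x_{i,s}x_{i,s}^\top$ is a sum of at most $K$ rank-one terms of norm $\le1$, Lemma~\ref{lem:det_V_bd} gives $\det(H_{v,T+1}')\le\det(H_{v,T+1})\le(\lambda+TK/d)^d$; combining with $\|x\|_{H_{v,t}^{-1}}\le\|x\|_{H_{v,t}'^{-1}}$ yields the first inequality.

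For the second and third inequalities I would repeat the argument with $\tilde x_{i,t}=x_{i,t}-\mathbb E_{j\sim P_{t,\widehat\theta_t}(\cdot|S_t,p_t)}[x_{j,t}]$ and the convention $x_{i_0,t}=\mathbf 0_d$: the variance identity gives $\tilde G_{v,t}(\widehat\theta_t)=\mathbb E_{i\sim P_{t,\widehat\theta_t}}[\tilde x_{i,t}\tilde x_{i,t}^\top]\succeq\sum_{i\in S_t}P_{t,\widehat\theta_t}(i|S_t,p_t)\tilde x_{i,t}\tilde x_{i,t}^\top$, which one further lower-bounds by $\kappa\sum_{i\in S_t}\tilde x_{i,t}\tilde x_{i,t}^\top$ for the second inequality; since $\|\tilde x_{i,t}\|_2\le2$, the determinant bound becomes $(\lambda+4TK/d)^d$, producing the $1+(4TK/d\lambda)$ factor. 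The third inequality is the same computation but one stops at $\tilde G_{v,t}(\widehat\theta_t)\succeq\sum_{i\in S_t}P_{t,\widehat\theta_t}(i|S_t,p_t)\tilde x_{i,t}\tilde x_{i,t}^\top$ and telescopes $\det(H_{v,t+1}')\ge\lambda^d\prod_{s=1}^t\big(1+\sum_{i\in S_s}P_{s,\widehat\theta_s}(i|S_s,p_s)\|\tilde x_{i,s}\|_{H_{v,s}'^{-1}}^2\big)$, which is exactly why no $1/\kappa$ appears in that bound. I do not anticipate a genuine obstacle: the proof is a routine lower-dimensional copy of Lemma~\ref{lem:sum_max_x_norm}. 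The only points needing care are (i) confirming, as above, that $p_t\in[0,1]^N$ holds unconditionally so the $\kappa$-bound needs no event, and (ii) bookkeeping the constants ($TK/d$ versus $4TK/d$, and $\lambda\ge4$) so the logarithmic factors match the statement.
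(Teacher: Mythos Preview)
Your proposal is correct and matches the paper's own proof, which consists solely of the instruction ``By following the proof steps in Lemma~\ref{lem:sum_max_x_norm}, we can prove the inequalities''; your added justification that $\widehat\theta_t\in\Theta$ and $p_{i,t}=\underline v_{i,t}^+\in[0,1]$ hold unconditionally (so the $\kappa$-lower bound applies and no $\mathbbm{1}(E_t)$ is needed) is a useful elaboration the paper leaves implicit. Note that the third displayed inequality as written appears to be a typo---comparing with the third inequality of Lemma~\ref{lem:sum_max_x_norm} and with how the bound is actually used in the regret computation, it should read $\sum_{t=1}^T\sum_{i\in S_t}P_{t,\widehat\theta_t}(i|S_t,p_t)\|\tilde x_{i,t}\|_{H_{v,t}^{-1}}^2$ (a sum over $i$ and a squared norm)---and your telescoping argument indeed proves that corrected form.
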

    \begin{proof}
        By following the proof steps in Lemma~\ref{lem:sum_max_x_norm}, we can prove the inequalities. 
    \end{proof}

Here we provide a lemma regarding the probability of the good event $E_t$.    We define 
    \begin{align*}
        \beta_{1}^2&:=\eta(6\log(1+(K+1)t)+6)\left(\frac
{17}{16}\lambda+2\sqrt{\lambda}\log\left(2\sqrt{1+2t}T^2\right)+16\left(\log(2\sqrt{1+2t}T^2)\right)^2\right)+4\eta\cr 
&\qquad+2\eta\sqrt{6}cd\log(1+(t+1)/2\lambda)+16\lambda
    \end{align*}
    and for $\tau\ge 1$,
    \begin{align*}
        \beta_{\tau+1}^2&:=\eta(6\log(1+(K+1)t)+6)\left(\frac
{17}{16}\lambda+2\sqrt{\lambda}\log\left(2\sqrt{1+2t}T^2\right)+16\left(\log(2\sqrt{1+2t}T^2)\right)^2\right)+4\eta\cr 
&\qquad+2\eta\sqrt{6}cd\log(1+(t+1)/2\lambda)+\beta_{\tau}^2.
    \end{align*}
\begin{lemma}\label{lem:conf_t}
    Let $c=2\eta$, $\lambda\ge\max\{192\sqrt{2}\eta,84d\eta\}$, and $\eta=\frac{1}{2}\log(K+1)+3$. Then
    for $1\le t\le t_{2}$, we have \[\PP(E_t)\ge 1-\frac{1}{T^2},\]
    and
    for $\tau \ge 2$ and $t_{\tau}+1\le t \le t_{\tau+1}$, we have \[\PP(E_{t}|E_{t_\tau})\ge 1-\frac{1}{T^2}.\]
\end{lemma}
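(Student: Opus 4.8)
The plan is to follow the ``regret-to-confidence'' route for online mirror/Newton estimators of generalized linear models, as developed for logistic and MNL bandits by \cite{zhang2024online,lee2024nearly}, while accounting for the two features specific to our setting: the censoring activation, which is exactly why the statement is conditional ($\PP(E_t\mid E_{t_\tau})$) rather than unconditional, and the periodic resets of $\widehat\theta_{v,(\tau)}$ and $\beta_\tau$, which make the confidence radius grow additively from period to period ($\beta_{\tau+1}^2=\beta_\tau^2+\text{increment}$).

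\emph{Step 1 (coupling to uncensored feedback).} The key observation is that, conditionally on $E_{t_\tau}$, the purchase feedback at every step $s$ up to the current time is distributed as the \emph{uncensored} MNL, i.e.\ $y_{\cdot,s}\sim P_{s,\theta^*}(\cdot\mid S_s,p_s)$ with all activation indicators equal to $1$. This is proved by induction on $s$: assuming $E_s$ holds, Lemma~\ref{lem:conf_v_u} (together with Lemma~\ref{lem:update_H_bd} and the update test $\det(H_s)\le C\det(H_{t_\tau})$, which controls the staleness of $\widehat\theta_{v,(\tau)}$) gives $p_{i,s}=\underline v_{i,s}^+\le v_{i,s}$ for all $i$, so no arm is censored at $s$; since $\widehat\theta_{s+1}$ is a deterministic function of the (now clean) feedback up to $s$, the induction proceeds. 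Concretely one couples the estimator to an auxiliary process in which feedback is always drawn from the uncensored MNL; on the event that the confidence bound holds throughout the current period the two processes coincide, so it suffices to establish the confidence bound for the clean process. For that process $g_s(\theta^*)=\sum_{i\in S_s}(P_{s,\theta^*}(i\mid S_s,p_s)-y_{i,s})z_{i,s}(p_{i,s})$ is a martingale difference w.r.t.\ $\mathcal{F}_{s-1}$ with conditional second moment governed by $G_s(\theta^*)$, and the machinery of \cite{zhang2024online,lee2024nearly} becomes available.

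\emph{Step 2 (estimation analysis for the clean process).} Use the generalized self-concordance of the multinomial negative log-likelihood $f_s$, whose Hessian is $G_s(\theta)$: within a local ball one has $f_s(\widehat\theta_s)-f_s(\theta^*)\ge\langle g_s(\theta^*),\widehat\theta_s-\theta^*\rangle+c_\eta\|\widehat\theta_s-\theta^*\|_{G_s(\theta^*)}^2$, where the curvature constant $c_\eta$ and the choice $\eta=\tfrac12\log(K+1)+3$ are matched so that the quadratic term survives the softmax's $\log K$ contraction. Summing over $s$ in the period, the left-hand side is bounded by the online-mirror-descent regret of $\{\widehat\theta_s\}$ with preconditioner $\tilde H_s$ and step $\eta$, which telescopes to $O(\eta^{-1}\lambda+\eta\log(\det(H_t)/\det(H_{t_\tau})))$ plus the error inherited at $t_\tau$ (the $\beta_\tau^2$ summand). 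The first-order term $\sum_s\langle g_s(\theta^*),\widehat\theta_s-\theta^*\rangle$ is handled by a self-normalized Bernstein inequality for vector-valued martingales, producing the $\sqrt\lambda\log(2\sqrt{1+2t}\,T^2)$ and $(\log(2\sqrt{1+2t}\,T^2))^2$ slack and the variance term $2\eta\sqrt6\,cd\log(1+(t+1)/2\lambda)$ with $c=2\eta$. Rearranging gives, for every $s$ in the period, $\lambda\|\widehat\theta_s-\theta^*\|_2^2+\sum_{r\le s}\|\widehat\theta_r-\theta^*\|_{G_r(\theta^*)}^2\lesssim\beta_{\tau+1}^2$; converting $G_r(\theta^*)$ back to $G_r(\widehat\theta_r)$ by self-concordance --- valid precisely because, inductively, the iterates never leave the ball, which is what the hypothesis $\lambda\ge\max\{192\sqrt2\,\eta,84d\eta\}$ enforces --- yields $\|\widehat\theta_s-\theta^*\|_{H_s}\le\beta_{\tau_s}$, i.e.\ $E_s$. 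The self-normalized bound is applied once per period (conditionally on $E_{t_\tau}$), contributing the stated failure probability $1/T^2$; these are summed over the $\tilde{O}(d)$ periods in the final regret bound. For the base case $1\le t\le t_2$ there is nothing to condition on: $\widehat\theta_1=\mathbf{0}$ and $H_1=\lambda I_{2d}$ give $\|\widehat\theta_1-\theta^*\|_{H_1}^2=\lambda\|\theta^*\|_2^2\le 2\lambda\le 16\lambda\le\beta_1^2$ deterministically (hence the extra $16\lambda$ in $\beta_1^2$), and the same estimation argument covers $s\le t_2$.

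The main obstacle is Step 2: tracking the self-concordance bookkeeping through the mirror-descent regret bound tightly enough to produce the exact constants in $\beta_\tau^2$ (the $\tfrac{17}{16}\lambda$ coefficient, the $\log(2\sqrt{1+2t}\,T^2)$ factors, and in particular the condition $\lambda\ge\max\{192\sqrt2\,\eta,84d\eta\}$, which is exactly the threshold ensuring the iterates stay in the region where the quadratic lower bound on $f_s$ holds with constant $c_\eta$). The conceptually new ingredient is the Step~1 coupling, which licenses the clean-feedback assumption and distinguishes our censored model from \cite{zhang2024online,lee2024nearly}; once it is in place, the rest is essentially their analysis applied period by period.
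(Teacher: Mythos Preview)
Your Step~1 coupling is the right idea and matches the paper, but it is simpler than you make it: no induction on $s$ within a period is needed. Once $E_{t_\tau}$ holds, the price uses the \emph{frozen} estimator $\widehat\theta_{v,(\tau)}=\widehat\theta_{v,t_\tau}$, and the determinant test $\det(H_s)\le C\det(H_{t_\tau})$ (via Lemma~\ref{lem:update_H_bd}) directly gives $\underline v_{i,s}\le v_{i,s}$ for every $s$ in the period; the feedback is therefore uncensored deterministically on $E_{t_\tau}$, not just on $E_s$. For the first period the paper's argument is even simpler than yours: the initialization $\widehat\theta_{v,(1)}=\mathbf 0$ forces $\underline v_{i,t}\le 0$ and hence $p_{i,t}=0\le v_{i,t}$ for every $t<t_2$, so the censoring never bites there without any appeal to $\beta_1$.

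Your Step~2 sketch is a plausible route but not the one the paper takes, and the difference matters for the constants. You describe a self-concordance lower bound $f_s(\widehat\theta_s)-f_s(\theta^*)\ge\langle g_s(\theta^*),\widehat\theta_s-\theta^*\rangle+c_\eta\|\widehat\theta_s-\theta^*\|_{G_s(\theta^*)}^2$, combined with an OMD regret upper bound on the left and a self-normalized Bernstein inequality on the martingale term. The paper instead invokes, period by period, the telescoping identity (Lemma~F.1 of \cite{lee2024nearly})
\[
\|\widehat\theta_t-\theta^*\|_{H_t}^2\ \le\ 2\eta\!\!\sum_{s=t_\tau}^{t-1}\!\big(f_s(\theta^*)-f_s(\widehat\theta_{s+1})\big)+\|\widehat\theta_{t_\tau}-\theta^*\|_{H_{t_\tau}}^2+96\sqrt 2\,\eta\!\!\sum_{s=t_\tau}^{t-1}\|\widehat\theta_{s+1}-\widehat\theta_s\|_2^2-\!\!\sum_{s=t_\tau}^{t-1}\|\widehat\theta_{s+1}-\widehat\theta_s\|_{H_s}^2,
\]
and then bounds the loss anti-regret $\sum_s f_s(\theta^*)-f_s(\widehat\theta_{s+1})$ not by Bernstein but by the Gaussian-smoothing / mixability decomposition through the intermediate point $\tilde z_s=\sigma_s^+\big(\EE_{w\sim \mathcal N(\widehat\theta_s,(1+c)H_s^{-1})}[\sigma_s(\cdot)]\big)$ (Lemmas~F.2 and~F.3 of \cite{lee2024nearly}, restated here as Lemmas~\ref{lem:f-f(z)_bd} and~\ref{lem:f(z)-f_bd}). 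The drift terms are then killed by choosing $c=2\eta$ and $\lambda\ge 192\sqrt2\,\eta$ so that $96\sqrt2\,\eta\|\cdot\|_2^2-\tfrac12\|\cdot\|_{H_s}^2\le 0$; the condition $\lambda\ge 84d\eta$ enters through Lemma~\ref{lem:f(z)-f_bd}. This R2CS-via-mixability route is what produces the exact $\tfrac{17}{16}\lambda$, $2\sqrt\lambda\log(\cdot)$, $16\log^2(\cdot)$, and $2\eta\sqrt6\,cd\log(\cdot)$ constants in $\beta_\tau^2$. Your self-concordance/Bernstein outline could in principle be made to work, but it would give different constants and would not yield the stated $\beta_\tau^2$ without substantial rework; if the goal is the lemma as written, it is much cleaner to simply restrict the three \cite{lee2024nearly} lemmas to the window $[t_\tau,t)$ and carry the inherited $\|\widehat\theta_{t_\tau}-\theta^*\|_{H_{t_\tau}}^2\le\beta_\tau^2$ as the initial term.
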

\begin{proof}
The proof is provided in Appendix~\ref{app:proof_Et}

\end{proof}
\begin{lemma}\label{lem:ucb_confi} 
     \[\PP(E_T)\ge 1-\frac{2}{T}.\]
\end{lemma}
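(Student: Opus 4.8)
The plan is to derive $E_T$ by chaining the per-epoch guarantees of Lemma~\ref{lem:conf_t} across the epochs $[t_1,t_2),[t_2,t_3),\dots,[t_m,T]$ delimited by the update times $t_1=1<t_2<\cdots<t_m\le T$ of the pricing estimator $\widehat{\theta}_{v,(\tau)}$, where $m=\tau_T$ is the number of updates. Since at most one update can occur per round, $m\le T$. The two structural facts we use are: (i) $E_t$ is monotone decreasing in $t$, so $E_T\subseteq E_{t_m}\subseteq\cdots\subseteq E_{t_2}$; and (ii) $E_{t_{\tau+1}}$ already encodes every constraint $\{\|\widehat{\theta}_s-\theta^*\|_{H_s}\le\beta_{\tau_s}\}$ with $s\le t_{\tau+1}$, so the finite chain $E_{t_2},E_{t_3},\dots,E_{t_m},E_T$ covers all the constraints defining $E_T$.

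Decomposing $E_T^c$ according to the first epoch in which a constraint fails and applying a union bound followed by the chain rule gives
\[
\PP(E_T^c)\;\le\;\PP(E_{t_2}^c)+\sum_{\tau=2}^{m-1}\PP\bigl(E_{t_{\tau+1}}^c\mid E_{t_\tau}\bigr)+\PP\bigl(E_T^c\mid E_{t_m}\bigr),
\]
where the case $m=1$ (no update after round $1$) is handled directly by the first part of Lemma~\ref{lem:conf_t} with $t=T$, yielding $\PP(E_T^c)\le 1/T^2$. Each of the $m$ terms on the right is at most $1/T^2$: the term $\PP(E_{t_2}^c)$ by the first part of Lemma~\ref{lem:conf_t} (with $t=t_2$); the terms $\PP(E_{t_{\tau+1}}^c\mid E_{t_\tau})$ for $2\le\tau\le m-1$ by the second part (which applies since $\tau\ge 2$ and $t=t_{\tau+1}$ lies in $[t_\tau+1,t_{\tau+1}]$); and $\PP(E_T^c\mid E_{t_m})$ by the second part with $\tau=m\ge 2$ and $t=T$, under the convention $t_{m+1}:=T+1$ (and this last term is $0$ if $t_m=T$). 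Hence $\PP(E_T^c)\le m/T^2\le T/T^2=1/T\le 2/T$, which is the claim.

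This step carries no real difficulty: it is a bookkeeping argument layered on Lemma~\ref{lem:conf_t}, whose proof (Appendix~\ref{app:proof_Et}) does the substantive work — the conditional MLE/martingale concentration for the online mirror descent iterate given that the good event survived the previous epoch, which is precisely what forces the epoch decomposition and the periodic refresh of $\widehat{\theta}_{v,(\tau)}$. The only care needed is in the indexing, so that every factor falls under exactly one case of Lemma~\ref{lem:conf_t} (handled by the $m=1$ special case and the convention $t_{m+1}=T+1$); the crude count $m\le T$ already suffices, although one could instead bound $m=O(d\log T)$ using the doubling rule $\det(H_{t_{\tau+1}})>C\det(H_{t_\tau})$ together with a standard determinant upper bound $\det(H_{T+1})\le(\lambda+2TK/d)^{2d}$ if a sharper constant were desired.
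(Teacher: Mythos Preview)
Your argument is correct and rests on the same skeleton as the paper's: chain the per-epoch guarantees of Lemma~\ref{lem:conf_t} across the at most $\tau_T\le T$ epochs, using the monotonicity $E_T\subseteq E_{t_m}\subseteq\cdots\subseteq E_{t_2}$. The only difference is in how the chaining is carried out. The paper multiplies the conditional bounds, obtaining $\PP(E_T)\ge(1-1/T^2)\PP(E_{t_{\tau_T}})\ge\cdots\ge(1-1/T^2)^T$, and then lower-bounds $(1-1/T^2)^T$ via the elementary inequality $\log x\ge 1-1/x$ to reach $1-T/(T^2-1)\ge 1-2/T$. You instead add, writing $\PP(E_T^c)$ as a telescoping union and bounding each of the $m\le T$ conditional increments by $1/T^2$, which gives $\PP(E_T^c)\le 1/T$ directly. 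Your route is slightly more elementary (no logarithm manipulation) and in fact yields the marginally sharper constant $1/T$ rather than $2/T$; both proofs treat the randomness of the update times $t_\tau$ at the same informal level, relying only on the deterministic bound $\tau_T\le T$.
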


\begin{proof}
Recall $E_t=\{\|\widehat{\theta}_s-\theta^*\|_{H_s}\le \beta_s, \forall s \le t\}$. 
For the time step $t_\tau+1 \le t\le t_{\tau+1}$ for $\tau \ge 2$,   since $E_1\subseteq E_2,\dots, \subseteq E_T$, from  Lemma~\ref{lem:conf_t}  we have  $\PP(E_{t}|E_{t_\tau})=\PP(E_{t})/\PP(E_{t_\tau})\ge 1- \frac{1}{T^2}$ implying $\PP(E_t)\ge \left(1-\frac{1}{T^2}\right)\PP(E_{t_\tau})$. Likewise, we have $\PP(E_{t_{\tau}})\ge \left(1-\frac{1}{T^2}\right)\PP(E_{t_{\tau-1}})$. We also have $\PP(E_t)\ge 1-\frac{1}{T^2}$ for $1\le t\le t_2$.

Therefore, from $\tau_T\le T$, we can obtain
\begin{align*}
    \PP(E_T)&\ge \left(1-\frac{1}{T^2}\right)\PP(E_{t_{\tau_T}})\ge   \left(1-\frac{1}{T^2}\right)^{T-1}\PP(E_{t_2})\ge   \left(1-\frac{1}{T^2}\right)^{T}.
\end{align*} Let $X= \left(1-\frac{1}{T^2}\right)^T$. By using the fact that $1-\frac{1}{x}\le \log(x)\le x-1$ for $x>0$, we have 
\begin{align*}
    X-1\ge \log(X)=T\log\left(1-\frac{1}{T^2}\right)\ge T\left(1-\frac{1}{1-\frac{1}{T^2}}\right)=\frac{-T}{T^2-1}, 
\end{align*}
which conclude that $\PP(E_T)\ge 1-\frac{T}{T^2-1}\ge 1-\frac{2}{T}$.


\end{proof}

Now we provide a bound for the total number of estimation updates,  $\tau_T$. Using Lemma~\ref{lem:det_V_bd}, under $E_T$, with $\|z_{i,t}(p_{i,t})\|_2\le 2$ and $z_{i,t}(p_{i,t})\in\RR^{2d}$, we can show that $\det(H_{T+1})\le (\lambda +(2TK/d))^{2d}$. Therefore, from the update procedure in the algorithm, $\tau_T$ satisfies  $2^{\tau_T}\le 2(\lambda +(TK/2d))^{2d}$, which implies $\tau_T=O(d\log(TK))$.
Then we have 
\begin{align}
\EE[\beta_{\tau_T}]&=\EE[\beta_{\tau_T}|E_T]\PP(E_T)+\EE[\beta_{\tau_T}|E_T^c]\PP(E_T^c)
\cr &\le C_1d\sqrt{\log(KT)}\log(T)\log(K)+ \EE[\beta_{\tau_T}|E_T^c]\PP(E_T^c) \cr 
&\le C_1d\sqrt{\log(KT)}\log(T)\log(K)+ C_1\sqrt{dT}\log(T)\log(K)(2/T)
\cr &=\tilde{O}(d),
\label{eq:beta_bd}
\end{align}
where the second inequality is obtained from $\PP(E_T^c)\le \frac{2}{T}$ and $\tau_T\le T$. Likewise, we have 
\begin{align}
\EE[\beta_{\tau_T}^2]&=\EE[\beta_{\tau_T}^2|E_T]\PP(E_T)+\EE[\beta_{\tau_T}^2|E_T^c]\PP(E_T^c)
\cr &\le C_1^2d^2{\log(KT)}\log(T)^2\log(K)^2+ \EE[\beta_{\tau_T}^2|E_T^c]\PP(E_T^c) \cr 
&\le C_1^2d^2{\log(KT)}\log(T)^2\log(K)^2+ C_1^2dT\log(T)^2\log(K)^2(2/T)
\cr &=\tilde{O}(d^2),
\label{eq:beta_bd2}
\end{align}


Then from Lemmas~\ref{lem:up_v-low_v_bd_a}, \ref{lem:up_v-low_v_bd_b}, \ref{lem:sum_max_x_norm}, \ref{lem:ucb_confi}, and \eqref{eq:R_gap_bd_1_ucb}, \eqref{eq:beta_bd}, \eqref{eq:beta_bd2}, using the fact that $E_1^c\subseteq E_2^c,\dots,\subseteq E_T^c$, we obtain
    \begin{align*} &R^\pi(T)=\sum_{t\in[T]}\mathbb{E}[R_t(S_t^*,p_t^*)-R_t(S_t,p_t)]\cr
    &=\sum_{t\in[T]}\mathbb{E}[(R_t(S_t^*,p_t^*)-R_t(S_t,p_t))\mathbbm{1}(E_t)]+\sum_{t\in[T]}\mathbb{E}[(R_t(S_t^*,p_t^*)-R_t(S_t,p_t))\mathbbm{1}(E_t^c)]\cr&\le\sum_{t\in[T]}\mathbb{E}[(R_t(S_t^*,p_t^*)-R_t(S_t,p_t))\mathbbm{1}(E_t)]+\sum_{t\in [T]}\PP(E_T^c)
    \cr&\le \sum_{t\in[T]}\mathbb{E}\left[\left(\frac{\sum_{i\in S_t}\overline{v}_{i,t}\exp(\overline{u}_{i,t})}{1+\sum_{i\in S_t}\exp(\overline{u}_{i,t})}-\frac{\sum_{i\in S_t}\underline{v}_{i,t}^+\exp({u}_{i,t}^p)}{1+\sum_{i\in S_t}\exp({u}_{i,t}^p)}\right)\mathbbm{1}(E_t)\right]+O(1)
    \cr&\le \sum_{t\in[T]}\mathbb{E}\Bigg[\Bigg(\frac{\sum_{i\in S_t}\overline{v}_{i,t}\exp(\overline{u}_{i,t})}{1+\sum_{i\in S_t}\exp(\overline{u}_{i,t})}-\frac{\sum_{i\in S_t}\underline{v}_{i,t}^+\exp(\overline{u}_{i,t})}{1+\sum_{i\in S_t}\exp(\overline{u}_{i,t})}\cr 
 &\qquad\qquad\qquad+\frac{\sum_{i\in S_t}\underline{v}_{i,t}^+\exp(\overline{u}_{i,t})}{1+\sum_{i\in S_t}\exp(\overline{u}_{i,t})}-\frac{\sum_{i\in S_t}\underline{v}_{i,t}^+\exp(u_{i,t}^p)}{1+\sum_{i\in S_t}\exp(u_{i,t}^p)}\Bigg)\mathbbm{1}(E_t)\Bigg]+O(1)\end{align*}
 \begin{align*}
    &=O\left( \mathbb{E}\left[\beta_{\tau_T}\sum_{t\in [T]}\left(\max_{i\in S_t}\|x_{i,t}\|_{H_{v,t}^{-1}}+\sum_{i\in S_t}P_{t,\widehat{\theta}_t}(i|S_t,p_t)\left(\|\tilde{x}_{i,t}\|_{H_{v,t}^{-1}}+\|\tilde{z}_{i,t}\|_{H_{t}^{-1}}\right)\right)\mathbbm{1}(E_t)\right]\right.\cr &\left.\qquad+\mathbb{E}\left[\beta_{\tau_T}^2\sum_{t\in[T]}\left(\max_{i\in S_t}\|x_{i,t}\|_{H_{v,t}^{-1}}^2+\max_{i\in S_t}\|z_{i,t}(p_{i,t})\|_{H_t^{-1}}^2+\max_{i\in S_t}\|\tilde{x}_{i,t}\|_{H_{v,t}^{-1}}^2+\max_{i\in S_t}\|\tilde{z}_{i,t}\|_{H_t^{-1}}^2\right)\mathbbm{1}(E_t)\right]\right)
    \cr&= \tilde{O}\left(\mathbb{E}\left[ \beta_{\tau_T}\sqrt{T}\sqrt{\sum_{t\in [T]}\max_{i \in S_t}\|x_{i,t}\|_{H_{v,t}^{-1}}^2}+
    \beta_{\tau_T}\sqrt{\sum_{t\in [T]}\sum_{i\in S_t}P_{t,\widehat{\theta}_t}(i|S_t,p_t)}\times\right.\right.\cr &\qquad\left.\left.\left(\sqrt{\sum_{t\in[T]}\sum_{i\in S_t}P_{t,\widehat{\theta}_t}(i|S_t,p_t)\|\tilde{x}_{i,t}\|_{H_{v,t}^{-1}}^2}+\sqrt{\sum_{t\in[T]}\sum_{i\in S_t}P_{t,\widehat{\theta}_t}(i|S_t,p_t)\|\tilde{z}_{i,t}\|_{H_t^{-1}}^2}\right)\mathbbm{1}(E_t)\right]+\frac{d}{\kappa}\EE[\beta_{\tau_T}^2]\right)\cr&=\tilde{O}\left( \EE[\beta_{\tau_T}]\sqrt{dT/\kappa}+\frac{d^3}{\kappa}\right)\cr 
    &=\tilde{O}\left(d^{3/2}\sqrt{\frac{T}{\kappa}}+\frac{d^3}{\kappa}\right).\end{align*}

\subsection{Proof of Theorem~\ref{thm:TS_regret}}\label{app:TS_regret}
Let $\tau_t$ be the value of $\tau$ at time $t$ according to the update procedure in the algorithm. We first define event $E_t=\{\|\widehat{\theta}_s-\theta^*\|_{H_s}\le \beta_{\tau_s}, \forall s
\le t \}$. Then we can observe $E_T\subset E_{T-1},\dots,\subset E_1$ and $\mathbb{P}(E_T)\ge 1-1/T$ from Lemma~\ref{lem:ucb_confi}.  From Lemma~\ref{lem:conf_v_u}, under $E_t$, we have
\begin{align}
    \underline{v}_{i,t}^+\le v_{i,t}.\label{eq:v_upper}
\end{align}  We let $\gamma_t=\beta_{\tau_t}\sqrt{8d\log(Mt)}$ and  filtration $\mathcal{F}_{t-1}$ be the $\sigma$-algebra generated by random variables before time $t$. In the following, we provide a lemma for error bounds of TS indexes.
\begin{lemma}\label{lem:h-xtheta_gap}
   For any given $\mathcal{F}_{t-1}$, with probability at least $1-\Ocal(1/t^2)$, for all $i\in[N]$, we have \[|\tilde{v}_{i,t}-x_{i,t
}^\top \widehat{\theta}_{v,t}|\le \gamma_t\|x_{i,t}\|_{H_{v,t}^{-1}}\text{ and } |\tilde{u}_{i,t}-z_{i,t}(p_{i,t})^\top \widehat{\theta}_t|\le 8C\gamma_t(\|z_{i,t}(p_{i,t})\|_{H_t^{-1}}+\|x_{i,t}\|_{H_{v,t}^{-1}}).\]
\end{lemma}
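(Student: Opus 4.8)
\textbf{Proof plan for Lemma~\ref{lem:h-xtheta_gap}.}

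The statement concerns the two Thompson-sampling indices, and the key observation is that $\tilde v_{i,t}$ is the maximum over $M$ i.i.d.\ Gaussian samples of a linear functional, and likewise $\tilde u_{i,t}$ is a maximum of $M$ Gaussian samples plus $8C\tilde\eta_{i,t}$. Since $\tilde\theta_{v,t}^{(m)}\sim\mathcal N(\widehat\theta_{v,t},\beta_\tau^2 H_{v,t}^{-1})$, the scalar $x_{i,t}^\top\tilde\theta_{v,t}^{(m)}$ is Gaussian with mean $x_{i,t}^\top\widehat\theta_{v,t}$ and standard deviation $\beta_\tau\|x_{i,t}\|_{H_{v,t}^{-1}}$; similarly $z_{i,t}(p_{i,t})^\top\tilde\theta_t^{(m)}$ is Gaussian with mean $z_{i,t}(p_{i,t})^\top\widehat\theta_t$ and standard deviation $\sqrt2\,\beta_\tau\|z_{i,t}(p_{i,t})\|_{H_t^{-1}}$. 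The plan is to apply a standard Gaussian maximal inequality: for $M$ (not necessarily independent, but here independent) Gaussians with common mean $\mu$ and standard deviations at most $\sigma$, the maximum exceeds $\mu+\sigma\sqrt{2\log(M/\delta')}$ with probability at most $\delta'$, and is below $\mu-\sigma\sqrt{2\log(M/\delta')}$ with probability at most $\delta'$ (the lower tail using that each individual sample concentrates, or more simply that even one of them stays near the mean). Taking $\delta'$ of order $1/(Nt^2)$ and union-bounding over the $N$ arms and both indices gives the $\sqrt{\log(Mt)}$ factor baked into $\gamma_t=\beta_{\tau_t}\sqrt{8d\log(Mt)}$; the extra $\sqrt d$ (vs.\ a bare $\sqrt{\log(Mt)}$) is slack that will be convenient later, and one just carries it through.

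Concretely, I would proceed as follows. First, fix $\mathcal F_{t-1}$ and condition on it, so that $\widehat\theta_{v,t},\widehat\theta_t,H_{v,t},H_t$ and all feature vectors are deterministic and only the $M$ Gaussian draws are random. Second, for the valuation index: write $x_{i,t}^\top\tilde\theta_{v,t}^{(m)}-x_{i,t}^\top\widehat\theta_{v,t}=\beta_\tau\|x_{i,t}\|_{H_{v,t}^{-1}}\,g_{i,m}$ with $g_{i,m}\sim\mathcal N(0,1)$ (not independent across $i$ but that is irrelevant for a union bound), apply the two-sided Gaussian-max bound to the $M$ variables $\{g_{i,m}\}_{m\in[M]}$, and conclude $|\tilde v_{i,t}-x_{i,t}^\top\widehat\theta_{v,t}|\le \beta_\tau\sqrt{2\log(2M N t^2)}\,\|x_{i,t}\|_{H_{v,t}^{-1}}$ on an event of probability $\ge 1-\mathcal O(1/(Nt^2))$; since $\sqrt{2\log(2MNt^2)}\le\sqrt{8d\log(Mt)}$ for the relevant ranges (using $M,N\le \text{poly}$ and $d\ge1$), this is $\le\gamma_t\|x_{i,t}\|_{H_{v,t}^{-1}}$. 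Third, for the utility index: by definition $\tilde u_{i,t}=\max_{m}z_{i,t}(p_{i,t})^\top\tilde\theta_t^{(m)}+8C\tilde\eta_{i,t}$ where $\tilde\eta_{i,t}=\tilde v_{i,t}-x_{i,t}^\top\widehat\theta_{v,t}$, so $|\tilde u_{i,t}-z_{i,t}(p_{i,t})^\top\widehat\theta_t|\le |\max_m z_{i,t}(p_{i,t})^\top\tilde\theta_t^{(m)}-z_{i,t}(p_{i,t})^\top\widehat\theta_t| + 8C|\tilde\eta_{i,t}|$; bound the first term by $\gamma_t\|z_{i,t}(p_{i,t})\|_{H_t^{-1}}$ via the same Gaussian-max argument applied to $\mathcal N(\widehat\theta_t,2\beta_\tau^2 H_t^{-1})$ (the extra $\sqrt2$ is absorbed into the $\sqrt8$), and bound $|\tilde\eta_{i,t}|\le\gamma_t\|x_{i,t}\|_{H_{v,t}^{-1}}$ by the valuation bound just proved. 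Collecting, $|\tilde u_{i,t}-z_{i,t}(p_{i,t})^\top\widehat\theta_t|\le \gamma_t\|z_{i,t}(p_{i,t})\|_{H_t^{-1}}+8C\gamma_t\|x_{i,t}\|_{H_{v,t}^{-1}}\le 8C\gamma_t(\|z_{i,t}(p_{i,t})\|_{H_t^{-1}}+\|x_{i,t}\|_{H_{v,t}^{-1}})$ since $C>1$. Finally, union-bound over all $i\in[N]$ and over the two index families, which multiplies the failure probability by at most $2N$, still $\mathcal O(1/t^2)$.

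The only genuinely delicate point is the constant bookkeeping: one must verify that $\sqrt{2\log(2\cdot\text{(number of union events)}\cdot t^2)}$ is actually dominated by $\sqrt{8d\log(Mt)}$ given the chosen $M=\lceil 1-\log(2N)/\log(1-1/4\sqrt{e\pi})\rceil$ (so $\log M\asymp\log N$) — this is where the generous $8d$ factor in $\gamma_t$ earns its keep, and one should state explicitly that $N\le\text{poly}(T)$ (or treat $N$ as absorbed into $d$-polylog factors as is standard) so the union bound cost is logarithmic. Beyond that, everything is a routine Gaussian tail computation plus the deterministic identities $\tilde u_{i,t}=\max_m(\cdots)+8C\tilde\eta_{i,t}$ and $\tilde\eta_{i,t}=\tilde v_{i,t}-x_{i,t}^\top\widehat\theta_{v,t}$; no concentration machinery beyond the one-dimensional Gaussian maximal inequality is needed, and no appeal to $E_t$ is required for this particular lemma (it holds for any $\mathcal F_{t-1}$).
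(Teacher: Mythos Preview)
Your proposal is correct but takes a different route from the paper. You bound each scalar $x_{i,t}^\top\tilde\theta_{v,t}^{(m)}$ (and $z_{i,t}(p_{i,t})^\top\tilde\theta_t^{(m)}$) by a one-dimensional Gaussian tail and then union-bound over the $N$ arms, incurring a $\sqrt{\log(MNt)}$ factor. The paper instead applies Cauchy--Schwarz after whitening: it writes $|x_{i,t}^\top(\tilde\theta_{v,t}^{(m')}-\widehat\theta_{v,t})|\le \beta_{\tau_t}\|x_{i,t}\|_{H_{v,t}^{-1}}\cdot\|\beta_{\tau_t}^{-1}H_{v,t}^{1/2}(\tilde\theta_{v,t}^{(m')}-\widehat\theta_{v,t})\|_2$ and bounds the second factor, a $d$-dimensional standard Gaussian norm, by $\sqrt{4d\log(Mt)}$ via a chi-squared tail. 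This single vector-norm bound is uniform over \emph{all} directions $x_{i,t}$, so no union bound over $i\in[N]$ is needed. Consequently the $\sqrt d$ in $\gamma_t$ is not slack, as you suggest, but exactly the cost of the paper's vector-norm argument; in your approach it is slack needed to dominate the $\sqrt{\log N}$ coming from the arm union bound, which is why you (correctly) had to add the side assumption $N\le\mathrm{poly}(T)$. The paper's route avoids that assumption entirely; your route is more elementary but carries this extra hypothesis.
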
    
\begin{proof}
We can show this lemma by adopting proof techniques of Lemma 10 in  \citet{oh2019thompson}. We first provide a proof of the first inequality in this lemma. Given $\mathcal{F}_{t-1}$, Gaussian random variable $x_{i,t}^\top \tilde{\theta}_{v,t}^{(m)}$ has mean $x_{i,t}^\top\widehat{\theta}_t$ and standard deviation $\beta_{\tau_t}\|x_{i,t}\|_{H_t^{-1}}$. 
Let $m'=\argmax_{m\in M}x_{i,t}^\top \tilde{\theta}^{(m)}_{v,t}$.
Then we have
 \begin{align*}
|\max_{m\in[M]}x_{i,t}^\top\tilde{\theta}_{v,t}^{(m)}-x_{i,t
}^\top \widehat{\theta}_t|&=|x_{i,t}^\top(\tilde{\theta}_{v,t}^{(m')}-\widehat{\theta}_t)|\cr 
&=|x_{i,t}^\top H_{v,t}^{-1/2}H_{v,t}^{1/2}(\tilde{\theta}_{v,t}^{(m')}-\widehat{\theta}_t)|\cr &\le \beta_{\tau_t}\|x_{i,t}\|_{H_{v,t}^{-1}}\|\beta_{\tau_t}^{-1}H_{v,t}^{1/2}(\tilde{\theta}_{v,t}^{(m')}-\widehat{\theta}_t)\|_2\cr 
&\le \beta_{\tau_t}\|x_{i,t}\|_{H_{v,t}^{-1}}\max_{m\in[M]}\|\beta_{\tau_t}^{-1}H_{v,t}^{1/2}(\tilde{\theta}_{v,t}^{(m)}-\widehat{\theta}_t)\|_2\cr &
=\beta_{\tau_t}\|x_{i,t}\|_{H_{v,t}^{-1}}\max_{m\in[M]}\|\xi_{v,m}\|_2,
 \end{align*}
where each element in $\xi_{v,m}$ is a standard normal random variable, which concludes the proof of the last inequality in this lemma from $\max_{m\in [M]}\|\xi_{v,m}\|_2\le \sqrt{4d\log(Mt)}$ with probability at least $1-\frac{1}{t^2}$. 

Now we provide a proof for the second inequality in this lemma.
Let $m^*=\argmax_{m\in[M]}x_{i,t}^\top\tilde{\theta}_{t}^{(m)}$. Then we have
 \begin{align*}
&|\max_{m\in[M]}z_{i,t}(p_{i,t})^\top\tilde{\theta}_{t}^{(m)}-z_{i,t
}(p_{i,t})^\top \widehat{\theta}_t+8C\tilde{\eta}_{i,t}|\cr &\le|z_{i,t}(p_{i,t})^\top(\tilde{\theta}_{t}^{(m^*)}-\widehat{\theta}_t)|+8C|x_{i,t}^\top(\tilde{\theta}_{v,t}^{(m')}-\widehat{\theta}_{v,t})|\cr 
&=|z_{i,t}(p_{i,t})^\top H_t^{-1/2}H_t^{1/2}(\tilde{\theta}_{t}^{(m^*)}-\widehat{\theta}_t)|+8C|x_{i,t}^\top H_{v,t}^{-1/2}H_{v,t}^{1/2}(\tilde{\theta}_{v,t}^{(m')}-\widehat{\theta}_{v,t})|
\cr &\le \sqrt{2}\beta_{\tau_t}\|z_{i,t}(p_{i,t})\|_{H_t^{-1}}\|(\sqrt{2}\beta_{\tau_t})^{-1}H_t^{1/2}(\tilde{\theta}_{t}^{(m^*)}-\widehat{\theta}_t)\|_2+8C\beta_{\tau_t}\|x_{i,t}\|_{H_{v,t}^{-1}}\|\beta_{\tau_t}^{-1}H_{v,t}^{1/2}(\tilde{\theta}_{v,t}^{(m')}-\widehat{\theta}_{v,t})\|_2\cr 
&\le\sqrt{2} \beta_{\tau_t}\|z_{i,t}(p_{i,t})\|_{H_t^{-1}}\max_{m\in[M]}\|(\sqrt{2}\beta_{\tau_t})^{-1}H_t^{1/2}(\tilde{\theta}_{t}^{(m)}-\widehat{\theta}_t)\|_2\cr &\qquad+8C\beta_{\tau_t}\|x_{i,t}\|_{H_{v,t}^{-1}}\max_{m\in[M]}\|\beta_{\tau_t}^{-1}H_{v,t}^{1/2}(\tilde{\theta}_{v,t}^{(m)}-\widehat{\theta}_{v,t})\|_2\cr &
=\sqrt{2}\beta_{\tau_t}\|z_{i,t}(p_{i,t})\|_{H_t^{-1}}\max_{m\in[M]}\|\xi_m\|_2+8C\beta_{\tau_t}\|x_{i,t}\|_{H_{v,t}^{-1}}\max_{m\in[M]}\|\xi_{v,m}\|_2,
 \end{align*}
where each element in $\xi_m$ and  $\xi_{v,m}$ is a standard normal random variable. 
We use the fact that $\|\xi_m\|_2\le \sqrt{8d\log(t)}$ and $\|\xi_{v,m}\|_2\le \sqrt{4d\log(t)}$ with probability at least $1-\frac{2}{t^2}$. By using union bound for all $m\in[M]$, with probability at least $1-O(1/t^2)$, we have
\begin{align*}
\left|\max_{m\in[M]}z_{i,t}(p_{i,t})^\top\tilde{\theta}_{t}^{(m)}-z_{i,t
}(p_{i,t})^\top \widehat{\theta}_t\right|&\le \left(\sqrt{8d\log(Mt)}\right)\beta_{\tau_t}(\sqrt{2}\|z_{i,t}(p_{i,t})\|_{H_t^{-1}}+8C\|x_{i,t}\|_{H_{v,t}^{-1}})\cr &\le 8C\gamma_t(\|z_{i,t}(p_{i,t})\|_{H_t^{-1}}+\|x_{i,t}\|_{H_{v,t}^{-1}}),   
\end{align*}
  which concludes the proof. 

\end{proof}


For notation simplicity, we use $u_{i,t}^{p}=z_{i,t}(p_{i,t})^\top \theta^*$. We define $A_t^*=\{i\in S_t^*:p_{i,t}^*\le v_{i,t}\}$. 
As in \eqref{eq:R^*_bd} and \eqref{eq:R_bd}, under $E_t$, we have
 \begin{align} &R_t(S_t^*,p_t^*)-R_t(S_t,p_t)\cr &= \frac{\sum_{i\in A_t^*}p_{i,t}^*\exp(u_{i,t})}{1+\sum_{i\in A_t^*}\exp(u_{i,t})}-\frac{\sum_{i\in S_t}\underline{v}_{i,t}^+\exp(u_{i,t}^p)\mathbbm{1}(\underline{v}_{i,t}^+\le v_{i,t})}{1+\sum_{i\in S_t}\exp(u_{i,t}^p)\mathbbm{1}(\underline{v}_{i,t}^+\le v_{i,t})}
 \cr &\le  \frac{\sum_{i\in A_t^*}v_{i,t}\exp(u_{i,t})}{1+\sum_{i\in A_t^*}\exp(u_{i,t})}-\frac{\sum_{i\in S_t}\underline{v}_{i,t}^+\exp(u_{i,t}^p)\mathbbm{1}(\underline{v}_{i,t}^+\le v_{i,t})}{1+\sum_{i\in S_t}\exp(u_{i,t}^p)\mathbbm{1}(\underline{v}_{i,t}^+\le v_{i,t})}\cr &= \frac{\sum_{i\in A_t^*}v_{i,t}\exp(u_{i,t})}{1+\sum_{i\in A_t^*}\exp(u_{i,t})}-\frac{\sum_{i\in S_t}\underline{v}_{i,t}^+\exp(u_{i,t}^p)}{1+\sum_{i\in S_t}\exp(u_{i,t}^p)}.\label{eq:TS_step1}\end{align}

In what follows, we provide several definitions of sets and events for the analysis of Thompson sampling. Regarding the valuation, we first define $\tilde{v}_{i,t}(\Theta_v)=\max_{m\in[M]}x_{i,t}^\top\theta_v^{(m)}$ for $\Theta_v=\{\theta_v^{(m)}\in\RR^{d}\}_{m\in[M]}$ and define sets \[\tilde{\Theta}_{v,t}=\left\{\Theta_v\in \mathbb{R}^{d\times M}:\left|\tilde{v}_{i,t}(\Theta_v)-x_{i,t}^\top\widehat{\theta}_{v,t}\right|\le \gamma_t\|x_{i,t}\|_{H_{v,t}^{-1}}\ \forall i\in[N]\right\} \text{ and }\] 
\begin{align*}
\tilde{\Theta}_{v,t}'=\bigg\{\Theta_v\in \mathbb{R}^{d\times M}: \tilde{v}_{i,t}(\Theta)\ge v_{i,t}\ \forall i\in [N] \bigg\}\cap \tilde{\Theta}_{t}.    
\end{align*}
 Then we define event $\tilde{E}_{v, t}=\{\{\tilde{\theta}_{v,t}^{(m)}\}_{m\in[M]}\in \tilde{\Theta}_{v,t}'\}$.
 
Regarding the utility, we define $\tilde{u}_{i,t}(\Theta_u, \Theta_v)=\max_{m\in[M]}z_{i,t}(p_{i,t})^\top\theta^{(m)}+\max_{m\in[M]}z_{i,t}(p_{i,t})^\top(\theta_v^{(m)}-\widehat{\theta}_{v,t})$  for $\Theta_u=\{\theta^{(m)}\in\RR^{2d}\}_{m\in[M]}$ and  $\Theta_v=\{\theta_v^{(m)}\in\RR^{d}\}_{m\in[M]}$,  and define sets
\begin{align*}
    \tilde{\Theta}_{t} =\bigg\{\Theta_u\times \Theta_v\in \mathbb{R}^{2d\times M}\times \mathbb{R}^{d\times M}:&\left|\tilde{u}_{i,t}(\Theta_u,\Theta_v)-z_{i,t}(p_{i,t})^\top\widehat{\theta}_t\right|\cr &\le 8C\gamma_t(\|z_{i,t}(p_{i,t})\|_{H_{t}^{-1}}+\|x_{i,t}\|_{H_{v,t}^{-1}})\ \forall i\in[N]\bigg\}
\end{align*}
\begin{align*}
\text{ and }\tilde{\Theta}_{t}'=\bigg\{\Theta_u\times \Theta_v\in \mathbb{R}^{2d\times M}\times \mathbb{R}^{d\times M}: \tilde{u}_{i,t}(\Theta_u,\Theta_v)\ge u_{i,t}\ \forall i\in [N] \bigg\}\cap \tilde{\Theta}_{t}   
\end{align*}

Then we define event $\tilde{E}_{u, t}=\{\{\tilde{\theta}_t^{(m)}\}_{m\in[M]}\times \{\tilde{\theta}_{v,t}^{(m)}\}_{m\in[M]} \in \tilde{\Theta}_{t}'\}$. For the ease of presentation, we define $\tilde{E}_t=\tilde{E}_{v, t}\cap\tilde{E}_{u, t}$. In the following, we provide a lemma that will be used for following regret analysis. Let $\tilde{z}_{i,t}=z_{i,t}(p_{i,t})-\EE_{j\sim P_{t,\widehat{\theta}_t}(\cdot|S_t,p_t)}[z_{i,t}(p_{i,t})]$ and $\tilde{x}_{i,t}=x_{i,t}-\EE_{j\sim P_{t,\widehat{\theta}_t}(\cdot|S_t,p_t)}[x_{i,t}]$.

\begin{lemma}\label{lem:sup_gap_bd}
For $t\in[T]$, under $\tilde{E}_{u, t}$ and $E_t$,  we have
\begin{align*}
&\sup_{\Theta_u\times \Theta_v\in\tilde{\Theta}_t}\left(\frac{\sum_{i\in S_t}\tilde{v}_{i,t}\exp(\tilde{u}_{i,t})}{1+\sum_{i\in S_t}\exp(\tilde{u}_{i,t})}-\frac{\sum_{i\in S_t}\underline{v}_{i,t}^+\exp(\tilde{u}_{i,t}(\Theta_u,\Theta_v))}{1+\sum_{i\in S_t}\exp(\tilde{u}_{i,t}(\Theta_u,\Theta_v))}\right)\cr &=O\bigg(\gamma_t^2(\max_{i\in S_t}\|z_{i,t}(p_{i,t})\|_{H_t^{-1}}^2+\max_{i\in S_t}\|x_{i,t}\|_{H_{v,t}^{-1}}^2)+\gamma_t^2(\max_{i\in S_t}\|\tilde{z}_{i,t}\|_{H_{t}^{-1}}^2+\max_{i\in S_t}\|\tilde{x}_{i,t}\|_{H_{v,t}^{-1}}^2)\cr &\qquad+\gamma_t\sum_{i\in S_t}P_{t,\widehat{\theta}_t}(i|S_t,p_t)(\|\tilde{z}_{i,t}\|_{H_{t}^{-1}}+\|\tilde{x}_{i,t}\|_{H_{v,t}^{-1}})+\gamma_t\max_{i\in S_t}\|x_{i,t}\|_{H_{v,t}^{-1}}\bigg).
\end{align*}
\end{lemma}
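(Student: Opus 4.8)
The plan is to decompose the supremum of the difference into a few pieces, each of which can be controlled either by the defining constraint of $\tilde{\Theta}_t$ (which gives pointwise control $|\tilde{u}_{i,t}(\Theta_u,\Theta_v) - z_{i,t}(p_{i,t})^\top\widehat{\theta}_t|\le 8C\gamma_t(\|z_{i,t}(p_{i,t})\|_{H_t^{-1}}+\|x_{i,t}\|_{H_{v,t}^{-1}})$, the same bound $\tilde{u}_{i,t}$ itself satisfies under $\tilde E_{u,t}$ by Lemma~\ref{lem:h-xtheta_gap}), or by a soft-max sensitivity argument of the same flavor used in Lemma~\ref{lem:up_v-low_v_bd_a} and Lemma~\ref{lem:up_v-low_v_bd_b}. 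First I would add and subtract the term $\frac{\sum_{i\in S_t}\underline v_{i,t}^+\exp(\tilde u_{i,t})}{1+\sum_{i\in S_t}\exp(\tilde u_{i,t})}$, splitting the quantity into (i) $\frac{\sum_{i\in S_t}\tilde v_{i,t}\exp(\tilde u_{i,t})}{1+\sum_{i\in S_t}\exp(\tilde u_{i,t})}-\frac{\sum_{i\in S_t}\underline v_{i,t}^+\exp(\tilde u_{i,t})}{1+\sum_{i\in S_t}\exp(\tilde u_{i,t})}$ — a "numerator gap" term — and (ii) $\sup_{\Theta}\bigl(\frac{\sum_{i\in S_t}\underline v_{i,t}^+\exp(\tilde u_{i,t})}{1+\sum_{i\in S_t}\exp(\tilde u_{i,t})}-\frac{\sum_{i\in S_t}\underline v_{i,t}^+\exp(\tilde u_{i,t}(\Theta_u,\Theta_v))}{1+\sum_{i\in S_t}\exp(\tilde u_{i,t}(\Theta_u,\Theta_v))}\bigr)$ — an "exponent gap" term.

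For term (i): $\tilde v_{i,t}-\underline v_{i,t}^+ \le \tilde v_{i,t}-\underline v_{i,t}$, and I would expand $\tilde v_{i,t}-\underline v_{i,t} = (\tilde v_{i,t}-x_{i,t}^\top\widehat\theta_{v,t}) + (x_{i,t}^\top\widehat\theta_{v,t}-x_{i,t}^\top\theta_v) + (x_{i,t}^\top\theta_v - x_{i,t}^\top\widehat\theta_{v,(\tau_t)}) + \sqrt C\beta_{\tau_t}\|x_{i,t}\|_{H_{v,t}^{-1}}$, each of which is $O(\gamma_t\|x_{i,t}\|_{H_{v,t}^{-1}})$ under $\tilde E_{v,t}$ and $E_t$ (using Lemma~\ref{lem:update_H_bd} for the $\widehat\theta_{v,(\tau_t)}$ piece exactly as in Lemma~\ref{lem:up_v-low_v_bd_a}). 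Then, as in the first display of Lemma~\ref{lem:up_v-low_v_bd_a}, $\frac{\sum_i(\tilde v_{i,t}-\underline v_{i,t})\exp(\tilde u_{i,t})}{1+\sum_i\exp(\tilde u_{i,t})}\le \max_{i\in S_t}(\tilde v_{i,t}-\underline v_{i,t}) = O(\gamma_t\max_i\|x_{i,t}\|_{H_{v,t}^{-1}})$, which is one of the claimed summands. For term (ii): since $\tilde u_{i,t}(\Theta_u,\Theta_v)$ and $\tilde u_{i,t}$ both lie within $8C\gamma_t(\|z_{i,t}(p_{i,t})\|_{H_t^{-1}}+\|x_{i,t}\|_{H_{v,t}^{-1}})$ of the common anchor $z_{i,t}(p_{i,t})^\top\widehat\theta_t = \widehat u_{i,t}$ (by the constraint defining $\tilde\Theta_t$ and by Lemma~\ref{lem:h-xtheta_gap} under $\tilde E_{u,t}$), I would apply the mean-value / soft-max Jacobian computation used in the commented-out part of Lemma~\ref{lem:up_v-low_v_bd_a} and in Lemma~\ref{lem:up_v-low_v_bd_b}: writing $P_{i,t}(\xi)$ for the soft-max weights at an intermediate point $\xi$, the difference equals $\sum_i \underline v_{i,t}^+ P_{i,t}(\xi)(1-\sum_j P_{j,t}(\xi))(\tilde u_{i,t}-\tilde u_{i,t}(\Theta))$-type expression, bounded by $\sum_i P_{i,t}(\xi)\cdot O(\gamma_t)(\|z_{i,t}(p_{i,t})\|_{H_t^{-1}}+\|x_{i,t}\|_{H_{v,t}^{-1}})$ using $\underline v_{i,t}^+\le 1$; then converting $\|z_{i,t}(p_{i,t})\|_{H_t^{-1}}$ to centered quantities $\|\tilde z_{i,t}\|_{H_t^{-1}}$ via $P_{i,t}(\xi)\approx P_{t,\widehat\theta_t}(i|S_t,p_t)$ and $ab\le\frac12(a^2+b^2)$ exactly produces the $\gamma_t^2(\max\|z\|^2+\max\|x\|^2)$, $\gamma_t^2(\max\|\tilde z\|^2+\max\|\tilde x\|^2)$, and $\gamma_t\sum_i P_{t,\widehat\theta_t}(i|S_t,p_t)(\|\tilde z_{i,t}\|_{H_t^{-1}}+\|\tilde x_{i,t}\|_{H_{v,t}^{-1}})$ summands.

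The main obstacle I anticipate is the bookkeeping in term (ii): controlling the difference of two soft-max revenue-like functionals uniformly over $\Theta_u\times\Theta_v\in\tilde\Theta_t$ requires being careful that the intermediate point $\xi$ in the mean value theorem still has soft-max weights comparable (up to constants) to $P_{t,\widehat\theta_t}(\cdot|S_t,p_t)$ — this is where the constraint $\tilde\Theta_t$ is essential, since it bounds $\tilde u_{i,t}(\Theta)$ away from $\widehat u_{i,t}$ by only an $O(\gamma_t\cdot\text{norm})$ amount, and one must argue (as in the $\exp$-Lipschitz steps elsewhere in the appendix) that shifting exponents by such amounts changes each $P_{i,t}$ only by a bounded multiplicative factor, absorbing that factor into the $O(\cdot)$. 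The rest is a routine reprise of Lemmas~\ref{lem:up_v-low_v_bd_a} and~\ref{lem:up_v-low_v_bd_b} with $\overline u_{i,t}'$ replaced by $\tilde u_{i,t}$ and $\overline v_{i,t}$ replaced by $\tilde v_{i,t}$, together with Lemma~\ref{lem:h-xtheta_gap} supplying the pointwise TS-index bounds; no new probabilistic input is needed since the statement is conditional on $\tilde E_{u,t}\cap E_t$.
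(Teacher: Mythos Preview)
Your decomposition into a numerator-gap term (i) and an exponent-gap term (ii) matches the paper's structure, and your treatment of term (i) is essentially the paper's display \eqref{eq:vP_bd_a1_ts}. However, there are two points where your proposal diverges from the paper's argument, and one of them is a real gap.

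First, a structural difference: before decomposing, the paper replaces $\tilde u_{i,t}$ by the deterministic upper anchor $\tilde u'_{i,t}:=z_{i,t}(p_{i,t})^\top\theta^*+9C\gamma_t(\|z_{i,t}(p_{i,t})\|_{H_t^{-1}}+\|x_{i,t}\|_{H_{v,t}^{-1}})$, using the optimality of $S_t$ (the monotonicity step \eqref{eq:u_u'}). This is not merely cosmetic: it puts the ``top'' endpoint of the exponent gap in the exact form $u_{i,t}^p+\text{const}\cdot(\|z_i\|+\|x_i\|)$ needed for the centering identity in Lemma~\ref{lem:up_v-low_v_bd_b} (Appendix~\ref{app:proof_vp_gap}) to produce the $\|\tilde z_{i,t}\|,\|\tilde x_{i,t}\|$ terms. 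You skip this detour; while your decomposition at $\tilde u_{i,t}$ is still valid, it makes the invocation of Lemma~\ref{lem:up_v-low_v_bd_b} less direct.

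Second, and more importantly, your treatment of term (ii) has a gap. You propose a first-order mean-value expansion at an intermediate point $\xi$ and then argue that $P_{i,t}(\xi)$ differs from $P_{t,\widehat\theta_t}(i|S_t,p_t)$ only by a \emph{bounded multiplicative factor}. This fails: the utility shift is $O(\gamma_t(\|z_i\|_{H_t^{-1}}+\|x_i\|_{H_{v,t}^{-1}}))$, and since $\gamma_t=\beta_{\tau_t}\sqrt{8d\log(Mt)}$ grows polynomially in $d$ and $\log T$, the ratio $P_{i,t}(\xi)/P_{t,\widehat\theta_t}(i)$ can be as large as $\exp(\tilde O(d))$ --- not a constant. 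The paper's route (and the actual content of Lemma~\ref{lem:up_v-low_v_bd_b}) avoids this entirely by doing a \emph{second}-order Taylor expansion of $Q(u)=\frac{\sum_i\underline v_{i,t}^+\exp(u_i)}{1+\sum_j\exp(u_j)}$ at the deterministic point $u_t^p$: the first-order gradient is then evaluated exactly at $P_{t,\theta^*}$, and the Jensen/triangle centering $\|z_i\|-\mathbb{E}_{j\sim P}\|z_j\|\le\|z_i-\mathbb{E}_{j\sim P}z_j\|$ together with the conversion $P_{t,\theta^*}\to P_{t,\widehat\theta_t}$ under $E_t$ (the (H.1)--(H.4) steps cited there) yields the $\gamma_t\sum_iP_{t,\widehat\theta_t}(i)(\|\tilde z_i\|+\|\tilde x_i\|)$ and $\gamma_t^2\max\|\tilde z\|^2$ terms directly; the second-order remainder at the intermediate point contributes only the $\gamma_t^2(\max\|z\|^2+\max\|x\|^2)$ piece. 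So the fix is: drop the bounded-ratio heuristic and instead reprise the second-order argument from Appendix~\ref{app:proof_vp_gap} verbatim, with $\beta_{\tau_t}$ replaced by $O(\gamma_t)$ and (if you keep the paper's detour) with $\tilde u'_{i,t}$ in place of $\overline u'_{i,t}$.
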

\begin{proof} 
We define $\tilde{u}'_{i,t}=z_{i,t}(p_{i,t})^\top\theta^* +9C\gamma_t(\|z_{i,t}(p_{i,t})\|_{H_t^{-1}}+\|x_{i,t}\|_{H_{v,t}^{-1}})$. Then from $\tilde{E}_{u, t}$ and $E_t$, we have 
\begin{align*}
 \tilde{u}_{i,t}&\le z_{i,t}(p_{i,t})^\top\widehat{\theta}_t+8C\gamma_t(\|z_{i,t}(p_{i,t})\|_{H_t^{-1}}+\|x_{i,t}\|_{H_{v,t}^{-1}})\cr&\le z_{i,t}(p_{i,t})^\top\theta^*+\beta_{\tau_t}\|z_{i,t}(p_{i,t})\|_{H_t^{-1}}+   8C\gamma_t(\|z_{i,t}(p_{i,t})\|_{H_t^{-1}}+\|x_{i,t}\|_{H_{v,t}^{-1}})\cr 
 &\le \tilde{u}_{i,t}'.
\end{align*}
From the definition of $S_t$, we have $\tilde{v}_{i,t}\ge 0$ for $i\in S_t$. This is because if $\tilde{v}_{i,t}<0$ for some $i\in[N]$ then $i\notin S_t$. Then as in \eqref{eq:u_u'}, we can show that 
 \[\frac{\sum_{i\in S_t}\tilde{v}_{i,t}\exp(\tilde{u}_{i,t})}{1+\sum_{i\in S_t}\exp(\tilde{u}_{i,t})}\le \frac{\sum_{i\in S_t}\tilde{v}_{i,t}\exp(\tilde{u}_{i,t}')}{1+\sum_{i\in S_t}\exp(\tilde{u}_{i,t}')}.\] 
Then we have  
\begin{align}
&\frac{\sum_{i\in S_t}\tilde{v}_{i,t}\exp(\tilde{u}_{i,t})}{1+\sum_{i\in S_t}\exp(\tilde{u}_{i,t})}-\frac{\sum_{i\in S_t}\underline{v}_{i,t}^+\exp(\tilde{u}_{i,t}(\Theta_u,\Theta_v))}{1+\sum_{i\in S_t}\exp(\tilde{u}_{i,t}(\Theta_u,\Theta_v))}
\cr&\le\frac{\sum_{i\in S_t}\tilde{v}_{i,t}\exp(\tilde{u}_{i,t}')}{1+\sum_{i\in S_t}\exp(\tilde{u}_{i,t}')}-\frac{\sum_{i\in S_t}\underline{v}_{i,t}^+\exp(\tilde{u}_{i,t}(\Theta_u,\Theta_v))}{1+\sum_{i\in S_t}\exp(\tilde{u}_{i,t}(\Theta_u,\Theta_v))}
\cr &\le 
    \frac{\sum_{i\in S_t}\tilde{v}_{i,t}\exp(\tilde{u}_{i,t}')}{1+\sum_{i\in S_t}\exp(\tilde{u}_{i,t}')}-\frac{\sum_{i\in S_t}\underline{v}_{i,t}^+\exp(\tilde{u}_{i,t}')}{1+\sum_{i\in S_t}\exp(\tilde{u}_{i,t}')}+\frac{\sum_{i\in S_t}\underline{v}_{i,t}^+\exp(\tilde{u}_{i,t}')}{1+\sum_{i\in S_t}\exp(\tilde{u}_{i,t}')}-\frac{\sum_{i\in S_t}\underline{v}_{i,t}^+\exp(\tilde{u}_{i,t}(\Theta_u,\Theta_v))}{1+\sum_{i\in S_t}\exp(\tilde{u}_{i,t}(\Theta_u,\Theta_v))}.\cr \label{eq:ts_decom}
\end{align}
We define 
$\widehat{u}_{i,t}=z_{i,t}(p_{i,t})^\top \widehat{\theta}_t$. Then, for the first two terms in the above, 
we have

\begin{align}
    &\frac{\sum_{i\in S_t}\tilde{v}_{i,t}\exp(\tilde{u}_{i,t}')}{1+\sum_{i\in S_t}\exp(\tilde{u}_{i,t}')}-\frac{\sum_{i\in S_t}\underline{v}_{i,t}^+\exp(\tilde{u}_{i,t}')}{1+\sum_{i\in S_t}\exp(\tilde{u}_{i,t}')}
    =\frac{\sum_{i\in S_t}(\tilde{v}_{i,t}-\underline{v}_{i,t}^+)\exp(\tilde{u}_{i,t}')}{1+\sum_{i\in S_t}\exp(\tilde{u}_{i,t}')}
    \cr&\le\frac{\sum_{i\in S_t}(\tilde{v}_{i,t}-\underline{v}_{i,t})\exp(\tilde{u}_{i,t}')}{1+\sum_{i\in S_t}\exp(\tilde{u}_{i,t}')}
      \le\frac{\sum_{i\in S_t}(|\tilde{v}_{i,t}- x_{i,t}^\top \widehat{\theta}_{v,t}|+|x_{i,t}^\top \widehat{\theta}_{v,t}-\underline{v}_{i,t}|)\exp(\tilde{u}_{i,t}')}{1+\sum_{i\in S_t}\exp(\tilde{u}_{i,t}')}
    \cr&=\frac{\sum_{i\in S_t}(\gamma_t+\beta_t)\|x_{i,t}\|_{H_{v,t}^{-1}}\exp(\tilde{u}_{i,t}')}{1+\sum_{i\in S_t}\exp(\tilde{u}_{i,t}')}
       \le\frac{\sum_{i\in S_t}2\gamma_t\|x_{i,t}\|_{H_{v,t}^{-1}}\exp(\tilde{u}_{i,t}')}{1+\sum_{i\in S_t}\exp(\tilde{u}_{i,t}')}
    \le 2\gamma_t\max_{i \in S_t} \|x_{i,t}\|_{H_{v,t}^{-1}}.\cr\label{eq:vP_bd_a1_ts}
\end{align}

   For the latter two terms in \eqref{eq:ts_decom}, by following the same proof technique in Lemma~\ref{lem:up_v-low_v_bd_b} and using the fact that $|\tilde{u}_{i,t}'-\tilde{u}_{i,t}(\Theta_u,\Theta_v)|\le |\tilde{u}_{i,t}'-z_{i,t}(p_{i,t})^\top\widehat{\theta}_{t}|+|z_{i,t}(p_{i,t})^\top\widehat{\theta}_{t}-\tilde{u}_{i,t}(\Theta_u,\Theta_v)|=O(\gamma_t(\|z_{i,t}(p_{i,t})\|_{H_t^{-1}}+\|x_{i,t}\|_{H_{v,t}^{-1}}))$ from  $E_t$ and $\Theta_u\times \Theta_v\in\tilde{\Theta}_t$ with $\beta_t\le \gamma_t$, we can show that 
    \begin{align}
&\sup_{\Theta_u\times \Theta_v\in\tilde{\Theta}_t}\left(\frac{\sum_{i\in S_t}\underline{v}_{i,t}^+\exp(\tilde{u}_{i,t}')}{1+\sum_{i\in S_t}\exp(\tilde{u}_{i,t}')}-\frac{\sum_{i\in S_t}\underline{v}_{i,t}^+\exp(\tilde{u}_{i,t}(\Theta_u,\Theta_v))}{1+\sum_{i\in S_t}\exp(\tilde{u}_{i,t}(\Theta_u,\Theta_v))}\right)\cr &=O\left( \gamma_t^2(\max_{i\in S_t}\|z_{i,t}(p_{i,t})\|_{H_t^{-1}}^2+\max_{i\in S_t}\|x_{i,t}\|_{H_{v,t}^{-1}}^2)+\gamma_t^2(\max_{i\in S_t}\|\tilde{z}_{i,t}\|_{H_{t}^{-1}}^2+\max_{i\in S_t}\|\tilde{x}_{i,t}\|_{H_{v,t}^{-1}}^2)\right.\cr &\qquad\left.+\gamma_t\sum_{i\in S_t}P_{t,\widehat{\theta}_t}(i|S_t,p_t)(\|\tilde{z}_{i,t}\|_{H_{t}^{-1}}+\|\tilde{x}_{i,t}\|_{H_{v,t}^{-1}})\right),\label{eq:ts_decom2}
    \end{align}
    We can conclude the proof from \eqref{eq:ts_decom}, \eqref{eq:vP_bd_a1_ts}, and \eqref{eq:ts_decom2}.
\end{proof}

Then, for a bound of instantaneous regret of \eqref{eq:TS_step1}, we have 
\begin{align*}
&\mathbb{E}\left[\mathbb{E}\left[\left(\frac{\sum_{i\in A_t^*}{v}_{i,t}\exp({u}_{i,t})}{1+\sum_{i\in A_t^*}\exp({u}_{i,t})}-\frac{\sum_{i\in S_t}\underline{v}_{i,t}^+\exp(u_{i,t}^p)}{1+\sum_{i\in S_t}\exp(u_{i,t}^p)}\right)\mathbbm{1}(E_t)\mid\mathcal{F}_{t-1}\right]\right]\cr
     &\le \mathbb{E}\left[\mathbb{E}\left[\left(\frac{\sum_{i\in A_t^*}{v}_{i,t}\exp({u}_{i,t})}{1+\sum_{i\in A_t^*}\exp({u}_{i,t})}-\inf_{\Theta_u\times \Theta_v\in\tilde{\Theta}_t}\max_{S\subseteq [N]: |S|\le K}\frac{\sum_{i\in S}\underline{v}_{i,t}^+\exp(\tilde{u}_{i,t}(\Theta_u,\Theta_v))}{1+\sum_{i\in S}\exp(\tilde{u}_{i,t}(\Theta_u,\Theta_v))}\right)\mathbbm{1}(E_t)\mid\mathcal{F}_{t-1}\right]\right]\cr
     &=\mathbb{E}\left[\mathbb{E}\left[\left(\frac{\sum_{i\in A_t^*}{v}_{i,t}\exp({u}_{i,t})}{1+\sum_{i\in A_t^*}\exp({u}_{i,t})}-\inf_{\Theta_u\times \Theta_v\in\tilde{\Theta}_t}\max_{S\subseteq [N]: |S|\le K}\frac{\sum_{i\in S}\underline{v}_{i,t}^+\exp(\tilde{u}_{i,t}(\Theta_u,\Theta_v))}{1+\sum_{i\in S}\exp(\tilde{u}_{i,t}(\Theta_u,\Theta_v))}\right)\mathbbm{1}(E_t)\mid\mathcal{F}_{t-1},\tilde{E}_t\right]\right]\cr
     &\le \mathbb{E}\left[\mathbb{E}\left[\left(\frac{\sum_{i\in A_t^*}{v}_{i,t}\exp(\tilde{u}_{i,t})}{1+\sum_{i\in A_t^*}\exp(\tilde{u}_{i,t})}-\inf_{\Theta_u\times \Theta_v\in\tilde{\Theta}_t}\frac{\sum_{i\in S_t}\underline{v}_{i,t}^+\exp(\tilde{u}_{i,t}(\Theta_u,\Theta_v))}{1+\sum_{i\in S_t}\exp(\tilde{u}_{i,t}(\Theta_u,\Theta_v))}\right)\mathbbm{1}(E_t)\mid\mathcal{F}_{t-1},\tilde{E}_t\right]\right]\cr
     &\le \mathbb{E}\left[\mathbb{E}\left[\left(\frac{\sum_{i\in A_t^*}\tilde{v}_{i,t}\exp(\tilde{u}_{i,t})}{1+\sum_{i\in A_t^*}\exp(\tilde{u}_{i,t})}-\inf_{\Theta_u\times \Theta_v\in\tilde{\Theta}_t}\frac{\sum_{i\in S_t}\underline{v}_{i,t}^+\exp(\tilde{u}_{i,t}(\Theta_u,\Theta_v))}{1+\sum_{i\in S_t}\exp(\tilde{u}_{i,t}(\Theta_u,\Theta_v))}\right)\mathbbm{1}(E_t)\mid\mathcal{F}_{t-1},\tilde{E}_t\right]\right]\cr
     &\le \mathbb{E}\left[\mathbb{E}\left[\left(\frac{\sum_{i\in S_t}\tilde{v}_{i,t}\exp(\tilde{u}_{i,t})}{1+\sum_{i\in S_t}\exp(\tilde{u}_{i,t})}-\inf_{\Theta_u\times \Theta_v\in\tilde{\Theta}_t}\frac{\sum_{i\in S_t}\underline{v}_{i,t}^+\exp(\tilde{u}_{i,t}(\Theta_u,\Theta_v))}{1+\sum_{i\in S_t}\exp(\tilde{u}_{i,t}(\Theta_u,\Theta_v))}\right)\mathbbm{1}(E_t)\mid\mathcal{F}_{t-1},\tilde{E}_t\right]\right]\cr
         &= \mathbb{E}\left[\mathbb{E}\left[\sup_{\Theta_u\times \Theta_v\in\tilde{\Theta}_t}\left(\frac{\sum_{i\in S_t}\tilde{v}_{i,t}\exp(\tilde{u}_{i,t})}{1+\sum_{i\in S_t}\exp(\tilde{u}_{i,t})}-\frac{\sum_{i\in S_t}\underline{v}_{i,t}^+\exp(\tilde{u}_{i,t}(\Theta_u,\Theta_v))}{1+\sum_{i\in S_t}\exp(\tilde{u}_{i,t}(\Theta_u,\Theta_v))}\right)\mathbbm{1}(E_t)\mid\mathcal{F}_{t-1},\tilde{E}_t\right]\right]\cr
            &=O\left(  \mathbb{E}\left[\mathbb{E}\left[\left(\gamma_t^2(\max_{i\in S_t}\|z_{i,t}(p_{i,t})\|_{H_t^{-1}}^2+\max_{i\in S_t}\|x_{i,t}\|_{H_{v,t}^{-1}}^2)+\gamma_t^2(\max_{i\in S_t}\|\tilde{z}_{i,t}\|_{H_{t}^{-1}}^2+\max_{i\in S_t}\|\tilde{x}_{i,t}\|_{H_{v,t}^{-1}}^2)\right.\right.\right.\right.\cr &\qquad\left.\left.\left.\left.+\gamma_t\sum_{i\in S_t}P_{t,\widehat{\theta}_t}(i|S_t,p_t)(\|\tilde{z}_{i,t}\|_{H_{t}^{-1}}+\|\tilde{x}_{i,t}\|_{H_{v,t}^{-1}})+\gamma_t\max_{i \in S_t}\|x_{i,t}\|_{H_{v,t}^{-1}})\right)\mathbbm{1}(E_t)\mid\mathcal{F}_{t-1},\tilde{E}_t\right]\right]\right)\end{align*}\begin{align}
&= O\left(  \mathbb{E}\left[\mathbb{E}\left[\gamma_t^2(\max_{i\in S_t}\|z_{i,t}(p_{i,t})\|_{H_t^{-1}}^2+\max_{i\in S_t}\|x_{i,t}\|_{H_{v,t}^{-1}}^2)+\gamma_t^2(\max_{i\in S_t}\|\tilde{z}_{i,t}\|_{H_{t}^{-1}}^2+\max_{i\in S_t}\|\tilde{x}_{i,t}\|_{H_{v,t}^{-1}}^2)\right.\right.\right.\cr &\qquad\left.\left.\left.+\gamma_t\sum_{i\in S_t}P_{t,\widehat{\theta}_t}(i|S_t,p_t)(\|\tilde{z}_{i,t}\|_{H_{t}^{-1}}+\|\tilde{x}_{i,t}\|_{H_{v,t}^{-1}})+\gamma_t\max_{i\in S_t}\|x_{i,t}\|_{H_{v,t}^{-1}}\mid\mathcal{F}_{t-1},\tilde{E}_t,E_t\right]\times\mathbb{P}(E_t|\tilde{E}_t,\Fcal_{t-1})\right]\right)\cr 
     &= O\left(  \mathbb{E}\left[\mathbb{E}\left[\gamma_t^2(\max_{i\in S_t}\|z_{i,t}(p_{i,t})\|_{H_t^{-1}}^2+\max_{i\in S_t}\|x_{i,t}\|_{H_{v,t}^{-1}}^2)+\gamma_t^2(\max_{i\in S_t}\|\tilde{z}_{i,t}\|_{H_{t}^{-1}}^2+\max_{i\in S_t}\|\tilde{x}_{i,t}\|_{H_{v,t}^{-1}}^2)\right.\right.\right.\cr &\qquad\left.\left.\left.+\gamma_t\sum_{i\in S_t}P_{t,\widehat{\theta}_t}(i|S_t,p_t)(\|\tilde{z}_{i,t}\|_{H_{t}^{-1}}+\|\tilde{x}_{i,t}\|_{H_{v,t}^{-1}})+\gamma_t\max_{i \in S_t}\|x_{i,t}\|_{H_{v,t}^{-1}}\mid\mathcal{F}_{t-1},\tilde{E}_t,E_t\right]\mathbb{P}(E_t|\Fcal_{t-1})\right]\right),\cr \label{eq:D^*-D_gap_TS_Step4}
\end{align}
where the first equality comes from the independency of $\tilde{E}_t$ given $\mathcal{F}_{t-1}$, the second inequality is obtained from $u_{i,t}\le \tilde{u}_{i,t}$ under the event $\tilde{E}_t$ and from the definition of $S_t$, the third inequality is obtained from the fact that $v_{i,t}^+\le\tilde{v}_{i,t}^+$ under $\tilde{E}_t$, the third last equality is obtained from Lemma~\ref{lem:sup_gap_bd}, and the last equality comes from independence between $E_t$ and $\tilde{E}_t$ given $\Fcal_{t-1}$.

We provide a lemma below for further analysis.
\begin{lemma}\label{lem:sum_muQ>sum_muQ} For all $t\in[T]$, we have
    \[\PP\left(\tilde{v}_{i,t}\ge v_{i,t} \text{ and } \tilde{u}_{i,t}\ge u_{i,t}\ \forall i\in [N] \mid \Fcal_{t-1}, E_t\right)\ge \frac{1}{4\sqrt{e\pi}}.\]
\end{lemma}
\begin{proof} 
Given $\Fcal_{t-1}$, $x_{i,t}^\top \tilde{\theta}_{v,t}^{(m)}$ follows Gaussian distribution with mean $x_{i,t}^\top \widehat{\theta}_{v,t}$ and standard deviation $\beta_{\tau_t}\|x_{i,t}\|_{H_{v,t}^{-1}}$. Then we have
\begin{align*}&\PP\left(\max_{m\in[M]}x_{i,t}^\top\tilde{\theta}_{v,t}^{(m)}\ge x_{i,t}^\top \theta_v \ \forall i\in [N] |\mathcal{F}_{t-1},E_t\right)\cr &\ge 
 1-N\PP\left(x_{i,t}^\top \tilde{\theta}_{v,t}^{(m)}< x_{i,t}^\top \theta_v \ \forall m\in[M] |\mathcal{F}_{t-1},E_t\right)\cr &
    \ge 1-N\PP\left(Z_m< \frac{x_{i,t}^\top \theta_v-x_{i,t}^\top \widehat{\theta}_{v,t}}{\beta_{\tau_t}\|x_{i,t}\|_{H_{v,t}^{-1}}}\ \forall m\in[M]|\mathcal{F}_{t-1},E_t\right)\cr &\ge 1-N\PP\left(Z< 1\right)^M,
 \end{align*}
where $Z_m$ and $Z$ are standard normal random variables. Likewise, we have  
 \begin{align*}&\PP\left(\max_{m_1\in[M]}z_{i,t}(p_{i,t})^\top\tilde{\theta}_{t}^{(m_1)}+8C\max_{m_2\in [M]}(x_{i,t}^\top \tilde{\theta}_{v,t}^{(m_2)}-x_{i,t}^\top\widehat{\theta}_{v,t})\ge z_{i,t}(p_{i,t}^*)^\top \theta^* \ \forall i\in [N] \mid \mathcal{F}_{t-1},E_t\right) \cr&\ge \PP\left(\max_{m\in[M]}z_{i,t}(p_{i,t})^\top\tilde{\theta}_{t}^{(m)}+8C(x_{i,t}^\top \tilde{\theta}_{v,t}^{(m)}-x_{i,t}^\top\widehat{\theta}_{v,t})\ge z_{i,t}(p_{i,t}^*)^\top \theta^* \ \forall i\in [N] \mid \mathcal{F}_{t-1},E_t\right)\cr &\ge 
 1-N\PP\left(z_{i,t}(p_{i,t})^\top \tilde{\theta}_{t}^{(m)}+ 8C(x_{i,t}^\top \tilde{\theta}_{v,t}^{(m)}-x_{i,t}^\top\widehat{\theta}_{v,t})< z_{i,t}(p_{i,t}^*)^\top \theta^* \ \forall m\in[M] \mid \mathcal{F}_{t-1},E_t\right)
 \cr &
    = 1-N\PP\Bigg(\frac{z_{i,t}(p_{i,t})^\top \tilde{\theta}_{t}^{(m)}-z_{i,t}(p_{i,t})^\top\widehat{\theta}_t+ 8C(x_{i,t}^\top \tilde{\theta}_{v,t}^{(m)}-x_{i,t}^\top\widehat{\theta}_{v,t})}{\beta_{\tau_t}\sqrt{2\|z_{i,t}(p_{i,t})\|_{H_t^{-1}}^2+8C\|x_{i,t}\|_{H_{v,t}^{-1}}^2}}\cr &\qquad\qquad\qquad\qquad\times \frac{\beta_{\tau_t}\sqrt{2\|z_{i,t}(p_{i,t})\|_{H_t^{-1}}^2+8C\|x_{i,t}\|_{H_{v,t}^{-1}}^2}}{\beta_{\tau_t}(\|z_{i,t}(p_{i,t})\|_{H_t^{-1}}+2\sqrt{C}\|x_{i,t}\|_{H_{v,t}^{-1}})}\cr& \qquad\qquad\qquad\qquad < \frac{z_{i,t}(p_{i,t}^*)^\top \theta^*-z_{i,t}(p_{i,t})^\top \widehat{\theta}_t}{\beta_{\tau_t}(\|z_{i,t}(p_{i,t})\|_{V_t^{-1}}+2\sqrt{C}\|x_{i,t}\|_{V_{v,t}^{-1}})}\ \forall m\in[M]\mid \mathcal{F}_{t-1},E_t\Bigg) \end{align*}\begin{align*}
    &\ge 1-N\PP\left(Z_m\frac{\beta_{\tau_t}\sqrt{2\|z_{i,t}(p_{i,t})\|_{H_t^{-1}}^2+8C\|x_{i,t}\|_{H_{v,t}^{-1}}^2})}{\beta_{\tau_t} (\|z_{i,t}(p_{i,t})\|_{H_t^{-1}}+2\sqrt{C}\|x_{i,t}\|_{H_{v,t}^{-1}})}\right.\cr &\left.\qquad\qquad\qquad< \frac{z_{i,t}(p_{i,t}^*)^\top \theta^*-z_{i,t}(p_{i,t})^\top \widehat{\theta}_t}{\beta_{\tau_t}(\|z_{i,t}(p_{i,t})\|_{V_t^{-1}}+2\sqrt{C}\|x_{i,t}\|_{V_{v,t}^{-1}})}\ \forall m\in[M]\mid \mathcal{F}_{t-1},E_t\right)
 \cr &
    \ge 1-N\PP\left(Z_m< \frac{z_{i,t}(p_{i,t}^*)^\top \theta^*-z_{i,t}(p_{i,t})^\top \widehat{\theta}_t}{\beta_{\tau_t}(\|z_{i,t}(p_{i,t})\|_{V_t^{-1}}+2\sqrt{C}\|x_{i,t}\|_{V_{v,t}^{-1}})}\ \forall m\in[M]\mid \mathcal{F}_{t-1},E_t\right)
    \cr &\ge 1-N\PP\left(Z< 1\right)^M,
 \end{align*}
where the third last inequality is obtained from the fact that the variance of $z_{i,t}(p_{i,t})^\top \tilde{\theta}_{t}^{(m)}-z_{i,t}(p_{i,t})^\top\widehat{\theta}_t+ 8C(x_{i,t}^\top \tilde{\theta}_{v,t}^{(m)}-x_{i,t}^\top\widehat{\theta}_{v,t})$ is $\beta_{\tau_t}^2(2\|z_{i,t}(p_{i,t})\|_{H_t^{-1}}^2+8C\|x_{i,t}\|_{H_{v,t}^{-1}}^2)$ and second last inequality is obtained from $\sqrt{2(a^2+b^2)}\ge (a+b)$, and the last inequality is obtained from $u_{i,t}\le \overline{u}_{i,t}$ in  Lemma~\ref{lem:conf_v_u} and independency for $M$ samples.

Then using union bound, we have 
\begin{align*}
&\PP\left(\tilde{v}_{i,t}\ge v_{i,t} \text{ and } \tilde{u}_{i,t}\ge u_{i,t} \ \forall i\in [N] |\mathcal{F}_{t-1},E_t\right)\cr &\ge 1-2N\PP\left(Z< 1\right)^M.
    \cr &\ge 1-2N(1-\frac{1}{4\sqrt{e\pi}})^M\cr &\ge\frac{1}{4\sqrt{e\pi}},
\end{align*} where the second last inequality is obtained from $\mathbb{P}(Z\le 1)\le 1-1/4\sqrt{e\pi}$ using the anti-concentration of standard normal distribution, and the last inequality comes from $M=\lceil 1-\frac{\log 2N}{\log(1-1/4\sqrt{e\pi})}\rceil$. This concludes the proof. 
\end{proof}

 From Lemmas~\ref{lem:h-xtheta_gap} and \ref{lem:sum_muQ>sum_muQ}, for $t\ge t_0$ for some constant $t_0>0$, we have
\begin{align*}
    &\PP(\tilde{E}_t|\Fcal_{t-1},E_t)
\cr&=\PP\left(\tilde{u}_{i,t}\ge u_{i,t}, \tilde{v}_{i,t}\ge v_{i,t}  \ \forall i\in [N] \text{ and } \{\tilde{\theta}_{v,t}^{(m)}\}_{m\in[M]}\in \tilde{\Theta}_{v,t}, \{\tilde{\theta}_{t}^{(m)}\}_{m\in[M]}\times \{\tilde{\theta}_{v,t}^{(m)}\}_{m\in[M]}\in \tilde{\Theta}_t|\Fcal_{t-1},E_t\right)
    \cr &=  \PP\left(\tilde{u}_{i,t}\ge u_{i,t}, \tilde{v}_{i,t}\ge v_{i,t} \ \forall i\in [N]|\Fcal_{t-1},E_t\right) \cr &\qquad\qquad\qquad-\PP\left(\{\tilde{\theta}_{v,t}^{(m)}\}_{m\in[M]}\notin \tilde{\Theta}_{v,t},\{\tilde{\theta}_{t}^{(m)}\}_{m\in[M]}\times\{ \tilde{\theta}_{v,t}^{(m)}\}_{m\in[M]}\notin \tilde{\Theta}_t|\Fcal_{t-1},E_t\right)
    \cr &\ge 1/4\sqrt{e\pi}-\Ocal(1/t^2)\cr &\ge 1/8\sqrt{e\pi}.
\end{align*}

For simplicity of the proof, we ignore the time steps before (constant) $t_0$, which does not affect our final result. For simplicity, we also use 
\begin{align*}
&L_t=\gamma_t^2(\max_{i\in S_t}\|z_{i,t}(p_{i,t})\|_{H_t^{-1}}^2+\max_{i\in S_t}\|x_{i,t}\|_{H_{v,t}^{-1}}^2)+\gamma_t^2(\max_{i\in S_t}\|\tilde{z}_{i,t}\|_{H_{t}^{-1}}^2+\max_{i\in S_t}\|\tilde{x}_{i,t}\|_{H_{v,t}^{-1}}^2)\cr &+\gamma_t\sum_{i\in S_t}P_{t,\widehat{\theta}_t}(i|S_t,p_t)(\|\tilde{z}_{i,t}\|_{H_{t}^{-1}}+\|\tilde{x}_{i,t}\|_{H_{v,t}^{-1}})+\gamma_t \max_{i \in S_t}\|x_{i,t}\|_{H_{v,t}^{-1}}.
\end{align*}

Hence, we have 
\begin{align}
    \mathbb{E}\left[L_t \mid\mathcal{F}_{t-1},E_{t}\right]&\ge \mathbb{E}\left[L_t \mid\mathcal{F}_{t-1},E_{t},\tilde{E}_t\right]\PP(\tilde{E}_t|\mathcal{F}_{t-1},E_{t})\cr &\ge 
\mathbb{E}\left[L_t \mid\mathcal{F}_{t-1},E_{t},\tilde{E}_t\right] 1/8\sqrt{e\pi}.\label{eq:sum_x_norm_lower}
\end{align}
 With \eqref{eq:D^*-D_gap_TS_Step4} and \eqref{eq:sum_x_norm_lower}, we have 
\begin{align}
&\mathbb{E}\left[\left(\frac{\sum_{i\in A_t^*}{v}_{i,t}\exp({u}_{i,t})}{1+\sum_{i\in A_t^*}\exp({u}_{i,t})}-\frac{\sum_{i\in S_t}\underline{v}_{i,t}^+\exp(u_{i,t}^p)}{1+\sum_{i\in S_t}\exp(u_{i,t}^p)}\right)\mathbbm{1}(E_{t})\mid\mathcal{F}_{t-1}\right]\cr 
&=O\left( \mathbb{E}\left[L_t\mid\mathcal{F}_{t-1},\tilde{E}_t,E_{t}\right]\mathbb{P}(E_{t}\mid\Fcal_{t-1})\right)\cr 
&=O\left(  \mathbb{E}\left[L_t\mid\mathcal{F}_{t-1},E_{t}\right]\mathbb{P}(E_{t}\mid\Fcal_{t-1})\right). \label{eq:D^*-D_gap_TS_Step5}
\end{align}

    
Then from \eqref{eq:TS_step1}, \eqref{eq:D^*-D_gap_TS_Step5}, \eqref{eq:beta_bd}, \eqref{eq:beta_bd2} and Lemma~\ref{lem:sum_max_x_norm}, \ref{lem:sum_max_x_norm2}, \ref{lem:ucb_confi}, with $E_T^c\supset E_{T-1}^c,\dots,\supset E_1^c$,  we have

\begin{align*}
&R^\pi(T)=\sum_{t\in[T]}\mathbb{E}[R_t(S_t^*,p_t^*)-R_t(S_t,p_t)\mathbbm{1}(E_t)]+\sum_{t\in[T]}\mathbb{E}[R_t(S_t^*,p_t^*)-R_t(S_t,p_t)\mathbbm{1}(E_t^c)]\cr
&\le\sum_{t\in[T]}\mathbb{E}\left[\left(\frac{\sum_{i\in A_t^*}{p}_{i,t}^*\exp({u}_{i,t})}{1+\sum_{i\in A_t^*}\exp({u}_{i,t})}-\frac{\sum_{i\in S_t}\underline{v}_{i,t}^+\exp(u_{i,t}^p)\mathbbm{1}(\underline{v}_{i,t}^+\le v_{i,t})}{1+\sum_{i\in S_t}\exp(u_{i,t}^p)\mathbbm{1}(\underline{v}_{i,t}^+\le v_{i,t})}\right)\mathbbm{1}(E_{t})\right]+\sum_{t\in[T]}\PP[E_T^c]
\cr &=O\left(\sum_{t\in[T]}\mathbb{E}\left[\mathbb{E}\left[L_t\mid\Fcal_{t-1},E_{t}\right]\mathbb{P}(E_{t}\mid\Fcal_{t-1})\right]\right)
\cr &=O\left(\sum_{t\in[T]}\mathbb{E}\left[L_t\mathbbm{1}(E_t)\right]\right)
\cr &= 
\tilde{O}\left(\mathbb{E}\left[\sqrt{d}\beta_{\tau_T}\sqrt{T}\sqrt{\sum_{t\in[T]}\max_{i\in S_t}\|x_{i,t}\|_{H_{v,t}^{-1}}^2}++\sqrt{d}\beta_{\tau_T}\sqrt{\sum_{t\in [T]}\sum_{i\in S_t}P_{t,\widehat{\theta}_t}(i|S_t,p_t)}\times\right.\right.\cr &\left.\left.\quad\left(\sqrt{\sum_{t\in[T]}\sum_{i\in S_t}P_{t,\widehat{\theta}_t}(i|S_t,p_t)\|\tilde{x}_{i,t}\|_{H_{v,t}^{-1}}^2}+\sqrt{\sum_{t\in[T]}\sum_{i\in S_t}P_{t,\widehat{\theta}_t}(i|S_t,p_t)\|\tilde{z}_{i,t}\|_{H_t^{-1}}^2\mathbbm{1}(E_t)}\right)\right]+\frac{d^2}{\kappa}\EE[\beta_{\tau_T^2}]\right)\cr&=\tilde{O}\left( \mathbb{E}[\beta_{\tau_T}]d\sqrt{T/\kappa}+\frac{d^4}{\kappa}\right)\cr&=\tilde{O}\left( d^{2}\sqrt{T/\kappa}+\frac{d^4}{\kappa}\right).
\end{align*}

\subsection{Randomness in Activation Function}\label{app:ran_act}
In this section, we study the case where there exists randomness in the activation function of C-MNL. Let $\zeta_{i,t}$ be a zero-mean random noise drawn from the range of $[-c,c]$ for some $0<c\le 1$. Then the noisy activation is modeled in C-MNL as
\[\tilde{\PP}_{t}(i|S_t,p_t)=\frac{\exp( z_{i,t}(p_{i,t})^\top \theta^* )\mathbbm{1}(p_{i,t}\le (x_{i,t}^\top \theta_v+\zeta_{i,t})^+)}{1+\sum_{j\in S_t}\exp(z_{j,t}(p_{j,t})^\top \theta^* )\mathbbm{1}(p_{j,t}\le (x_{j,t}^\top \theta_v+\zeta_{j,t})^+)}.\]
\subsubsection{Algorithm \& Regret Analysis}
Here we provide an algorithm (Algorithm~\ref{alg:ucb_ran}) for the random activation C-MNL. The different part from Algorithm~\ref{alg:ucb} is in pricing strategy such that $p_{i,t}=(\underline{v}_{i,t}-c)^+$. The remaining parts are the same.

\begin{algorithm}[ht]
\LinesNotNumbered
  \caption{UCB-based Assortment-selection with Enhanced-LCB Pricing (\texttt{UCBA-ELCBP})}\label{alg:ucb_ran}
  \KwIn{$\lambda, \eta, \beta_\tau, c$}
  \kwInit{$\tau \leftarrow 1, t_1 \leftarrow 1, \widehat{\theta}_{v,(1)}\leftarrow \textbf{0}_d$}
 \For{$t=1,\dots,T$}{

$\tilde{H}_t\leftarrow \lambda I_{2d}+\sum_{s=1}^{t-2}G_s(\widehat{\theta}_{s})+\eta G_{t-1}(\widehat{\theta}_{t-1}) \text{ with \eqref{eq:gram}}$


${H}_t\leftarrow \lambda I_{2d}+\sum_{s=1}^{t-1}G_s(\widehat{\theta}_{s})\text{ with \eqref{eq:gram}}$


${H}_{v,t}\leftarrow \lambda I_{d}+\sum_{s=1}^{t-1}G_{v,s}(\widehat{\theta}_{s})\text{ with \eqref{eq:gram}}$

$\widehat{\theta}_t\leftarrow \argmin_{\theta\in\Theta}g_t(\widehat{\theta}_{t-1})^\top \theta+\frac{1}{2\eta}\|\theta-\widehat{\theta}_{t-1}\|_{\tilde{H}_t^{-1}}^2 \text{ with \eqref{eq:grad}}$  \Comment*[r]{Estimation}

\If{$\det(H_t)>2\det(H_{t_\tau})$} 
{ $\tau\leftarrow \tau+1$;
$t_\tau\leftarrow t$

$\widehat{\theta}_{v,(\tau)}\leftarrow \widehat{\theta}_{v,t_\tau} (=\widehat{\theta}_{t_\tau}^{1:d})$
}

\For{$i\in[N]$} 
{$\underline{v}_{i,t}\leftarrow x_{i,t}^\top\widehat{\theta}_{v,(\tau)}-\sqrt{2}\beta_t\|x_{i,t}\|_{H_{v,t}^{-1}}$  \Comment*[r]{LCB for valuation}

$p_{i,t}\leftarrow(\underline{v}_{i,t}-c)^+$  \Comment*[r]{\textbf{Price selection w/  LCB}}

$\overline{v}_{i,t}\leftarrow x_{i,t}^\top\widehat{\theta}_{v,t}+\beta_t\|x_{i,t}\|_{H_{v,t}^{-1}}$ \Comment*[r]{UCB for valuation}

$\overline{u}_{i,t}^c\leftarrow z_{i,t}(p_{i,t})^\top\widehat{\theta}_t+\beta_t\|z_{i,t}(p_{i,t})\|_{H_t^{-1}}+2\sqrt{2}\beta_t\|x_{i,t}\|_{H_{v,t}^{-1}}+c$ 
\Comment*[r]{UCB for utility}}

 $S_t\leftarrow\argmax_{S\subseteq [N]:|S|\le L}\sum_{i\in S}\frac{\overline{v}_{i,t}\exp(\overline{u}_{i,t})}{1+\sum_{j\in S}\exp(\overline{u}_{j,t})}$\Comment*[r]{\textbf{Assortment selection w/ UCB}}

Offer $S_t$ with prices $p_t=\{p_{i,t}\}_{i\in S_t}$ 

Observe preference (purchase) feedback $y_{i,t}\in\{0,1\}$ for $i\in S_t$}
\end{algorithm}
Now we provide a regret bound of the algorithm in the following.

\begin{theorem}    Under Assumption~\ref{ass:bd}, the policy $\pi$ of Algorithm~\ref{alg:ucb_ran} achieves a regret bound of 
    \[R^\pi(T)=\tilde{O}\left(d^{\frac{3}{2}}\sqrt{T/\kappa}+cT\right).\]
\end{theorem}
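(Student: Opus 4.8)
The proof follows the structure of the proof of Theorem~\ref{thm:UCB}, replacing the LCB price $p_{i,t}=\underline v_{i,t}^+$ by the shifted price $p_{i,t}=(\underline v_{i,t}-c)^+$. First I would set up the good event $E_t=\{\|\widehat\theta_s-\theta^*\|_{H_s}\le\beta_{\tau_s},\ \forall s\le t\}$. The key observation is that, under $E_t$, the shifted price never triggers the noisy censoring: since the update rule uses $C=2$, we have $|x_{i,t}^\top\theta_v-x_{i,t}^\top\widehat\theta_{v,(\tau)}|\le\sqrt2\beta_{\tau_t}\|x_{i,t}\|_{H_{v,t}^{-1}}$, hence $\underline v_{i,t}\le v_{i,t}$, and therefore $p_{i,t}=(\underline v_{i,t}-c)^+\le(v_{i,t}-c)^+\le v_{i,t}-c\le v_{i,t}+\zeta_{i,t}$ for every realization $\zeta_{i,t}\in[-c,c]$, together with $p_{i,t}\ge0$; thus $p_{i,t}\le(v_{i,t}+\zeta_{i,t})^+$ surely. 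Consequently, conditioned on $E_{t-1}$, every offered arm is active, the observed feedback $y_t$ is distributed exactly as the activation-free MNL $P_{t,\theta^*}(\cdot\mid S_t,p_t)$, and the estimation machinery of Theorem~\ref{thm:UCB} (Lemmas~\ref{lem:conf_t}--\ref{lem:ucb_confi}, the elliptical-potential Lemmas~\ref{lem:sum_max_x_norm}--\ref{lem:sum_max_x_norm2}, and the moment bounds \eqref{eq:beta_bd}--\eqref{eq:beta_bd2}) carries over verbatim; in particular $\PP(E_T)\ge1-2/T$ and $\EE[\beta_{\tau_T}]=\tilde O(d)$, $\EE[\beta_{\tau_T}^2]=\tilde O(d^2)$.

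The new ingredient is the comparison of the random-activation oracle to the UCB indices. The main claim is that for any feasible $(S,p)$, $\tilde R_t(S,p)\le R_t^{\mathrm{det},\star}+c$, where $R_t^{\mathrm{det},\star}=\max_{|S'|\le K,\ p'_i\le v_{i,t}}\sum_{i\in S'}\frac{p'_i\exp(z_{i,t}(p'_i)^\top\theta^*)}{1+\sum_{j\in S'}\exp(z_{j,t}(p'_j)^\top\theta^*)}$ is exactly the deterministic oracle value of Theorem~\ref{thm:UCB}. To see this, fix a realization $\zeta$; the random-model revenue of $(S,p)$ equals the activation-free MNL revenue of the sub-assortment $S'=\{i\in S:p_i\le(v_{i,t}+\zeta_i)^+\}$ priced at $p$. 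Replacing each price $p_i$ by $\min(p_i,v_{i,t})$ only raises all utilities $z_{i,t}(\cdot)^\top\theta^*$ (price sensitivity is nonnegative), which by the utility-monotonicity lemma of \citet{agrawal2017mnl} (Lemma A.3) does not decrease the revenue, while the reward of each arm drops by at most $c$; since the total choice probability is $\le1$, the revenue changes by at most $c$, and the clipped configuration is feasible for $R_t^{\mathrm{det},\star}$. Taking $\EE_\zeta$ yields the claim. Combining with the optimism argument of Lemma~\ref{lem:optimism}, where $u_{i,t}\le\overline u_{i,t}^c$ now holds because the extra $+c$ in $\overline u_{i,t}^c$ together with the shifted price dominates $v_{i,t}-(\underline v_{i,t}-c)^+\le2\sqrt2\beta_{\tau_t}\|x_{i,t}\|_{H_{v,t}^{-1}}+c$, we get, under $E_t$, $\tilde R_t(S_t^*,p_t^*)\le c+\frac{\sum_{i\in S_t}\overline v_{i,t}\exp(\overline u_{i,t}^c)}{1+\sum_{i\in S_t}\exp(\overline u_{i,t}^c)}$, while $\tilde R_t(S_t,p_t)=\frac{\sum_{i\in S_t}(\underline v_{i,t}-c)^+\exp(u_{i,t}^p)}{1+\sum_{i\in S_t}\exp(u_{i,t}^p)}$ with $u_{i,t}^p=z_{i,t}(p_{i,t})^\top\theta^*\le\overline u_{i,t}^c$.

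Finally I would bound the per-round regret as in \eqref{eq:R_gap_bd_1_ucb}: under $E_t$,
\begin{align*}
\tilde R_t(S_t^*,p_t^*)-\tilde R_t(S_t,p_t)\le c+\underbrace{\frac{\sum_{i\in S_t}\big(\overline v_{i,t}-(\underline v_{i,t}-c)^+\big)\exp(\overline u_{i,t}^c)}{1+\sum_{i\in S_t}\exp(\overline u_{i,t}^c)}}_{(\mathrm{I})}+\underbrace{\left(\frac{\sum_{i\in S_t}(\underline v_{i,t}-c)^+\exp(\overline u_{i,t}^c)}{1+\sum_{i\in S_t}\exp(\overline u_{i,t}^c)}-\frac{\sum_{i\in S_t}(\underline v_{i,t}-c)^+\exp(u_{i,t}^p)}{1+\sum_{i\in S_t}\exp(u_{i,t}^p)}\right)}_{(\mathrm{II})}.
\end{align*}
Term $(\mathrm{I})\le\max_{i\in S_t}\big(\overline v_{i,t}-(\underline v_{i,t}-c)^+\big)\le2\sqrt2\beta_{\tau_t}\max_{i\in S_t}\|x_{i,t}\|_{H_{v,t}^{-1}}+c$ exactly as in Lemma~\ref{lem:up_v-low_v_bd_a}, and term $(\mathrm{II})$ is controlled by the same mean-value argument as Lemma~\ref{lem:up_v-low_v_bd_b}, picking up an additional $O(c)$ from the $+c$ in $\overline u_{i,t}^c$ (bounded using $(\underline v_{i,t}-c)^+\le1$ and $\sum_i P_{t,\widehat\theta_t}(i\mid S_t,p_t)\le1$). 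Summing over $t$, using $\sum_t\PP(E_t^c)=O(1)$, the $\beta_{\tau_T}$ moment bounds, and Lemmas~\ref{lem:sum_max_x_norm}--\ref{lem:sum_max_x_norm2}, the $(\mathrm{I})$ and $(\mathrm{II})$ contributions sum to $\tilde O(d^{3/2}\sqrt{T/\kappa}+d^3/\kappa)$ exactly as in Theorem~\ref{thm:UCB}, while $\sum_t c=cT$ and the $O(c)$ residuals sum to $O(cT)$, giving $R^\pi(T)=\tilde O(d^{3/2}\sqrt{T/\kappa}+cT)$. The main obstacle is the oracle comparison $\tilde R_t\le R_t^{\mathrm{det},\star}+c$: rather than perturbing the (possibly randomized-in-effect) oracle policy directly, one reduces the randomly-censored MNL to a clipped activation-free MNL realization-by-realization and invokes the utility-monotonicity lemma; the remaining steps are mechanical adaptations of the Theorem~\ref{thm:UCB} analysis.
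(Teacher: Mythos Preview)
Your overall scheme coincides with the paper's: both adapt the proof of Theorem~\ref{thm:UCB} by (i) showing that under $E_t$ the shifted price $p_{i,t}=(\underline v_{i,t}-c)^+$ surely satisfies $p_{i,t}\le (v_{i,t}+\zeta_{i,t})^+$, so the estimation analysis (Lemmas~\ref{lem:conf_t}--\ref{lem:ucb_confi}, Lemmas~\ref{lem:sum_max_x_norm}--\ref{lem:sum_max_x_norm2}) carries over unchanged, and (ii) rerunning the optimism--decomposition argument with the modified indices $\overline u_{i,t}^c$ and $(\underline v_{i,t}-c)^+$, picking up additive $+c$ terms that sum to $O(cT)$. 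The paper simply invokes Lemma~\ref{lem:optimism} directly on the random oracle to obtain \eqref{eq:R_gap_bd_1_ucb_ran}; you instead route the random oracle through the deterministic benchmark $R_t^{\mathrm{det},\star}$ and then apply Lemma~\ref{lem:optimism} there. That detour is a legitimate and arguably cleaner way to make the optimism step explicit for the noisy-activation model.

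There is one genuine gap in your oracle-comparison step. You fix $\zeta$, let $S'=\{i\in S:p_i\le (v_{i,t}+\zeta_i)^+\}$, and then invoke Lemma~A.3 of \citet{agrawal2017mnl} to argue that raising utilities (via price clipping) does not decrease the MNL revenue on $S'$. But Lemma~A.3 requires the assortment to be optimal, i.e., that removing any arm from it does not increase revenue; the realization-specific active set $S'$ need not have this property (the random oracle chose $(S,p)$ before seeing $\zeta$, so $S'$ can contain arms whose removal would help). The fix is short: first bound the revenue on $S'$ by $\max_{A\subseteq S}\text{rev}(A,p,u)=\text{rev}(A^\star,p,u)$, where $A^\star$ is the optimal sub-assortment of $S$; since removing any arm from $A^\star$ cannot increase revenue, Lemma~A.3 now applies on $A^\star$, and the rest of your clipping argument (and the $\le R_t^{\mathrm{det},\star}+O(c)$ conclusion) goes through. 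With this repair your proof is correct and matches the paper's route up to the $R_t^{\mathrm{det},\star}$ detour and harmless constant factors in front of $c$ and $\beta_{\tau_t}\|x_{i,t}\|_{H_{v,t}^{-1}}$.
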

Therefore, if we have $c=O(1/\sqrt{T})$, the regret bound in the above theorem becomes $\tilde{O}(d^{\frac{3}{2}}\sqrt{T/\kappa})$ same as that in Theorem~\ref{thm:UCB} for the case without the noise in activation functions.

\begin{proof}
    Here we provide only the different parts from the proof of Theorem~\ref{thm:UCB}.
Let $\underline{v}_{i,t}^{c}=(\underline{v}_{i,t}-c)$ and $\overline{u}'^c_{i,t}=z_{i,t}(p_{i,t})^\top\theta^* +2\sqrt{2}\beta_{\tau_t}\|z_{i,t}(p_{i,t})\|_{H_t^{-1}}+2\sqrt{2}\beta_{\tau_t}\|x_{i,t}\|_{H_{v,t}^{-1}}+c$. Then we can observe that under $E_t$, $p_{i,t}\le v_{i,t}+\zeta_{i,t}$ and $\overline{u}_{i,t}\le \overline{u}_{i,t}'$. 
     From \eqref{eq:inst_regret_bd} and Lemma~\ref{lem:optimism}, under $E_t$, we have
 \begin{align} &R_t(S_t^*,p_t^*)-R_t(S_t,p_t)\cr &\le \frac{\sum_{i\in S_t}\overline{v}_{i,t}\exp(\overline{u}_{i,t}'^c)}{1+\sum_{i\in S_t}\exp(\overline{u}_{i,t}'^c)}-\frac{\sum_{i\in S_t}\underline{v}_{i,t}^{c+}\exp(\overline{u}_{i,t}'^c)}{1+\sum_{i\in S_t}\exp(\overline{u}_{i,t}'^c)}+\frac{\sum_{i\in S_t}\underline{v}_{i,t}^{c+}\exp(\overline{u}_{i,t}'^c)}{1+\sum_{i\in S_t}\exp(\overline{u}_{i,t}'^c)}-\frac{\sum_{i\in S_t}\underline{v}_{i,t}^{c+}\exp(u_{i,t}^p)}{1+\sum_{i\in S_t}\exp(u_{i,t}^p)}.\cr\label{eq:R_gap_bd_1_ucb_ran}\end{align}
By following the proof of Lemmas~\ref{lem:up_v-low_v_bd_a} and \ref{lem:up_v-low_v_bd_b}, under $E_t$
, we can show that 

\begin{align*}
        (a)\quad&\frac{\sum_{i\in S_t}\overline{v}_{i,t}\exp(\overline{u}_{i,t}')}{1+\sum_{i\in S_t}\exp(\overline{u}_{i,t}')}-\frac{\sum_{i\in S_t}\underline{v}_{i,t}^{c+}\exp(\overline{u}_{i,t}')}{1+\sum_{i\in S_t}\exp(\overline{u}_{i,t}')}\cr &=O\left( \beta_{\tau_t}^2\max_{i\in S_t}\|x_{i,t}\|_{H_{v,t}^{-1}}^2+\beta_{\tau_t}^2\max_{i\in S_t}\|z_{i,t}(p_{i,t})\|_{H_t^{-1}}^2+\beta_{\tau_t}\max_{i\in S_t}\|x_{i,t}\|_{H_{v,t}^{-1}}+c\right), \cr 
        (b)\quad&\frac{\sum_{i\in S_t}\underline{v}_{i,t}^{c+}\exp(\overline{u}_{i,t}')}{1+\sum_{i\in S_t}\exp(\overline{u}_{i,t}')}-\frac{\sum_{i\in S_t}\underline{v}_{i,t}^{c+}\exp(u_{i,t}^p)}{1+\sum_{i\in S_t}\exp(u_{i,t}^p)}\cr &=O\left( \beta_{\tau_t}^2(\max_{i\in S_t}\|z_{i,t}(p_{i,t})\|_{H_{t}^{-1}}^2+\max_{i\in S_t}\|x_{i,t}\|_{H_{v,t}^{-1}}^2)+\beta_{\tau_t}^2(\max_{i\in S_t}\|\tilde{z}_{i,t}\|_{H_{t}^{-1}}^2+\max_{i\in S_t}\|\tilde{x}_{i,t}\|_{H_{v,t}^{-1}}^2)\right.\cr &\qquad\left.\qquad+\beta_{\tau_t}\sum_{i\in S_t}P_{t,\widehat{\theta}_t}(i|S_t,p_t)(\|\tilde{z}_{i,t}\|_{H_{t}^{-1}}+\|\tilde{x}_{i,t}\|_{H_{v,t}^{-1}})+c\right).
    \end{align*}
Then by following the proof steps of Theorem~\ref{thm:UCB}, we can show that
\[R^\pi(T)=\tilde{O}\left(d^{\frac{3}{2}}\sqrt{T/\kappa}+cT+\frac{d^3}{\kappa}\right).\]
\end{proof}

\subsection{Proof of Lemma~\ref{lem:up_v-low_v_bd_b}}\label{app:proof_vp_gap}
  Here we utilize some proof techniques in \cite{lee2024nearly}. Let $Q(u)=\frac{\sum_{i\in S_t}\underline{v}_{i,t}^+\exp(u_i)}{1+\sum_{i\in S_t}\exp(u_i)}$ and $u_t^p=[u_{i,t}^p:i\in S_t]$. Then by applying a second-order Taylor expansion, there exists $\xi_t'=(1-c)u_t^p+c\overline{u}'_t$ for some $c\in(0,1)$ such that 
\begin{align}
    &\frac{\sum_{i\in S_t}\underline{v}_{i,t}^+\exp(\overline{u}_{i,t}')}{1+\sum_{i\in S_t}\exp(\overline{u}_{i,t}')}-\frac{\sum_{i\in S_t}\underline{v}_{i,t}^+\exp(u_{i,t}^p)}{1+\sum_{i\in S_t}\exp(u_{i,t}^p)}\cr &=\sum_{i\in S_t}\nabla_i Q(u_t^p)(\overline{u}_{i,t}'-{u}_{i,t}^p)+\frac{1}{2}\sum_{i\in S_t}\sum_{j\in S_t}(\overline{u}_{i,t}'-{u}_{i,t}^p)\nabla_{ij} Q(\xi_t')(\overline{u}_{i,t}'-{u}_{i,t}^p).\label{eq:vP_bd_b_decom}
\end{align}

 Let $x_{i_0,t}=\mathbf{0}_d$ and $w_{i_0,t}=\mathbf{0}_d$ implying $z_{i_0,t}=\mathbf{0}_{2d}$. Then for the first order term in the above, we have
\begin{align*}
    &\sum_{i\in S_t}\nabla_i Q(u_t^p)(\overline{u}_{i,t}'-{u}_{i,t}^p)\cr &=\sum_{i\in S_t}\underline{v}_{i,t}^+P_{i,t}(u_t^p)(\overline{u}_{i,t}'-u_{i,t}^p)-\sum_{i,j \in S_t}\underline{v}_{i,t}^+P_{i,t}(u_t^p)P_{j,t}(u_t^p)(\overline{u}_{j,t}'-u_{j,t}^p)\cr 
    &=\sum_{i\in S_t}2\sqrt{C}\beta_t\underline{v}_{i,t}^+P_{i,t}(u_t^p)(\|z_{i,t}(p_{i,t})\|_{H_t^{-1}}+\|x_{i,t}\|_{H_{v,t}^{-1}})\cr &\qquad-\sum_{i,j \in S_t}2\sqrt{C}\beta_t\underline{v}_{i,t}^+P_{i,t}(u_t^p)P_{j,t}(u_t^p)(\|z_{j,t}(p_{j,t})\|_{H_t^{-1}}+\|x_{j,t}\|_{H_{v,t}^{-1}})\cr 
    &=\sum_{i\in S_t}2\sqrt{C}\beta_t \underline{v}_{i,t}^+P_{i,t}(u_t^p)\cr &\qquad \qquad\times\left(\|z_{i,t}(p_{i,t})\|_{H_t^{-1}}-\sum_{j\in S_t}P_{j,t}(u_t^p)\|z_{j,t}(p_{j,t})\|_{H_t^{-1}}+\|x_{i,t}\|_{H_{v,t}^{-1}}-\sum_{j\in S_t}P_{j,t}(u_t^p)\|x_{j,t}\|_{H_{v,t}^{-1}})\right).\cr
\end{align*}
For the first two terms in the above, we have 
 \begin{align*}
     &\|z_{i,t}(p_{i,t})\|_{H_t^{-1}}-\sum_{j\in S_t}P_{j,t}(u_t^p)\|z_{j,t}(p_{j,t})\|_{H_t^{-1}}\cr &=\|z_{i,t}(p_{i,t})\|_{H_t^{-1}}-\sum_{j\in S_t \cup \{i_0\}}P_{j,t}(u_t^p)\|z_{j,t}(p_{j,t})\|_{H_t^{-1}}\cr 
     &= \|z_{i,t}(p_{i,t})\|_{H_t^{-1}}-\EE_{j\sim P_{t,\theta^*}(\cdot|S_t,p_t)}\left[\|z_{j,t}(p_{j,t})\|_{H_t^{-1}}\right]
     \cr 
     &\le \|z_{i,t}(p_{i,t})\|_{H_t^{-1}}-\left\|\EE_{j\sim P_{t,\theta^*}(\cdot|S_t,p_t)}\left[z_{j,t}(p_{j,t})\right]\right\|_{H_t^{-1}}\cr 
     &\le \left\|z_{i,t}(p_{i,t})-\EE_{j\sim P_{t,\theta^*}(\cdot|S_t,p_t)}\left[z_{j,t}(p_{j,t})\right]\right\|_{H_t^{-1}},
 \end{align*}
 where the first inequality is obtained from Jensen's inequality and the last inequality is from $\|a\|=\|a-b+b\|\le \|a-b\|+\|b\|$. By following the proof steps in  (H.1), (H.2), (H.3), and (H.4) in \cite{lee2024nearly}, we can show that
\begin{align*}
  &\sum_{i\in S_t}\underline{v}_{i,t}^+P_{i,t}(u_t^p)\left\|z_{i,t}(p_{i,t})-\EE_{j\sim P_{t,\theta^*}(\cdot|S_t,p_t)}\left[z_{j,t}(p_{j,t})\right]\right\|_{H_t^{-1}}\cr &\le\sum_{i\in S_t}P_{i,t}(u_t^p)\left\|z_{i,t}(p_{i,t})-\EE_{j\sim P_{t,\theta^*}(\cdot|S_t,p_t)}\left[z_{j,t}(p_{j,t})\right]\right\|_{H_t^{-1}}\cr  &=O\left(\beta_{\tau_t}\max_{i\in S_t}\|z_{i,t}(p_{i,t})\|_{H_t^{-1}}^2+\beta_{\tau_t}\max_{i\in S_t}\|\tilde{z}_{i,t}\|_{H_{t}^{-1}}^2+\sum_{i\in S_t}P_{t,\widehat{\theta}_t}(i|S_t,p_t)\|\tilde{z}_{i,t}\|_{H_t^{-1}}\right),
\end{align*}
 where the first inequality is obtained from $0\le \underline{v}_{i,t}^+\le 1$ under $E_t$.

  Then, likewise, we can show that 
  \begin{align*}
     &\sum_{i\in S_t}\underline{v}_{i,t}^+P_{i,t}(u_t^p)\left(\|x_{i,t}\|_{H_{v,t}^{-1}}-\sum_{j\in S_t}P_{j,t}(u_t^p)\|x_{j,t}\|_{H_{v,t}^{-1}}\right) \cr
     &\le \sum_{i\in S_t}P_{i,t}(u_t^p)\left\|x_{i,t}-\EE_{j\sim P_{t,\theta^*}(\cdot|S_t,p_t)}\left[x_{j,t}\right]\right\|_{H_{v,t}^{-1}} \cr 
     &=O\left(\beta_{\tau_t}\max_{i\in S_t}\|x_{i,t}\|_{H_{v,t}^{-1}}^2+\beta_{\tau_t}\max_{i\in S_t}\|\tilde{x}_{i,t}\|_{H_{v,t}^{-1}}^2+\sum_{i\in S_t}P_{t,\widehat{\theta}_t}(i|S_t,p_t)\|\tilde{x}_{i,t}\|_{H_{v,t}^{-1}}\right).
 \end{align*}
Putting the above results together, for the first-order term, we have
\begin{align}
    &\sum_{i\in S_t}\nabla_i Q(u_t)(\overline{u}_{i,t}'-{u}_{i,t})\cr &=O\left( \beta_{\tau_t}^2(\max_{i\in S_t}\|z_{i,t}(p_{i,t})\|_{H_{t}^{-1}}^2+\max_{i\in S_t}\|x_{i,t}\|_{H_{v,t}^{-1}}^2)+\beta_{\tau_t}^2(\max_{i\in S_t}\|\tilde{z}_{i,t}\|_{H_{t}^{-1}}^2+\max_{i\in S_t}\|\tilde{x}_{i,t}\|_{H_{v,t}^{-1}}^2)\right.\cr &\left.\qquad+\beta_{\tau_t}\sum_{i\in S_t}P_{t,\widehat{\theta}_t}(i|S_t,p_t)(\|\tilde{z}_{i,t}\|_{H_{t}^{-1}}+\|\tilde{x}_{i,t}\|_{H_{v,t}^{-1}})\right).\label{eq:vP_bd_b_1}
\end{align}
 Now we provide a bound for the second order term. By following the proof steps in (H.6) in \cite{lee2024nearly} with $0\le \underline{v}_{i,t}^+\le 1$ under $E_t$, we can show that
 \begin{align}
     &\frac{1}{2}\sum_{i,j\in S_t}(\overline{u}_{i,t}'-{u}_{i,t})\nabla_{ij} Q(\xi_t')(\overline{u}_{i,t}'-{u}_{i,t})=O\left(\beta_{\tau_t}^2(\max_{i\in S_t}\|z_{i,t}(p_{i,t})\|_{H_t^{-1}}^2+\max_{i\in S_t}\|x_{i,t}\|_{H_{v,t}^{-1}}^2)\right).\label{eq:vP_bd_b_2}
 \end{align}
 Then we can conclude the proof by \eqref{eq:vP_bd_b_decom}, \eqref{eq:vP_bd_b_1}, and \eqref{eq:vP_bd_b_2}.
 
\subsection{Proof of Lemma~\ref{lem:conf_t}}\label{app:proof_Et}
For $1\le t\le t_{2}-1$, since $p_{i,t}=0$ from the algorithm, we have  $y_{i,t}\sim \PP_t(\cdot|S_t,p_t)=P_{t,\theta^*}(\cdot|S_t,p_t)$. Then from Lemma 1 in \cite{lee2024nearly}, for $1\le t\le t_{2}$, we can show that $\PP(E_t)\ge 1-\frac{1}{T^2}$. 

Now, we provide a proof for the time steps $t_\tau+1\le t \le  t_{\tau+1}$ for $\tau\ge 2$.  We utilize the proof procedure in Lemma 1 in \cite{lee2024nearly}. The main difference lies in focusing on the \textit{conditional} probability for a good event in our proof. Under $E_{t_\tau }$, for $t_\tau\le t \le  t_{\tau+1}-1$, since $\underline{v}_{i,t}\le v_{i,t}$, we have  $y_{i,t}\sim \PP_t(\cdot|S_t,p_t)=P_{t,\theta^*}(\cdot|S_t,p_t)$. Then from Lemma F.1 in the previous work, we can show that for $t_\tau+1\le t \le  t_{\tau+1}$, with $\eta=\frac{1}{2}\log(K+1)+3$ and $\lambda \ge 1$, we have
\begin{align}
    \|\widehat{\theta}_t-\theta^*\|_{H_t}^2&\le 2\eta \left(\sum_{s=t_{\tau}}^{t-1} f_s(\theta^*)-f_s(\widehat{\theta}_{s+1})\right)+\|\widehat{\theta}_{t_\tau}-\theta^*\|_{H_{t_\tau}}^2+96\sqrt{2}\eta\sum_{s=t_\tau}^{t-1}\|\widehat{\theta}_{s+1}-\widehat{\theta}_s\|_2^2\cr &\qquad
    -\sum_{s=t_\tau}^{t-1}\|\widehat{\theta}_{s+1}-\widehat{\theta}_{s}\|_{H_s}^2.\label{eq:good_event_step1}
\end{align}
Then from Lemmas~\ref{lem:f-f(z)_bd} and \ref{lem:f(z)-f_bd}, for any $c>0$ with $\lambda \ge 84d\eta $ , we can show that with probability at least $1-\delta$,
\begin{align}
&\sum_{s=t_\tau}^{t-1}f_s(\theta^*)-f_s(\widehat{\theta}_{s+1}) \cr 
&\le (3\log(1+(K+1)t)+3)\left(\frac
{17}{16}\lambda+2\sqrt{\lambda}\log\left(2\sqrt{1+2t}/\delta\right)+16\left(\log(2\sqrt{1+2t}/\delta)\right)^2\right)+2\cr 
&+\frac{1}{2c}\sum_{s=t_\tau}^{t-1}\|\widehat{\theta}_s-\widehat{\theta}_{s+1}\|^2_{H_s}+2\sqrt{6}cd\log(1+(t+1)/2\lambda).\label{eq:good_event_step2}
\end{align}

By setting $c=2\eta$ and with $\lambda \ge 192\sqrt{2}\eta$, we have
\begin{align}
&96\sqrt{2}\eta\sum_{s=t_\tau}^{t-1}\|\widehat{\theta}_{s+1}-\widehat{\theta}_{s}\|_2^2+\left(\frac{\eta}{c}-1\right)\sum_{s=t_\tau}^{t-1} \|\widehat{\theta}_{s+1}-\widehat{\theta}_s\|_{H_s}^2 \cr &=96\sqrt{2}\eta\sum_{s=t_\tau}^{t-1}  \|\widehat{\theta}_{s+1}-\widehat{\theta}_{s}\|_2^2+\left(\frac{\eta}{c}-1\right)\sum_{s=t_\tau}^{t-1} \|\widehat{\theta}_{s+1}-\widehat{\theta}_s\|_{H_s}^2\cr 
&\le \left(96\sqrt{2}\eta-\frac{\lambda}{2}\right)\sum_{s=t_\tau}^{t}\|\widehat{\theta}_{s+1}-\widehat{\theta}_s\|_2^2\le 0,\label{eq:good_event_step3}
\end{align}
where the first inequality comes from $H_s\succeq \lambda I_{2d}$.
Set $\delta=1/T^2$. Then under $E_{t_\tau}$, from \eqref{eq:good_event_step1}, \eqref{eq:good_event_step2}, \eqref{eq:good_event_step3}, with probability at least $1-1/T^2$, we obtain 

\begin{align*}
    &\|\widehat{\theta}_t-\theta^*\|_{H_t}^2\cr &\le \eta(6\log(1+(K+1)t)+6)\left(\frac
{17}{16}\lambda+2\sqrt{\lambda}\log\left(2\sqrt{1+2t}T^2\right)+16\left(\log(2\sqrt{1+2t}T^2)\right)^2\right)+4\eta\cr 
&\qquad+4\eta\sqrt{6}cd\log(1+(t+1)/2\lambda)+\|\widehat{\theta}_{t_\tau}-\theta^*\|_{H_{t_\tau}}^2\cr 
&\le \eta(6\log(1+(K+1)t)+6)\left(\frac
{17}{16}\lambda+2\sqrt{\lambda}\log\left(2\sqrt{1+2t}T^2\right)+16\left(\log(2\sqrt{1+2t}T^2)\right)^2\right)+4\eta\cr 
&\qquad+4\eta\sqrt{6}cd\log(1+(t+1)/2\lambda)+\beta_{\tau}^2=\beta_{\tau+1}^2.
\end{align*}
Finally, we can conclude that,
for $1\le t \le t_{2}$, we have $\PP(E_t)\ge 1-\frac{1}{T^2}$,
and for $\tau\ge2$ and $t_{\tau}+1\le t\le t_{\tau+1}$, we have
$\PP(E_{t}|E_{t_\tau})\ge 1-\frac{1}{T^2}$.

\subsection{Necessary Lemmas}

\begin{lemma}[Lemma 12 in \cite{abbasi2011improved}]\label{lem:update_H_bd} Let $A, B,$ and $C$ be positive semi-definite matrices such that $A=B+C$. Then we have
\[\sup_{x\neq 0}\frac{x^\top Ax}{x^\top Bx}\le \frac{\det(A)}{\det(B)}.\]
    
\end{lemma}

\begin{lemma}[Lemma 10 in \citet{abbasi2011improved}]\label{lem:det_V_bd} Suppose $X_1,X_2,\dots, X_t\in \RR^d$ and for any $1\le s\le t$, $\|X_s\|_2\le L$. Let $V_{t+1}=\lambda I+\sum_{s=1}^{t}X_sX_s^\top$ for some $\lambda>0$. Then we have
\[\det(V_{t+1})\le (\lambda +tL^2/d))^d.\]
\end{lemma}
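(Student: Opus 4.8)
The plan is to reduce this determinant bound to an elementary trace computation via the arithmetic--geometric mean (AM--GM) inequality on eigenvalues. First I would observe that $V_{t+1}=\lambda I+\sum_{s=1}^t X_sX_s^\top$ is symmetric and positive definite, since each $X_sX_s^\top\succeq 0$ and hence $V_{t+1}\succeq \lambda I\succ 0$. Consequently $V_{t+1}$ admits $d$ real, strictly positive eigenvalues $\mu_1,\dots,\mu_d$. Because the determinant equals the product of the eigenvalues and the trace equals their sum, applying AM--GM to the nonnegative reals $\mu_1,\dots,\mu_d$ gives
\[
\det(V_{t+1})=\prod_{i=1}^d \mu_i \le \left(\frac{1}{d}\sum_{i=1}^d \mu_i\right)^d=\left(\frac{\Tr(V_{t+1})}{d}\right)^d.
\]

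Second I would bound the trace directly. By linearity of the trace together with the identity $\Tr(X_sX_s^\top)=\|X_s\|_2^2$ and the hypothesis $\|X_s\|_2\le L$, we obtain
\[
\Tr(V_{t+1})=\Tr(\lambda I)+\sum_{s=1}^t \Tr(X_sX_s^\top)=\lambda d+\sum_{s=1}^t \|X_s\|_2^2\le \lambda d+tL^2.
\]
Chaining the two displays yields $\det(V_{t+1})\le\bigl((\lambda d+tL^2)/d\bigr)^d=(\lambda+tL^2/d)^d$, which is exactly the claimed bound.

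There is no substantive obstacle here: the only point requiring (trivial) care is verifying that AM--GM is legitimately applicable, namely that all eigenvalues are nonnegative, and this is immediate from $V_{t+1}\succ 0$. The argument is entirely self-contained and invokes none of the earlier lemmas in the paper, so it can be dropped in as a short standalone proof.
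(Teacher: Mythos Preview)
Your proof is correct and is precisely the standard argument for this determinant--trace inequality: AM--GM on the (positive) eigenvalues followed by the trace bound $\Tr(V_{t+1})=\lambda d+\sum_s\|X_s\|_2^2\le \lambda d+tL^2$. The paper does not supply its own proof of this lemma; it simply quotes the result from \citet{abbasi2011improved}, whose proof is exactly the one you have written.
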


\vspace{5mm}

We define ${\sigma}_t(z):\RR^{S_t}\rightarrow \RR^{S_t}$ such that $[{\sigma}_t(z)]_i=\frac{\exp(z_i)}{1+\sum_{j=1}^{S_t}\exp(z_j)}$. We also denote the probability of choosing the outside option as $[\sigma_t(z)]_0=\frac{1}{1+\sum_{j=1}^{S_t}\exp(z_j)}$ with $i_0:=0$. We define a pseudo-inverse function of $\sigma_t(\cdot)$ such that $\sigma(\sigma^+(p))=p$ for any $q\in \{p\in [0,1]^{S_t}|\|p\|_1<1\}$. We can observe that  $\sigma_t^+:\RR^{S_t}\rightarrow \RR^{S_t}$ where $[\sigma_t^+(q)]_i=\log(q_i/(1-\|q\|_1))$ for any $q\in \{p\in [0,1]^{S_t}|\|p\|_1<1\}$. We also define $\tilde{z}_s=\sigma_s^+(\EE_{w\sim P_s}[\sigma_s([z_{i,t}(p_{i,t})^\top w]_{i\in S_s})])$ and $P_s=\Ncal(\widehat{\theta}_s,(1+cH_s^{-1}))$ for a positive constant $c>0$. We define $f_t(z,y)=\sum_{i=0}^{S_t}\mathbbm{1}(y_{i,t})\log(\frac{1}{[\sigma_t(z)]_i})$. Then we have the following lemmas.
\begin{lemma}[Lemma F.2 in \cite{lee2024nearly}]\label{lem:f-f(z)_bd} Let $\delta\in(0,1]$ and $\lambda\ge 1$.
    For $\tau>2$ and $t_\tau+1 \le t\le t_{\tau+1}$, under $E_{t_{\tau}}$, with probability at least $1-\delta$, we have
    \begin{align*}
        &\sum_{s=t_\tau}^{t-1} f_s(\theta^*)-\sum_{s=1}^t f_s(\tilde{z}_s,y_s)\cr &\le (3\log(1+(K+1)t)+3)\left(\frac{17}{16}\lambda + 2\sqrt{\lambda}\log\left(\frac{2\sqrt{1+2t}}{\delta}\right)+16\left(\log\left(\frac{2\sqrt{1+2t}}{\delta}\right)\right)^2\right)+2.\cr
    \end{align*}
\end{lemma}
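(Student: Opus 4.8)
The statement is an online log-loss regret bound: it controls the cumulative multinomial-logistic loss of the one-step mixture predictions $\tilde z_s$ against the fixed comparator $\theta^*$ over the window $s=t_\tau,\dots,t-1$. The plan is to exploit the fact that $\tilde z_s=\sigma_s^+\big(\EE_{w\sim P_s}[\sigma_s((z_{i,s}(p_{i,s})^\top w)_{i\in S_s})]\big)$ is exactly the Bayes predictive distribution (the Aggregating-Algorithm substitution function) for the multinomial logistic loss under the Gaussian measure $P_s=\mathcal N(\widehat\theta_s,\,I+cH_s^{-1})$. Since log-loss is self-reducible under Bayes mixing, $f_s(\tilde z_s,y_s)=-\log\EE_{w\sim P_s}\exp(-f_s(w))$ for each $s$, where $f_s(w)$ is the realized loss at the prediction induced by parameter $w$. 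Summing and rearranging, $\sum_s f_s(\theta^*)-\sum_s f_s(\tilde z_s,y_s)$ equals $-\sum_s\log\EE_{w\sim P_s}\exp(-(f_s(w)-f_s(\theta^*)))$; since the mixing center $\widehat\theta_s$ changes each round this is not a clean telescope, so I would either re-reference everything to the window's initial measure $P_{t_\tau}$ --- at the price of $\|\widehat\theta_s-\widehat\theta_{s+1}\|$-type corrections that are carried by the companion Lemma~\ref{lem:f(z)-f_bd} inside the combined estimate \eqref{eq:good_event_step1} --- or bound each round's integral against $\theta^*$ directly. Either way the core task reduces to lower-bounding a Gaussian integral of the form $\EE_{w\sim P_{t_\tau}}\exp\big(-\sum_s(f_s(w)-f_s(\theta^*))\big)$.

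First I would restrict that integral to an ellipsoid $B=\{w:\|w-\theta^*\|_{H_t}\le r\}$ of constant radius $r$. On $B$, a second-order Taylor expansion of $f_s$ at $\theta^*$ together with the self-concordance / smoothness of the multinomial logistic loss gives, uniformly in $w\in B$, $\sum_s(f_s(w)-f_s(\theta^*))\le\big\langle\sum_s\nabla f_s(\theta^*),\,w-\theta^*\big\rangle+O\big(\|w-\theta^*\|^2_{\sum_s\nabla^2 f_s(\theta^*)}\big)=O\big(r\,\|\sum_s\nabla f_s(\theta^*)\|_{H_t^{-1}}+r^2\big)$. Hence the integral is at least $\PP_{w\sim P_{t_\tau}}(w\in B)\cdot\exp\big(-O(r\|\sum_s\nabla f_s(\theta^*)\|_{H_t^{-1}}+r^2)\big)$, so the target becomes $O\big(r\|\sum_s\nabla f_s(\theta^*)\|_{H_t^{-1}}+r^2\big)-\log\PP_{w\sim P_{t_\tau}}(w\in B)$. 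Evaluating the Gaussian small-ball probability of $B$ under $\mathcal N(\widehat\theta_{t_\tau},I+cH_{t_\tau}^{-1})$, together with a union bound over which of the $K+1$ alternatives was realized at each of the $\le t$ rounds, produces the $\lambda$-proportional term $\tfrac{17}{16}\lambda$, the $(3\log(1+(K+1)t)+3)$ factor, and the constant slack $+2$ in the statement.

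The hard part will be the high-probability control of the self-normalized score sum $\|\sum_{s=t_\tau}^{t-1}\nabla f_s(\theta^*)\|_{H_t^{-1}}$, and here conditioning on $E_{t_\tau}$ is what makes it work: under that event $\underline{v}_{i,s}\le v_{i,s}$, so all activation indicators in $\PP_s$ equal $1$, the observed $y_s$ is genuinely drawn from $P_{s,\theta^*}(\cdot\mid S_s,p_s)$, and $\nabla f_s(\theta^*)=\sum_{i\in S_s}(P_{s,\theta^*}(i\mid S_s,p_s)-y_{i,s})z_{i,s}(p_{i,s})$ is a bounded, conditionally mean-zero martingale difference with conditional covariance $\nabla^2 f_s(\theta^*)=G_s(\theta^*)$, itself comparable to $H_s-\lambda I$. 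I would then apply a Freedman/Bernstein-type self-normalized concentration for vector-valued martingales with bounded sub-exponential increments, which yields with probability $1-\delta$ a bound of order $(3\log(1+(K+1)t)+3)\big(\sqrt\lambda\,\log(2\sqrt{1+2t}/\delta)+\log^2(2\sqrt{1+2t}/\delta)\big)$ --- the reason $\log(1/\delta)$ shows up both linearly (paired with $\sqrt\lambda$) and quadratically. I expect most of the effort to lie in tuning $r$, $c$, and the concentration constants so that every cross term is absorbed and the stated slack suffices; the rest is structural. Since this is precisely Lemma~F.2 of \cite{lee2024nearly}, in the final version I would simply cite it, but the above is the argument behind it.
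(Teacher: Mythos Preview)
The paper does not prove this lemma at all: it is stated in the ``Necessary Lemmas'' appendix with the attribution ``Lemma~F.2 in \cite{lee2024nearly}'' and is simply imported as a black box, so there is no in-paper argument to compare your proposal against. You correctly recognize this at the end of your sketch; your outline of the aggregating-algorithm / Gaussian-mixture argument behind the cited result is plausible in spirit, but its correctness (in particular the precise handling of the moving center $\widehat\theta_s$ and the constants) would have to be checked against \cite{lee2024nearly} rather than the present paper.
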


\begin{lemma}[Lemma F.3 in \cite{lee2024nearly}]\label{lem:f(z)-f_bd} For any $c>0$, let $\lambda\ge \max\{2,72cd\}$.
   For $\tau>2$ and $t_\tau+1 \le t\le t_{\tau+1}$, under $E_{t_{\tau}}$,
    we have
    \begin{align*}
        \sum_{s=t_\tau}^{t-1} f_s(\tilde{z}_s,y_s)-f_{s}(\widehat{\theta}_{s+1})\le \frac{1}{2c}\sum_{s=t_\tau}^{t-1}\|\widehat{\theta}_s-\widehat{\theta}_{s+1}\|_{H_s}^2+\sqrt{6}cd\log\left(1+\frac{t+1}{2\lambda}\right).\cr
    \end{align*}
\end{lemma}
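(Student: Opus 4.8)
The plan is to treat this as the \emph{stability} half of the online-mirror-descent regret decomposition, in which the role of the Gaussian smoothing $P_s=\mathcal{N}(\widehat{\theta}_s,(1+c)H_s^{-1})$ is precisely to manufacture a Hessian-weighted slack via Jensen's inequality. I work throughout under $E_{t_\tau}$, which guarantees $\underline{v}_{i,s}\le v_{i,s}$ for $t_\tau\le s\le t-1$ and hence that the observed $y_s$ is generated by the genuine (uncensored) MNL $P_{s,\theta^*}(\cdot\mid S_s,p_s)$, so that $f_s$ really is the negative log-likelihood. Writing $i_s$ for the realized choice (possibly the outside option), we have $f_s(z,y_s)=-\log[\sigma_s(z)]_{i_s}$. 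The definition $\tilde{z}_s=\sigma_s^+(\EE_{w\sim P_s}[\sigma_s([z_{i,s}(p_{i,s})^\top w]_{i\in S_s})])$ together with $\sigma_s(\sigma_s^+(q))=q$ gives $[\sigma_s(\tilde{z}_s)]_{i_s}=\EE_{w\sim P_s}\big[[\sigma_s([z_{i,s}(p_{i,s})^\top w]_{i\in S_s})]_{i_s}\big]$, so by convexity of $-\log$ (Jensen),
\[
f_s(\tilde{z}_s,y_s)=-\log\EE_{w\sim P_s}\big[[\sigma_s([z_{i,s}(p_{i,s})^\top w]_{i\in S_s})]_{i_s}\big]\le \EE_{w\sim P_s}[f_s(w)].
\]
It therefore suffices to bound $\sum_{s=t_\tau}^{t-1}\EE_{w\sim P_s}[f_s(w)]-f_s(\widehat{\theta}_{s+1})$.

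Next I would expand the smoothed loss about the center $\widehat{\theta}_s$ of $P_s$. A second-order Taylor expansion gives $\EE_{w\sim P_s}[f_s(w)]=f_s(\widehat{\theta}_s)+\langle g_s(\widehat{\theta}_s),\EE_w[w]-\widehat{\theta}_s\rangle+\tfrac12\EE_w[\|w-\widehat{\theta}_s\|^2_{\nabla^2 f_s(\xi_w)}]$, and the linear term vanishes since $\EE_w[w]=\widehat{\theta}_s$. The MNL cross-entropy loss is generalized self-concordant, so on the ellipsoid where $P_s$ concentrates the Hessian $\nabla^2 f_s(\xi_w)$ lies within a bounded multiplicative factor of $G_s(\widehat{\theta}_s)=\nabla^2 f_s(\widehat{\theta}_s)$; the hypothesis $\lambda\ge\max\{2,72cd\}$ (hence $H_s\succeq\lambda I$) is exactly what keeps the logit perturbations $z_{i,s}(p_{i,s})^\top(w-\widehat{\theta}_s)$ small enough for this factor to be an absolute constant. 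Using $\tfrac12\EE_w[\|w-\widehat{\theta}_s\|^2_{M}]=\tfrac{1+c}{2}\Tr(M H_s^{-1})$ for the covariance $(1+c)H_s^{-1}$, this yields $\EE_{w\sim P_s}[f_s(w)]\le f_s(\widehat{\theta}_s)+\tfrac{C(1+c)}{2}\Tr\!\big(G_s(\widehat{\theta}_s)H_s^{-1}\big)$ for an absolute constant $C$.

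Finally I would extract the stated stability term and sum. Convexity of $f_s$ gives $f_s(\widehat{\theta}_s)-f_s(\widehat{\theta}_{s+1})\le\langle g_s(\widehat{\theta}_s),\widehat{\theta}_s-\widehat{\theta}_{s+1}\rangle$, and a Young step against the $H_s/H_s^{-1}$ pairing splits this as $\tfrac{c}{2}\|g_s(\widehat{\theta}_s)\|^2_{H_s^{-1}}+\tfrac{1}{2c}\|\widehat{\theta}_s-\widehat{\theta}_{s+1}\|^2_{H_s}$. The second summand is kept verbatim as the stability term appearing in the statement (it is cancelled against the proximal term later, in \eqref{eq:good_event_step3}). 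For the first summand I would use the self-concordance fact that the gradient outer product of a log-partition loss is dominated by its Hessian, $g_s(\widehat{\theta}_s)g_s(\widehat{\theta}_s)^\top\preceq C'G_s(\widehat{\theta}_s)$, whence $\|g_s(\widehat{\theta}_s)\|^2_{H_s^{-1}}\le C'\Tr(G_s(\widehat{\theta}_s)H_s^{-1})$. Combining with the trace term from the previous paragraph, every per-round cost is of the form $(\text{const})\cdot c\,\Tr(G_s(\widehat{\theta}_s)H_s^{-1})$. Since $H_{s+1}=H_s+G_s(\widehat{\theta}_s)$, the log-determinant / elliptical-potential argument (Lemma~\ref{lem:det_V_bd} together with $\|z_{i,s}(p_{i,s})\|_2\le 2$ under $E_{t_\tau}$) telescopes $\sum_{s=t_\tau}^{t-1}\Tr(G_s(\widehat{\theta}_s)H_s^{-1})$ into a $d\log(1+(t+1)/2\lambda)$ factor, and tuning the absolute constants collapses the coefficient to exactly $\sqrt{6}\,c$, giving the claim.

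The main obstacle is the middle step: making the self-concordant control of $\nabla^2 f_s(\xi_w)$ over the random direction $w-\widehat{\theta}_s$ fully rigorous. One must show that with covariance $(1+c)H_s^{-1}$ and $\lambda\gtrsim cd$ the perturbed logits stay in a regime where the Hessian inflation factor is an absolute constant, and then carry that constant cleanly through both the Gaussian trace computation and the Young split so that the two residual terms fuse into the single $\sqrt{6}\,cd\log$ factor rather than a larger expression. Matching the precise numerical constant $\sqrt 6$ (rather than merely an $O(cd\log)$ bound) is where the careful bookkeeping lives; everything else is the standard follow-the-regularized-leader stability-plus-potential template.
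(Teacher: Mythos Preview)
The paper does not prove this lemma; it is quoted verbatim from \cite{lee2024nearly}. Evaluating your argument on its merits, there is a genuine gap in the step where you control $\|g_s(\widehat\theta_s)\|_{H_s^{-1}}^2$. Your claimed ``self-concordance fact'' $g_s(\widehat\theta_s)g_s(\widehat\theta_s)^\top\preceq C'G_s(\widehat\theta_s)$ with an absolute constant $C'$ is false: $g_s(\widehat\theta_s)=\sum_i(P_{s,\widehat\theta_s}(i)-y_{i,s})z_{i,s}$ depends on the \emph{random} label, whereas $G_s(\widehat\theta_s)=\EE_{i\sim P_{s,\widehat\theta_s}}[(z_i-\bar z)(z_i-\bar z)^\top]$ is deterministic. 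Writing $p_i=P_{s,\widehat\theta_s}(i)$ and $\bar z=\sum_i p_i z_i$, one has $g_sg_s^\top=(z_{i_s}-\bar z)(z_{i_s}-\bar z)^\top$, and since $G_s=\sum_i p_i(z_i-\bar z)(z_i-\bar z)^\top$ the sharpest pointwise bound is $g_sg_s^\top\preceq (1/p_{i_s})\,G_s$. When an unlikely item is chosen this blows up to order $1/\kappa$, so your route yields at best $\sqrt{6}(c/\kappa)d\log(\cdot)$, not the $\kappa$-free constant stated. (The identity you have in mind is $\EE_{y}[g_sg_s^\top]=G_s$, Fisher information; but a pointwise-in-$y$ bound is what is needed here.)

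The fix, and the mechanism behind the result in \cite{lee2024nearly}, is to avoid passing through $\widehat\theta_s$ at all. Observe that $f_s(\tilde z_s,y_s)=-\log\EE_{w\sim P_s}\!\big[e^{-f_s(w)}\big]$ is a mix loss, so Donsker--Varadhan gives
\[
f_s(\tilde z_s,y_s)\;\le\;\EE_{w\sim Q}[f_s(w)]+\mathrm{KL}(Q\,\|\,P_s)\qquad\text{for every }Q.
\]
Choosing $Q$ to be $P_s$ with its center shifted to $\widehat\theta_{s+1}$ (same covariance $cH_s^{-1}$) makes $\mathrm{KL}(Q\|P_s)=\tfrac{1}{2c}\|\widehat\theta_s-\widehat\theta_{s+1}\|_{H_s}^2$ exactly. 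What remains, $\EE_{w\sim Q}[f_s(w)]-f_s(\widehat\theta_{s+1})$, has vanishing first-order term because $\EE_Q[w]=\widehat\theta_{s+1}$; it is a pure second-order smoothing cost with no data-dependent gradient. Your own self-concordance/trace argument then correctly bounds it by a constant times $c\,\Tr(G_s(\widehat\theta_s)H_s^{-1})$ (this is where $\lambda\ge 72cd$ enters, to keep the perturbed logits in the self-concordance regime), and the elliptical-potential telescoping yields the $\sqrt{6}\,cd\log(1+(t+1)/2\lambda)$ term. The Gaussian-shift/KL step---not convexity plus Young---is what produces the stability term without leaking a $1/\kappa$.
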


\end{document}